\newif\ifcolorfuldiscussion
\newif\ifstorytelling
\newtheorem{theorem}{Theorem}[section]
\newtheorem{claim}{Claim}[section]
\newtheorem{definition}{Definition}[section]
\newtheorem{lemma}[theorem]{Lemma}
\newtheorem{proposition}[theorem]{Proposition}
\newcommand{\rdt}{properly filled facets}
\newcommand{\R}{\mathbb{R}}
\begin{document} 

\title{Intersecting Faces: \\Non-negative Matrix Factorization With New Guarantees}

\author{Rong Ge\\Microsoft Research New England\\rongge@microsoft.com
 \and James Zou\\
Microsoft Research New England\\jazo@microsoft.com}
\date{}
\maketitle

\begin{abstract}
Non-negative matrix factorization (NMF) is a natural model of admixture and is widely used in science and engineering. A plethora of algorithms have been developed to tackle NMF, but due to the non-convex nature of the problem, there is little guarantee on how well these methods work. Recently a surge of research have focused on a very restricted class of NMFs, called separable NMF, where provably correct algorithms have been developed. In this paper, we propose the notion of subset-separable NMF, which substantially generalizes the property of separability. We show that subset-separability is a natural necessary condition for the factorization to be unique or to have minimum volume. We developed the Face-Intersect algorithm which provably and efficiently solves subset-separable NMF under natural conditions, and we prove that our algorithm is robust to small noise. We explored the performance of Face-Intersect on simulations and discuss settings where it empirically outperformed the state-of-art methods. Our work is a step towards finding provably correct algorithms that solve large classes of NMF problems. 
\end{abstract}

\section{Introduction}

In many settings in science and engineering the observed data are admixtures of multiple latent sources. We would typically want to infer the latent sources as well as the admixture distribution given the observations.  Non-negative matrix factorization (NMF) is a natural mathematical framework to model many admixture problems. 

In NMF we are given an observation matrix $M\in \R^{n\times m}$, where each row of $M$ corresponds to a data-point in $\R^m$. We assume that there are $r$ latent sources, modeled by the unobserved matrix $W \in \R^{r\times m}$, where each row of $M$ characterizes one source. Each observed data-point is a linear combination of the $r$ sources and the combination weights are encoded in a matrix $A \in \R^{n\times r}$.  Moreover, in many natural settings, the sources are non-negative and the combinations are additive. The computational problem is then is to factor a given matrix $M$ as $M = AW$, where all the entries of $M, A$ and $W$ are non-negative. We call $r$ the inner-dimension of the factorization, and the smallest possible $r$ is usually called the nonnegative rank of $M$. NMF was first purposed by \citep{LS99}, and has been widely applied in computer vision~\citep{LS00}, document clustering~\citep{XLG}, hyperspectral unmixing\citep{VCA,NFINDR}, computational biology \citep{NMF_bio}, etc. We give two concrete examples

\textbf{Example 1.} In topic modeling, $M$ is the $n$-by-$m$ word-by-document matrix, where $n$ is the vocabulary size and $m$ is the number of documents. Each column of $M$ corresponds to one document and the entry $M(i,j)$ is the frequency with which word $i$ appears in document $j$. The topics are the columns of $A$, and $A(i,k)$ is the probability that topic $k$ uses word $i$. $W$ is the topic-by-document matrix and captures how much each topic contributes to each document. Since all the entries of $M, A$ and $W$ are frequencies, they are all non-negative. Given $M$ from a corpus of documents, we would like to factor $M = AW$ and recover the relevant topics in these documents.  (Note that in this example $A$ is the matrix of ``sources'' and $W$ is the matrix of mixing weights, so it is the transpose of what we just introduced. We use this notation to be consistent with previous works \citep{AGKM}.)

\textbf{Example 2.} In many bio-medical applications, we collect samples and for each sample perform multiple measurements (e.g. expression of $10^4$ genes or DNA methylation at $10^6$ positions in the genome; all the values are non-negative). $M$ is the sample-by-measurement matrix, where $M(i,j)$ is the value of the $j$th measurement in sample $i$. Each sample, whether taken from humans or animals, is typically a composition of several cell-types that we do not directly observe. Each row of $W$ corresponds to one cell-type, and $W(k,j)$ is the value of cell-type $k$ in measurement $j$. The entry $A(i,k)$ is the fraction of sample $i$ that consists of cell-type $k$. Experiments give us the matrix $M$, and we would like to factor $M = AW$ to identify the relevant cell-types and their compositions in our samples.

Despite the simplicity of its formulation, NMF is a challenging problem. First, the NMF problem may not be identifiable, and hence we can not hope to recover the \textit{true} $A$ and $W$. Moreover, even ignoring the identifiability
 \citet{Vav} showed that finding any factorization $M = AW$ with inner-dimension $r$ is an $NP$-hard problem. \citet{AGKM} showed under reasonable assumptions we cannot hope to find a factorization in time $(mn)^{o(r)}$, and the best algorithm known is \citet{DBLP:conf/soda/Moitra13} that runs in time $O(2^r mn)^{O(r^2)}$.

Many heuristic algorithms have been developed for NMF but they do not have guarantees for when they would actually converge to the true factorization \citep{LS00, lin2007projected}. More recently, there has been a surge of interest in constructing practical NMF algorithms with strong theoretical guarantees. Most of this activity (e.g. \citet{AGKM,BRRT, Kumar12,G,GillisVavasis2}, see more in \citet{NMFsurvey}) are based on the notion of \textit{separability}\cite{DS} which is a very strict condition that requires that all the rows of $W$ appear as rows in $M$. While this might hold in some document corpus, it is unlikely to be true in other engineering and bio-medical applications.

\paragraph{Our Results} In this paper, we develop the notion of \textit{subset separability}, which is a significantly weaker and more general condition than separability. In topic models, for example, separability states that there is a word that is unique to each topic. Subset separability means that there is a combination of words that is unique to each topic. We show that subset separability arise naturally as a necessary condition when the NMF is identifiable or when we are seeking the minimal volume factorization. We characterize settings when subset-separable NMF can be solved in polynomial-time, and this include the separable setting as a special case. We construct the Face-Intersect algorithm which provably and robustly solves the NMF even in the presence of adversarial noise. We use simulations to explore conditions where our algorithm achieves more accurate inference than current state-of-art algorithms.

\paragraph{Organization}
We first describe the geometric interpretation of NMF (Sec. 2), which leads us to the notion of subset-separable NMF (Sec. 3). We then develop our Face-Intersect algorithm and analyze its robustness (Sec. 4). Our main result, Theorem \ref{thm:mainrobust}, states that for subset-separable NMF, if the facets are \textit{properly filled} in a way that depends on the magnitude of the adversarial noise, then Face-Intersect is guaranteed to find a factorization that is close to the true factorization in polynomial time. We discuss the algorithm in more detail in Sections 5 and 6, and analyze a generative model that give rise to properly filled facets in Section 7. Finally we present experiments to explore settings where Face-Intersect outperforms state-of-art NMF algorithms (Sec. 8). Due to space constraints, all the proofs are presented in the appendix. Throughout the paper, we give intuitions behind proofs of the main results.   

\section{Geometric intuition}

For a matrix $M \in \R^{n\times m}$, we use $M^i \in \R^m$ to denote the $i$-th row of $M$, but it is viewed as a column vector. Given a factorization $M = AW$, without loss of generality we can assume the rows of $M,A,W$ all sum up to 1 (this can always be done, see \cite{AGKM}). In this way we can view the rows of $W$ as vertices of an unknown simplex, and the rows of $M$ are all in the convex hull of these vertices. The NMF is then equivalent to the following geometric problem: 

\paragraph{NMF, Geometric Interpretation} There is an unknown $W$-simplex whose vertices are the rows of $W \in \mathbb{R}^{m}$, $W^1, ..., W^r$. We observe $n$ points $M^1,M^2,...,M^n \in \R^m$ (corresponding to rows of $M$) that lie in the $W$-simplex. The goal is to identify the vertices of the $W$-simplex.

When clear from context, we also call the $W$ matrix as the simplex, and the goal is to find the vertices of this simplex. There is one setting where it is easy to identify all the vertices.  

\begin{definition}[separability]
A NMF is separable if all the vertices $W^j$'s appear in the points $M^i$'s that we observe. 
\end{definition}

Separability was introduced in \citet{DS}. When the NMF is separable, the problem simplifies as we only need to identify which of the points $M^j$'s are vertices of the simplex. This can be done in time polynomial in $n, m$ and $r$ \citep{AGKM}. Separability is a highly restrictive condition and it takes advantage of only the $0$-dimensional structure (vertices) of the simplex. In this work, we use higher dimensional structures of the simplex to solve the NMF. We use the following standard definition of facets:

\begin{definition}[facet]
A facet $S \subset [r]$  of the $W$-simplex is the convex hull of vertices $\{W^j: j \in S\}$. We call $S$ a filled facet if there is at least one point $M^i$ in the interior of $S$ (or if $|S| = 1$ and there is one point $M^i$ that is equal to that vertex; such $M^i$ is called an anchor). 
\end{definition}

\paragraph{Conventions}  When it's clear from context, we interchangeably represent a facet $S$ both by the indices of its vertices and by the convex hull of these vertices. A facet also corresponds to a unique linear subspace $Q_S$ with dimension $|S|$ that is the span of $\{W^j: j\in S\}$. In the rest of the paper, it's convenient to use linear algebra to quantify various geometric ideas. We will represent a $d$-dimensional subspace of $\R^m$ using a matrix $U\in \R^{m\times d}$, the columns of matrix $U$ is an arbitrary orthonormal basis for the subspace (hence the representation is not unique). We use $P_{U} = UU^T$ to denote the projection matrix to subspace $U$, and $U^\perp\in \R^{m\times (m-d)}$ to denote an arbitrary representation of the orthogonal subspace. For two subspaces $U$ and $V$ of the same dimension, we define their distance to the the $\sin$ of the principle angle between the two subspaces (this is the largest angle between vectors $u,v$ for $u\in U$ and $v\in V$). This distance can be computed as the spectral norm $\|P_{U^\perp}V\|$ (and has many equivalent formulations). 

\section{Subset Separability}

NMF is not identifiable up to scalings and permutations of the rows of $W$. 
Ignoring such transformations, there can still be multiple non-negative factorizations of the same matrix $M$. This arise when there are different sets of $r$ vertices in the non-negative orthant that contain all the points $M^i$ in its convex hull. For example, suppose $M = AW$ and the $A$ matrix has all positive entries. All the points $M^i$ are in the interior of the $W$-simplex. Then it is possible to perturb the vertices of $W$ while still maintaining all of the $M^i$'s in its convex hull. This give rise to a different factorization $M = \hat{A}\hat{W}$. When the factorization is not unique, we may want find a solution where the $W$-simplex has minimal volume, in the sense that it is impossible to move a single vertex and shrink the volume while maintaining the validity of the solution.

It's clear that in order for $W$ to be the minimal volume solution to the NMF, there must be some points $M^i$ that lie on the boundary of the $W$-simplex. We show that a necessary condition for $W$ to be volume minimizing is for the filled facets (facets of $W$ with points in its interior) to be subset-separable. Intuitively, this means that each vertex of $W$ is the unique intersection point of a subset of filled facets. 
 
\begin{definition}[subset-separable]
A NMF $M = AW$ is subset-separable if there is a set of filled facets $S_1, ..., S_k \subset [r]$ such that $\forall j \in [r]$, there is a subset of $S_{j_1}, S_{j_2}, ..., S_{j_{k_j}}$ whose intersection is exactly $j$. 
\end{definition} 

\begin{proposition}
\label{prop:volume}
Suppose $W$ is a minimal volume rank $r$ solution of the NMF $M = AW$. Then $W$ is subset-separable. 
\end{proposition}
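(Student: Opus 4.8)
The plan is to use the collection of \emph{all} filled facets as the witness required by the definition of subset-separability, so it suffices to prove that for every vertex $j\in[r]$ the intersection of all filled facets containing $j$ is the singleton $\{j\}$. I would establish the slightly sharper statement: for any two distinct vertices $j,k\in[r]$ it is \emph{not} the case that every filled facet containing $j$ also contains $k$. This suffices --- applied with an arbitrary $k\neq j$ it shows there is at least one filled facet containing $j$, so the relevant family is non-empty, and applied for each $k\neq j$ in turn it shows this family intersects in a set omitting every $k\neq j$, hence equal to $\{j\}$.

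To prove the sharper statement, suppose for contradiction that $j\neq k$ and every filled facet containing $j$ also contains $k$; I will exhibit a valid factorization of strictly smaller volume, contradicting minimality. Recall the rows are normalized to sum to $1$, so the $W^\ell$ are the vertices of an $(r-1)$-simplex and the $M^i$ live in its convex hull. Replace the vertex $W^j$ by $\tilde W^j:=(1-\epsilon)W^j+\epsilon W^k$ for a small $\epsilon\in(0,1)$, keeping the other vertices fixed. Nonnegativity of $W$ is preserved because $\tilde W^j$ is a convex combination of $W^j$ and $W^k$, and a one-line multilinearity argument on the determinant defining the simplex volume shows the new simplex has volume exactly $(1-\epsilon)$ times the old one --- strictly smaller. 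So it only remains to pick $\epsilon$ small enough that every $M^i$ still lies in the new simplex $\Delta'$.

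Let $\mu_1(x),\dots,\mu_r(x)$ be the barycentric coordinates of $x$ with respect to the original simplex. Rewriting an arbitrary convex combination of the new vertices in terms of the old ones gives the clean description $\Delta'=\{\,x\in\Delta:(1-\epsilon)\mu_k(x)\ge\epsilon\mu_j(x)\,\}$. Thus a data point $M^i$ can fail to lie in $\Delta'$ for arbitrarily small $\epsilon$ only if $\mu_k(M^i)=0$ while $\mu_j(M^i)>0$; but in that case $M^i$ lies in the relative interior of the facet $S:=\{\ell:\mu_\ell(M^i)>0\}$, which contains $j$, omits $k$, and is filled by $M^i$ itself --- contradicting our assumption. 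Hence every $M^i$ satisfies $\mu_k(M^i)>0$ or $\mu_j(M^i)=0$, and since there are finitely many data points a single small $\epsilon>0$ makes all the constraints $(1-\epsilon)\mu_k(M^i)\ge\epsilon\mu_j(M^i)$ hold, completing the contradiction. I expect the crux to be exactly this final step: choosing the perturbation (sliding $W^j$ straight toward the ``offending'' vertex $W^k$) and deriving the barycentric description of $\Delta'$, which is what pinpoints the data points that could leave the simplex and lets us recognize each such point as witnessing a filled facet that separates $j$ from $k$. The volume and nonnegativity bookkeeping, and the reduction of subset-separability to the per-vertex intersection statement, should be routine.
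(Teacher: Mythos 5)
Your proof is correct and is essentially the paper's own argument in geometric language: the perturbation $\tilde W^j=(1-\epsilon)W^j+\epsilon W^k$ is exactly the paper's update $(W')^i=(1-\epsilon)W^i+\epsilon W^j$, your barycentric condition $(1-\epsilon)\mu_k\ge\epsilon\mu_j$ is precisely the nonnegativity of the paper's modified column $A'_j=A_j-\tfrac{\epsilon}{1-\epsilon}A_i$, and both conclude via the $(1-\epsilon)$ volume shrinkage. The only cosmetic difference is that the paper phrases the contradiction hypothesis as a support-containment between columns of $A$ rather than via filled facets.
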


It is easy to see that the factorization $M = AW$ is subset-separable is equivalent to the property that for every $j_1 \neq j_2 \in [r]$, there is a row $i$ of $A$ such that $A_{i, j_1} = 0$ and $A_{i, j_2} \neq 0$. The previously proposed separability condition corresponds to the special case where the filled facets $S_1, ..., S_k$ correspond to the singleton sets $\{W^1\}, ..., \{W^r\}$. 

\begin{figure}
\centering
\includegraphics[trim=0cm 7cm 0 0cm, width=0.5\textwidth]{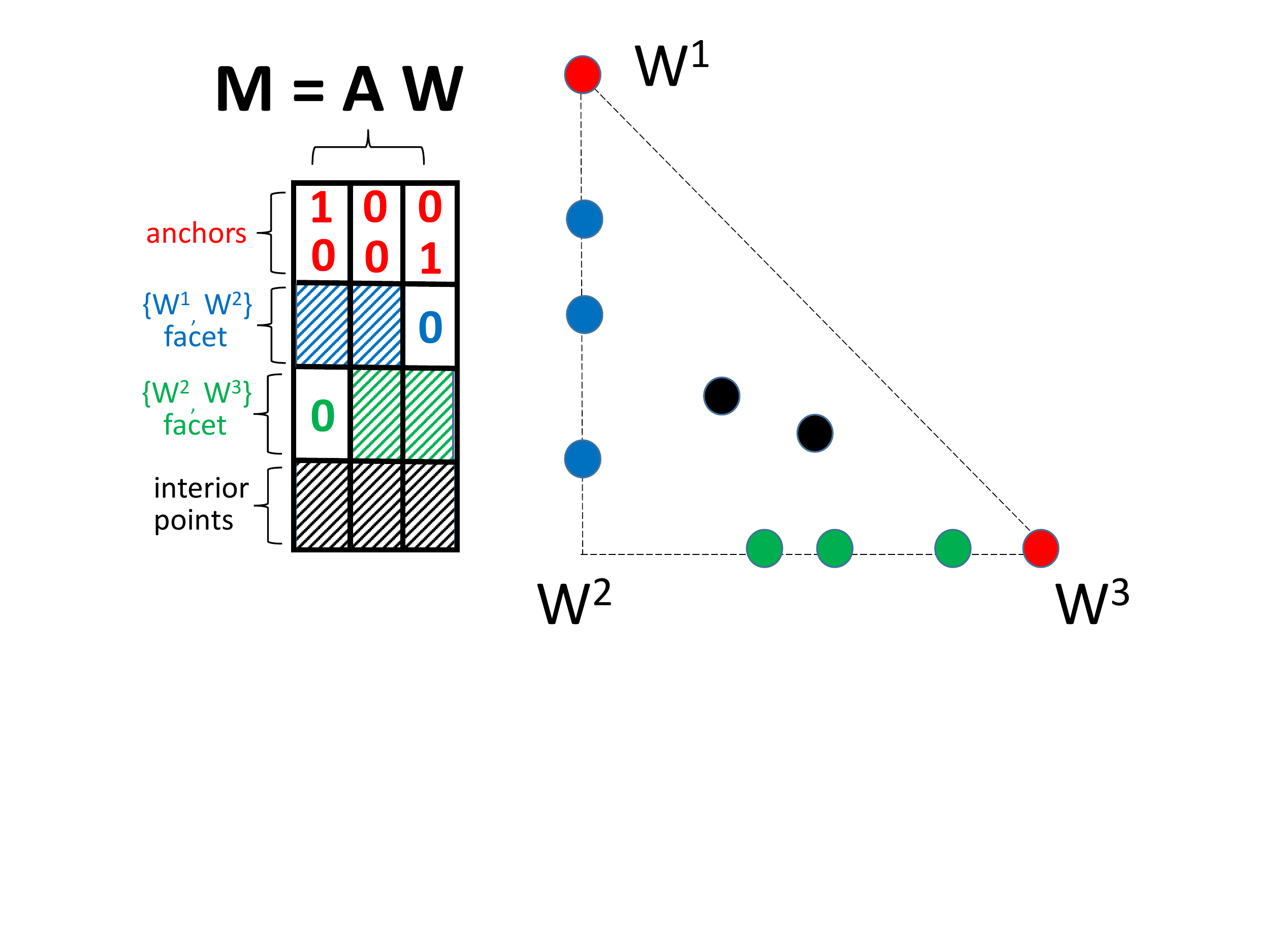}
\caption{Illustration of the NMF geometry.}
\label{fig:fig1}
\end{figure}

\textbf{Example.} We illustrate the subset-separable condition in Figure~\ref{fig:fig1}. In this figure, the circles correspond to data points $M^i$'s and they are colored according to the facet that they belong. The filled facets are $S_1 = \{1\}, S_2 = \{ 3\}, S_3 = \{1,2\}$ and $S_4 = \{2,3\}$. The facet $\{W^1, W^3\}$ is not filled since there are no points in its interior. The singleton facets $S_1$ and $S_2$ are also called anchors. This NMF is subset-separable since $W^2$ is the unique intersection of $S_3$ and $S_4$, but it is not separable. The figure also illustrates the corresponding $A$ matrix, where the rows are grouped by facets and the shaded entries denote the support of each row.

The geometry of the simplex suggests an intuitive meta-algorithm for solving subset-separable NMFs, which is the basis of our Face-Intersect algorithm. 
\begin{enumerate}\itemsep=0pt
\item Identify the filled facets, $S_1, ..., S_k$, $r \le k \le n$.
\item Take intersections of the facets to recover all the rows of $W$ (vertices of the simplex).
\item Use $M$ and $W$ to solve for $A$.  
\end{enumerate}

\section{Robust algorithm for subset-separable NMF}
In order to carry out the meta-algorithm, the key computational challenge is to efficiently and correctly identify the filled facets of the $W$ simplex. 
Finding filled facets is related to well-studied problems in subspace clustering \citep{vidal2010tutorial} and subspace recovery\citep{HardtM13}. In subspace clustering we are given points in $k$ different subspaces and the goal is to cluster the points according to which subspace it belong to. This problem is in general NP-hard \citep{elhamifar2009sparse} and can only be solved under strong assumptions. Subspace recovery tries to find a unique subspace a fraction $p$ of the points. \citet{HardtM13} showed this problem is hard unless $p$ is large compared to the ratio of the dimensions. Techniques and algorithms from subspace clustering and recovery typically make strong assumptions about the independence of subspaces or the generative model of the points, and cannot be directly applied to our problem. Moreover, our filled facets have the useful property that they are on the boundary of the convex hull of the data points, which is not considered in general subspace clustering/discovery methods. 
We identified a general class of filled facets, called \textit{properly filled facets} that are computationally efficient to find. 

\begin{definition}[properly filled facets]
Given a NMF $M = AW$, a set of facets $S_1, ..., S_k \in [r]$ of $W$ is properly filled if it satisfies the following properties:
\begin{enumerate}
\item For any facet $|S_i| > 1$, the rows of $A$ with support equal to $S_i$ (i.e. points that lie on this facet) has a $|S_i|-1$-dimensional convex hull. Moreover, there is at least one row of $A$ that is in the interior of the convex hull. 
\item (General positions property.) For any subspace of dimension $1 < t < r$, if it contains more than $t$ rows in $M$, then the subspace contains at least one $S_i$ which is not a singleton facet. 
\end{enumerate}
\end{definition}

Condition 1 ensures that each $S_i$ has sufficiently many points to be non-degenerate. Condition 2 says that points that are not in the lower dimensional facets $S_1, ..., S_k$ are in general positions, so that no random subspace look like a properly filled facet. A set of properly filled facets $S_1, ..., S_k$ may contain singleton sets corresponding some of the rows $W^j$ if these rows also appear as rows in $M$.  
We first state the main results and then state the Face-Intersect algorithm.

\begin{theorem}
Suppose $M = AW$ is subset separable by $S_1, ..., S_k$ and these facets are properly filled, then given $M$ the Face-Intersect algorithm computes $A$ and $W$ in time polynomial in $n$, $m$ and $r$ (and in particular the factorization is unique).
\end{theorem}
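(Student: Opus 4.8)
The plan is to show that the Face-Intersect algorithm faithfully implements the three-step meta-algorithm — recover the properly filled facets, intersect them to recover the rows of $W$, and back-solve for $A$ — and that each step is correct and runs in time polynomial in $n,m,r$. Uniqueness then comes for free: the algorithm's output is forced by $M$, and the correctness argument applies to \emph{any} factorization meeting the hypotheses, so all of them coincide with the output. Throughout I will use two elementary facts about the normalization in which rows sum to one: the vertices $W^1,\dots,W^r$ are affinely independent, hence linearly independent, so $W$ has full row rank; and for subspaces spanned by subsets of vertices, $Q_S\cap Q_{S'}=Q_{S\cap S'}$.

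The crux is Step 1, identifying exactly the properly filled facets $S_1,\dots,S_k$ from $M$ alone. The point is that a facet $S$ with $|S|=t$ is ``visible'' through its span $Q_S$: $\dim Q_S=t$, and by Condition~1 of \idt{} at least $t+1$ of the rows of $M$ lie in $Q_S$ (the $t$ affinely independent ones plus an interior witness), and moreover $\mathrm{conv}(M)$ touches this face of the $W$-simplex. This gives \emph{completeness} of the search. \emph{Soundness} is exactly Condition~2 (general positions): any $t$-dimensional subspace with $1<t<r$ that contains more than $t$ rows of $M$ must contain one of the $S_i$, so the search cannot be fooled by a spuriously over-populated subspace. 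The part I expect to be the main obstacle is carrying out this search in polynomial time rather than by brute-force enumeration of the $\binom{n}{t}$ possible spanning sets. I would do this incrementally: first peel off anchors (singleton facets), which are detectable by a convex-position / linear-programming test and can be removed by projection, lowering the effective rank; and more generally grow a candidate subspace from a small seed, at each stage using the ``too many points in too small a subspace'' criterion of Condition~2 as the test for whether the candidate is still contained in a genuine facet, and exploiting the fact that the filled facets sit on the boundary of $\mathrm{conv}(M)$ to keep the number of live candidates polynomial.

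Once the facets are available as subspaces $Q_1,\dots,Q_k$ (with $k\le n$), Step 2 recovers the vertices by a greedy intersection: starting from any $Q_l$ of dimension $>1$, repeatedly replace the current subspace $Q$ by $Q\cap Q_{l'}$ whenever this strictly drops the dimension. Subset-separability guarantees such an $l'$ always exists while $\dim Q>1$: if $j\neq j'$ are two vertices in the index set of $Q$, the sub-collection witnessing $\bigcap=\{j\}$ contains a facet $S_{l'}$ with $j\in S_{l'}$, $j'\notin S_{l'}$, and intersecting with $Q_{l'}$ strictly shrinks $Q$ by the identity $Q_S\cap Q_{S'}=Q_{S\cap S'}$. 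Hence after at most $r$ reductions we reach a one-dimensional subspace $\mathrm{span}(W^j)$, and normalizing to sum one recovers $W^j$ exactly. Running this from every facet over every admissible reduction order and deduplicating recovers all $r$ rows of $W$ in polynomial time, since each step is a single subspace intersection and there are polynomially many. Finally, Step 3 solves the consistent linear systems $A^iW=M^i$ for each row; since $W$ has full row rank each has a unique solution, which is automatically non-negative because the true $A$ is a solution. This gives correctness and the polynomial running time, and — as noted — uniqueness of the factorization follows because the recovered $(A,W)$ is determined by $M$ while the argument above produces this same pair from any hypothesis-satisfying factorization.
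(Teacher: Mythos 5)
Your high-level architecture matches the paper's (find the filled facets, intersect them, recover the remaining anchors, back-solve for $A$), but the two places where you yourself flag difficulty are exactly where the content of the proof lives, and your sketches for both have genuine gaps. For facet-finding, the paper's mechanism is the \emph{center point} of Condition~1: for each candidate row $M^i$ one solves an LP expressing $M^i$ as a convex combination of the other rows; because a filled facet is a face of the $W$-simplex and hence of $\mathrm{conv}(M)$, any such combination is supported only on points of that same facet, and Condition~1 guarantees a combination whose support spans all of $Q_S$, which an iterative re-optimization (Algorithm~2) reaches in at most $d$ rounds. Your alternative --- ``grow a candidate subspace from a small seed'' using Condition~2 as the test --- is not yet an algorithm: you never say how seeds are chosen (naively there are $\binom{n}{t}$ of them), and Condition~2 gives no usable signal for a candidate subspace spanned by only $t$ points even when those points do lie on a common facet. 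Moreover Condition~2 only says an over-populated subspace \emph{contains} some $S_i$, not that it equals $Q_{S_i}$, so soundness also requires pruning subspaces that strictly contain a genuine facet, which the paper does explicitly in Algorithm~3.

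For the intersection step, your greedy procedure is either incomplete or exponential. With $r=4$ and facets $\{1,2\},\{2,3\},\{3,4\},\{4,1\}$ (a legitimate subset-separable, properly filled instance with no anchors), a greedy run from each starting facet with a fixed scan order finds vertices $1,2,3$ but never $4$; since $4$ is not an anchor, the anchor-finding step cannot rescue it. (Your literal rule is worse still: ``intersect whenever the dimension strictly drops'' will happily intersect down to the zero subspace, e.g.\ $\{1,2\}\cap\{3,4\}$.) Your proposed fix, running ``every admissible reduction order,'' enumerates up to $h!$ orders with $h$ as large as $n$, so the claim that this is polynomial is unjustified. The missing idea is the bookkeeping in the paper's Algorithm~4: maintain a set $R$ of vertices already accounted for and accept an intersection step only when $S\cap S_j\not\subseteq R$; one then shows each outer iteration either yields a new intersection vertex or certifiably enlarges $R$ by elements that cannot be undiscovered intersection vertices, so $r$ iterations suffice. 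Your Step~3 and the uniqueness remark are fine.
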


In many applications, we have to deal with noisy NMF $\hat{M} = AW + \mbox{noise}$ where (potentially correlated) noise is added to rows of the data matrix $M$. Suppose every row is perturbed by a small noise $\epsilon$ (in $\ell_2$ norm), we would like the algorithm to be robust to such additive noise. We need a generalization of properly filled facets. 

\begin{definition}[$(N,H,\gamma)$ properly filled facets]
\label{def:robust}
 Given a NMF $M = AW$, a set of facets $S_1, ..., S_k \in [r]$ of $W$ is $(N,H,\gamma)$ properly filled if it satisfies the following properties:
\begin{enumerate}
\item In any set $|S_i| > 1$, there is a row $i^*$ in $A$ whose support is equal to $S_i$, and is in the convex hull of other rows of $A$. There exists a convex combination $M^{i^*} = \sum_{i \in [n]\backslash i^*} w_i M^i$, such that the matrix $\sum_{i \in [n]\backslash i^*} w_i (M^i) (M^i)^T$ has rank $|S_i|$, and the smallest nonzero singular value is at least $\gamma$. We call this special point $M^{i^*}$ the {\em center} for this facet.
\item For any set $|S_i| > 1$, there are at least $N$ rows in $A$ whose support is exactly equal to $S_i$.
\item For any subspace $Q$ of dimension $1 < t < r$, if there are at least $N$ rows of $M$ in an $\epsilon$-neighborhood of $Q$, then there exists a non-singleton set $S_i$ with corresponding subspace $Q_i$ such that $\|P_{Q^\perp} Q_i\| \le H\epsilon$.
\end{enumerate}
\end{definition}

Intuitively, if we represent the center point as a convex combination of other points, the only points that have a nonzero contribution must be on the same facet as the center. Condition 1 then ensures there is a ``nice'' convex combination that allows us to robustly recover the subspace corresponding to the facet even in presence of noise. Condition 2 shows every properly filled facets contain many points, which is why they are different from other subspaces and are the facets of the true solution. Condition 3 is a generalization of the general position propery, which essentially says ``every subspace that contains many points must be close to a properly filled facet''. In Section~\ref{sec:genmain} we show that under a natural generative model, the NMF has $(N,H,\gamma)$-properly filled facets with high probability.

Properly filled facets is a property of how the points $M^i$ are distributed on the facets of $W$. The geometry of the $W$-simplex itself also affects the accuracy of our Face-Intersect algorithm.

\begin{definition}
A matrix $W \in \mathbb{R}^{r\times m} (r \le m)$  is $\alpha$-robust if its rows have norm bounded by $1$, and its $r$-th singular value is at least $\alpha$.
\end{definition}

Under these assumptions we prove that Face-Intersect robustly learns the unknown simplex $W$.

\begin{theorem}
\label{thm:mainrobust}
Suppose $M = AW$ is subset separable by $S_1, ..., S_k$ and these facets are $(N,H,\gamma)$ properly filled, and the matrix $W$ is $\alpha$-robust. Then given $\hat{M}$ whose rows are within $\ell_2$ distance $\epsilon$ to $M$, with $\epsilon < o(\alpha^4\gamma/Hr^3)$, Algorithm Face-Intersect finds $\hat{W}$ such that there exists a permutation $\pi$ and for all $i$ $\|\hat{W}_i - W_{\pi(i)}\| \le O(H r^2\epsilon/\alpha^2 \gamma)$. The running time is polynomial in $n, m$ and $r$. 
\end{theorem}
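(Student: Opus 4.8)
\emph{Proof plan.} The plan is to analyze the Face-Intersect algorithm by following the three steps of the meta-algorithm (identify filled facets; intersect them to recover the vertices of $W$; solve for $A$) and tracking how the $\epsilon$-perturbation propagates through each step, using the $(N,H,\gamma)$ properly filled hypothesis for the first step and $\alpha$-robustness of $W$ for the second.

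\emph{Step 1: robustly recovering the facet subspaces.} The core primitive is, given a candidate data point $\hat M^{i^*}$, to decide whether it is a center of some properly filled facet and, if so, recover the corresponding subspace. The starting observation is noiseless: by Condition 1 of Definition~\ref{def:robust}, the center $M^{i^*}$ lies in the relative interior of facet $S_i$, so \emph{any} representation $M^{i^*} = \sum_{i \neq i^*} w_i M^i$ as a convex combination of the other noiseless points is supported only on points lying on $S_i$ (a standard fact about faces of polytopes); hence $\Sigma = \sum w_i M^i (M^i)^T$ has column space exactly $Q_{S_i}$, and Condition 1 guarantees $\sigma_{|S_i|}(\Sigma) \geq \gamma$. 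In the noisy setting the algorithm solves, for each guessed dimension, the convex feasibility program ``find $w \ge 0$, $\sum w_i = 1$, $\|\hat M^{i^*} - \sum w_i \hat M^i\| \le 2\epsilon$'' (optimizing to make the relevant singular value of $\sum w_i \hat M^i (\hat M^i)^T$ as large as possible). Feasibility and good conditioning for true centers follow from Condition 1 (the true $w$ is feasible since $\|e_{i^*} - \sum w_i e_i\| \le 2\epsilon$), and then Weyl's inequality together with Davis--Kahan show the returned subspace $\hat Q$ satisfies $\|P_{Q_{S_i}^\perp}\hat Q\| = O(\epsilon/\gamma)$. Conversely, any subspace the test accepts is ``popular'' (close to at least $N$ data points, by the convex-combination support argument), so Condition 3 forces it to be $O(H\epsilon)$-close to some true non-singleton facet; singleton facets are handled as the degenerate $|S_i|=1$ case exactly as in the separable setting. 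The net output is a list of subspaces in bijection with the true facets $S_1,\dots,S_k$, each within $O(H\epsilon/\gamma)$ of the truth, provided $\epsilon$ is small enough (this is where $\epsilon < o(\alpha^4\gamma/Hr^3)$ enters, see below).

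\emph{Step 2: intersecting facets to recover vertices.} By subset-separability every vertex $W^j$ is the intersection of a subfamily of the $Q_{S_i}$; the algorithm forms all such minimal intersections of the recovered $\hat Q_i$ and then selects the unique size-$r$ subcollection of candidates that forms a valid ($\alpha$-robust, data-enclosing) simplex. The quantitative heart is a perturbation bound for subspace intersection: if $\bigcap_t Q_{S_{j_t}} = \mathrm{span}(W^j)$ and each $Q_{S_{j_t}}$ is replaced by a $\delta$-close $\hat Q_{j_t}$, then the intersection moves by $O(\delta\cdot \mathrm{poly}(r)/\mathrm{poly}(\alpha))$, where $\alpha$-robustness of $W$ (its $r$-th singular value bounded below) quantifies the transversality of the facet flats and keeps the intersection well-posed. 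Composing with the $O(H\epsilon/\gamma)$ bound from Step 1 and accounting for up to $r$ facets being intersected gives $\|\hat W_i - W_{\pi(i)}\| = O(Hr^2\epsilon/(\alpha^2\gamma))$. The hypothesis $\epsilon < o(\alpha^4\gamma/Hr^3)$ is exactly what guarantees this error is small compared to the $\mathrm{poly}(\alpha)$ separation between distinct facets and vertices, so the bijection in Step 1 and the simplex selection here are carried out correctly and no spurious intersection is taken.

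\emph{Step 3: recovering $A$.} This is routine. With $\hat W$ within $\eta := O(Hr^2\epsilon/(\alpha^2\gamma))$ of $W$ we get $\sigma_r(\hat W) \ge \alpha/2$, so for each $i$ the non-negative least-squares problem $\min_{a\ge 0}\|\hat M^i - \hat W^T a\|$ is well-conditioned and $\|\hat A^i - A_{\pi}^i\| = O((\epsilon + \eta\sqrt r)/\alpha)$, within the claimed bound. Polynomial running time is immediate: Step 1 is one convex program per data point per guessed dimension, Step 2 builds minimal intersections greedily and selects among $\mathrm{poly}$-many candidates, and Step 3 is $n$ small convex programs. The main obstacle is Step 2: establishing the subspace-intersection perturbation lemma with the stated dependence on $\alpha$ and $r$, and arguing that the combinatorial structure is recovered exactly --- Condition 3 only promises that every popular subspace is \emph{near some} facet, not a clean bijection, so one must rule out two recovered subspaces collapsing onto one facet or a spurious intersection masquerading as a vertex, which is precisely where the strong noise threshold and the $\alpha$-robustness of $W$ must be spent to open up enough of a gap.
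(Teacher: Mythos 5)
Your decomposition matches the paper's: find the non-singleton properly filled facets, intersect them to get the intersection vertices, recover the singleton anchors separately, then solve for $A$; and you correctly locate where each hypothesis is spent. However, the two steps you yourself flag as the hard parts are exactly where the proposal has genuine gaps, and the paper's proof lives almost entirely in those two places.

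In Step 1, the ``convex-combination support argument'' is only valid in the noiseless case. Once the constraint is relaxed to $\|\hat M^{i^*} - \sum_i w_i \hat M^i\| \le 2\epsilon$, the optimizer is free to place weight on points far from the facet $S_i$, so $\sum_i w_i \hat M^i (\hat M^i)^T$ need not be close to any rank-$|S_i|$ matrix supported on $Q_{S_i}$, and Weyl plus Davis--Kahan cannot be invoked with a small perturbation. The missing ingredient is a margin argument: $\alpha$-robustness yields a unit vector $h \perp Q_{S_i}$ with $|h\cdot v^\perp|/\|v^\perp\| \ge \alpha/\sqrt{r}$ for every point $v$ of the simplex, which converts the constraint $\|\sum_i w_i v^i - v^0\| \le 4\epsilon$ into the bound $\sum_i w_i \|v^{i\perp}\| \le O(\sqrt{r}\epsilon/\alpha)$ on the total off-facet mass; this is where the $\sqrt{r}/\alpha$ factor (absent from your $O(\epsilon/\gamma)$) comes from. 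Separately, ``optimizing to make the relevant singular value as large as possible'' is not a convex program, and Condition~1 only guarantees the \emph{existence} of a well-conditioned combination --- the one you actually find may span a strict subspace of $Q_{S_i}$. The paper resolves this with an iterative scheme that maximizes the (linear-in-$w$) trace of the projection onto the complement of the current subspace while lower-bounding $\mathrm{diag}(\hat Q^T \hat F \hat Q)$, and proves via a nuclear-norm comparison that the dimension strictly increases each round until it reaches $|S_i|$.

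In Step 2, ``form all minimal intersections of the recovered $\hat Q_i$'' is not a polynomial-time procedure: the relevant subfamilies of $S_1,\dots,S_h$ can be exponentially many, and Condition~3 gives you no handle on which subfamilies to try. The paper's key combinatorial idea is a greedy loop that maintains the set $R$ of already-found vertices (as a subspace $\Gamma$) and only intersects with $S_j$ when the intersection strictly drops in dimension \emph{and} is not contained in $R$ (tested as $\|P_{\Gamma^\perp}Z\| > \alpha/2$); one then proves that each pass either outputs a new vertex of $P$ or certifies that the surviving set contains no unfound vertex of $P$, so $r$ passes and $O(hr)$ intersections suffice. The quantitative intersection lemma you posit is correct in spirit, and the paper proves it by showing the stacked matrix $[Q_{j_1}^\perp,\dots,Q_{j_t}^\perp]^T$ has nullspace $\mathrm{span}\{W^i : i\in \cap_t S_{j_t}\}$ with next singular value at least $\alpha/\sqrt{r}$ (a block-diagonalization argument), after which Wedin's theorem gives the $4r^{1.5}\epsilon_S/\alpha$ propagation; but as stated your plan neither supplies this lemma nor the polynomial-time combinatorial search, so the claimed bound and runtime are not yet established.
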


\begin{algorithm}
\begin{algorithmic}
\STATE Run Algorithm~\ref{alg:findall} to find subspaces that correspond to properly filled facets $S_1,S_2,...,S_k$ where $|S_i|\ge 2$.
\STATE Run Algorithm 5 to find the \textit{intersection vertices} $P$.
\STATE Run Algorithm~\ref{alg:findremaining} (similar to Algorithm 4 in \cite{AroraEtAl_icml13}) to find the singleton points (anchors).
\STATE Given $\hat{M}$, $\hat{W}$, compute $\hat{A}$.
\end{algorithmic}
\caption{Face-Intersect}\label{alg:main}
\end{algorithm}

A vertex $j \in [r]$ is an intersection vertex if there exists a subset of properly filled facets $\{S_{j_k}: |S_{j_k}| \geq 2\}$ such that $j = \cap_{k} S_{j_k}$. Since the first module of Face-Intersect, Algorithm~\ref{alg:findall}, only finds non-singleton facets, the intersection vertices are all the vertices that we could find using these facets. The last module of Face-Intersect finds all the remaining vertices of the simplex.

\paragraph{Our approach} The main idea of our algorithm is to first find the subspaces corresponding properly filled facets, then take the intersections of these facets to find the intersection vertices. Finally we adapt the algorithm from \cite{AroraEtAl_icml13} to find the remaining vertices that correspond to singleton sets.

\begin{itemize}[noitemsep,nolistsep]
\item {\bf Finding facets} For each row of $M$, we try to represent it as the convex combination of other rows of $M$. We use an iterative algorithm to make sure the span of points used in this convex combination is exactly the subspace corresponding to the facet. 
\item {\bf Removing false positives} The previous step will generate subspaces that correspond to properly filled facets, but it might also generate false positives (subspaces that do not correspond to any properly filled facets). Condition 3 in Definition~\ref{def:robust} allows us to filter out these false positives as these subspaces will not contain enough nearby points.
\item {\bf Finding intersection vertices} We design an algorithm that systematically tries to take the intersections of subspaces in order to find the intersection vertices. This relies on the subset-separable property and robustness properties of the simplex. This step computes at most $O(nr)$ subspace intersection operations.
\item {\bf Finding remaining vertices} The remaining vertices correspond to the singleton sets. This is similar to the separable case and we use an algorithm from \citet{AroraEtAl_icml13}.
\end{itemize}

\section{Finding properly filled facets}

In this section we show how to find properly filled facets $S_i$ with $|S_i| \geq 2$. The singleton facets (anchors) are not considered in this section, since they will be found through a separate algorithm. We first show how to find a properly filled facet if we know its center (Condition 1 in Definition~\ref{def:robust}). Then to find all the properly filled facets we enumerate points to be the center and remove  false positives.

\paragraph{Finding one properly filled facet}

Given the center point, if there is no noise then when we represent this point as convex combinations of other points, all the points with positive weight will be on the same facet. Intuitively the span of these points should be equal to the subspace corresponding to the facet. However there are two key challenges here: first we need to show that when there is noise, points with large weights in the convex combination are close to the true facet; second, it is possible that points with large weights only span a lower dimensional subspace of the facet. 
Condition 1 in Definition~\ref{def:robust} guarantees that there exists a \textit{nice} convex combination that spans the entire subspace (and robustly so because the smallest singular value is large compared to noise). In Algorithm~\ref{alg:findone}, we iteratively improve our convex combination and eventually converge to this nice combination. 

\begin{algorithm}
\begin{algorithmic}[1]
\INPUT points $\hat{v}^1, \hat{v}^2,...,\hat{v}^n$, and center point $\hat{v}^0$ (Condition 1 in Definition~\ref{def:robust}).
\OUTPUT the proper facet containing $\hat{v}^0$.
\STATE Maintain a subspace $\hat{Q}$ (initially empty)
\STATE Iteratively solve the following optimization program:
\begin{align*}
\max\quad  \mbox{tr}(P_{\hat{Q}^\perp}\sum_{i=1}^n w_i \hat{v}^i&(\hat{v}^i)^T P_{\hat{Q}^\perp}) \\
\forall i\in[n] \quad w_i &\ge 0\\
\sum_{i=1}^n w_i & = 1\\
\|\hat{v}^0 - \sum_{i=1}^n w_i \hat{v}^i\| &\le 2\epsilon\\
\mbox{diag}(\hat{Q}^T \left(\sum_{i=1}^n w_i \hat{v}^i(\hat{v}^i)^T\right) \hat{Q}) & \ge \gamma/2.
\end{align*}
\STATE Let $\hat{Q}$ be the top singular space of $\left(\sum_{i=1}^n w_i \hat{v}^i(\hat{v}^i)^T\right)$ for singular values larger than $\gamma/2d$.
\STATE Repeat until the dimension of $\hat{Q}$ does not increase.
\end{algorithmic}
\caption{Finding a properly filled facet}\label{alg:findone}
\end{algorithm}

\begin{theorem}
\label{thm:findonefacet}
Suppose $\|\hat{v}^i - v^i\|\le \epsilon$, $v^0$ is the center point of a properly filled facet $S\subset [r]$ with $|S| = d$, and the unknown simplex $W$ is $\alpha$-robust, when $d\sqrt{r}\epsilon/\alpha\gamma \ll 1$ Algorithm~\ref{alg:findone} stops within $d$ iterations, and the subspace $\hat{Q}$ is within distance $O(\sqrt{r}\epsilon/\alpha \gamma)$ to the true subspace $Q_S$.
\end{theorem}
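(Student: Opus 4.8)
\section*{Proof proposal for Theorem~\ref{thm:findonefacet}}

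The plan is to track two invariants across the iterations of Algorithm~\ref{alg:findone}: that the current subspace $\hat Q$ stays close to a subspace of $Q_S$, and that $\dim\hat Q$ strictly increases each round until it equals $d$. Three ingredients feed into this: a \emph{support lemma} showing that every feasible weighting puts essentially all of its second-moment mass inside $Q_S$; the feasibility of the \emph{center's} special convex combination (Condition~1 of Definition~\ref{def:robust}), which certifies that the optimal objective is large as long as $\dim\hat Q<d$; and Weyl/Davis--Kahan perturbation bounds converting these facts into the claimed distance estimate.

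I would first prove the support lemma. The constraint $\|\hat v^0-\sum_i w_i\hat v^i\|\le2\epsilon$ together with $\|\hat v^i-v^i\|\le\epsilon$ gives $\|v^0-\sum_i w_i v^i\|\le4\epsilon$, so the weighted mean $\sum_i w_i v^i$ lies within $4\epsilon$ of $Q_S$ (as $v^0\in Q_S$). Writing $v^i=A^iW$ and splitting the coordinates of $A^i$ into $S$ and $S^c$, the component of $\sum_i w_i v^i$ in $Q_{S^c}$ equals $zW$ with $z=\sum_i w_i(A^i)_{S^c}\ge0$; since the $w_i$ and the entries of $A$ are nonnegative there is no cancellation, so $\sum_i w_i\|(A^i)_{S^c}\|_1=\|z\|_1$. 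Using $\alpha$-robustness --- Cauchy interlacing gives $\sigma_{\min}(W_{S^c})\ge\sigma_r(W)\ge\alpha$, and a short coefficient-norm computation gives $\|P_{Q_S^\perp}u\|\ge(\alpha/\sqrt r)\|u\|$ for every $u\in Q_{S^c}$ --- the bound $\|P_{Q_S^\perp}zW\|\le4\epsilon$ forces $\|z\|_1$, hence $\sum_i w_i\|P_{Q_S^\perp}v^i\|$, to be of order $\epsilon\cdot\mathrm{poly}(r)/\mathrm{poly}(\alpha)$. Expanding $\sum_i w_i\hat v^i(\hat v^i)^T$ and discarding the $P_{Q_S^\perp}$-parts then shows that $\hat\Sigma:=\sum_i w_i\hat v^i(\hat v^i)^T$ is $o(\gamma/d)$-close, in operator (indeed nuclear) norm, to a PSD matrix $\Pi=\sum_i w_i(P_{Q_S}v^i)(P_{Q_S}v^i)^T$ of rank at most $d$ supported on $Q_S$, under the assumed smallness of $d\sqrt r\epsilon/\alpha\gamma$.

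Next, at the start of an iteration with $\dim\hat Q=j$, I would verify that the center's combination $w^*$ from Condition~1 is feasible: it meets the mean constraint exactly (well within the $2\epsilon$ slack), and --- applying the support argument exactly to the noiseless center combination --- $\Sigma^*:=\sum_i w_i^* v^i(v^i)^T$ is supported on $Q_S$ with smallest eigenvalue at least $\gamma$ there, so since $\hat Q$ is inductively close to a subspace of $Q_S$ each diagonal entry $u_\ell^T(\sum_i w_i^*\hat v^i(\hat v^i)^T)u_\ell$ is $\ge\gamma-o(\gamma)\ge\gamma/2$. Because $\Sigma^*\succeq\gamma I$ on all of $Q_S$, this feasible point has objective at least $(d-j)\gamma-o(\gamma)$, so the algorithm's optimum $\hat\Sigma$ has at least that much mass orthogonal to $\hat Q$; by the support lemma this mass sits, up to $o(\gamma/d)$ error, inside the $(d-j)$-dimensional space $Q_S\ominus\hat Q$, so $\hat\Sigma$ has a singular direction orthogonal to $\hat Q$ with singular value exceeding $\gamma/2d$. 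Combined with the diag constraint forcing the $j$ directions of $\hat Q$ to remain present (the delicate point, see below), the recomputed $\hat Q$ in Step~3 gains a dimension; since $\hat\Sigma$ is within $o(\gamma/2d)$ of a rank-$\le d$ matrix, at most $d$ singular values ever clear the threshold, so within $d$ rounds $\dim\hat Q=d$. At that point $\hat Q$ is a $d$-dimensional space that the support lemma plus Davis--Kahan (using the $\Omega(\gamma/d)$ eigenvalue gap at the threshold $\gamma/2d$) pins within $O(\sqrt r\epsilon/\alpha\gamma)$ of a subspace of $Q_S$, hence of $Q_S$ itself, and the dimension stops growing.

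The step I expect to be the main obstacle is precisely the ``gains a dimension'' claim --- ruling out that the re-optimization, while piling up mass orthogonal to $\hat Q$, simultaneously ``collapses'' one of the directions already in $\hat Q$ back below the spectral cutoff $\gamma/2d$, so that $\dim\hat Q$ stalls (or shrinks) and the loop halts early with $\dim\hat Q<d$. Resolving this should use the order-of-magnitude gap between the diag threshold $\gamma/2$ and the spectral threshold $\gamma/2d$, together with the PSD and near-rank-$d$ (near-$Q_S$) structure of $\hat\Sigma$ and the full $(d-j)\gamma$ lower bound on the objective (not merely $\gamma$, which is what a singleton direction of $\Sigma^*$ alone would give). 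The secondary nuisance is propagating the Weyl/Davis--Kahan estimates across all $\le d$ iterations without the accumulated subspace error exceeding $O(\sqrt r\epsilon/\alpha\gamma)$, which relies on each retained singular value of $\hat\Sigma$ being comfortably larger than the cutoff so that the relevant gap is $\Omega(\gamma/d)$ rather than merely positive.
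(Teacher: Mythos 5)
Your proposal follows essentially the same route as the paper's proof: your support lemma is the paper's Lemma~\ref{lem:feasible} (proved there via an $\alpha$-robustness corollary that, like your $\|z\|_1$ computation, uses nonnegativity to rule out cancellation in the components orthogonal to $Q_S$), the feasibility of the center's combination $w^*$ and the trace lower bound $\gamma(d-t)-O(\sqrt{r}\epsilon/\alpha)$ are exactly the content of Lemma~\ref{lem:spaceprogress}, and the final distance estimate is Lemma~\ref{lem:spaceperturbation}. The step you flag as the main obstacle is handled in the paper just as you anticipate: the diag constraint keeps the Rayleigh quotients along the $t$ old directions of $\hat{Q}$ above $\gamma/2$ while the rank-$\le d$ structure of $F$ forces a new direction orthogonal to $\hat{Q}$ with Rayleigh quotient at least $\gamma/2d$, from which the paper concludes $\sigma_{t+1}(\hat{F})\ge\gamma/2d$ and hence that the recomputed top singular space (thresholded at $\gamma/2d$) gains a dimension.
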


The intuition of Algorithm~\ref{alg:findone} is to maintain a convex combination for the center point. We show for any convex combination, the top singular space associated with the combination, $\hat{Q}$, is always close to a subspace of the true space $Q_S$. The algorithm then tries to explore other directions by maximizing the projection that is outside the current subspace $\hat{Q}$ (the objective function of the convex optimization), while maintaining that the current subspace have large singular values (the last constraint). In the proof we show since there is a nice solution, the algorithm will always be able to make progress until the final solution is a \textit{nice} convex combination.

\paragraph{Finding all subsets}

Algorithm~\ref{alg:findone} can find one properly filled facet, if we have its center point (Condition 1 in Definition~\ref{def:robust}). In order to find all the properly filled facets, we enumerate through rows of $M$ and prune false positives using Condition 3 in Definition~\ref{def:robust}

\begin{algorithm}
\begin{algorithmic}[1]
\INPUT $\hat{M}$ whose factorization is subset-separable with $(N,H,\gamma)$-\rdt.
\FOR{$i = 1$ to $n$}
\STATE Let $\hat{v}^0 = \hat{M}^i$ and $\hat{v}^1,...,\hat{v}^{n-1}$ be the rest of vertices.
\STATE Run Algorithm~\ref{alg:findone} to get a subspace $Q$.
\STATE If $\mbox{dim}(Q) < r$, and there are at least $N$ points that are within distance $O(\sqrt{r}\epsilon/\alpha \gamma)$ add it to the collection of subspaces.
\ENDFOR
\STATE Let $Q$ be a subspace in the collection, remove $Q$ if there is a subspace $Q'$ with $\mbox{dim}(Q') < \mbox{dim}(Q)$ and  $\|P_{Q^\perp} Q'\| \le O(H\sqrt{r}\epsilon /\alpha \gamma)$
\STATE Merge all subspaces that are within distance $O(H\sqrt{r}\epsilon /\alpha \gamma)$ to each other.
\end{algorithmic}
\caption{Finding all proper facets}\label{alg:findall}
\end{algorithm}

\begin{theorem}
\label{thm:findall}
If $H\sqrt{r}\epsilon /\alpha \gamma = o(\alpha)$, then the output of Algorithm~\ref{alg:findall} contains only subspaces that are $\epsilon_S = O(H\sqrt{r}\epsilon /\alpha \gamma)$-close to the properly filled facets, and for every properly filled facet there is a subspace in the output that is $\epsilon_S$ close.
\end{theorem}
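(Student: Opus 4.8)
The plan is to prove Theorem~\ref{thm:findall} in two directions: \emph{soundness} (every subspace surviving in the output is $\epsilon_S$-close to some properly filled facet) and \emph{completeness} (every properly filled facet is $\epsilon_S$-close to some surviving subspace). Both rely heavily on Theorem~\ref{thm:findonefacet}, which already guarantees that whenever we feed Algorithm~\ref{alg:findone} a true center point $v^0$ of a properly filled facet $S$, we get back a subspace within $O(\sqrt r\epsilon/\alpha\gamma)$ of $Q_S$. So the argument is really about controlling what happens when we call Algorithm~\ref{alg:findone} on points $\hat M^i$ that are \emph{not} centers, and about the pruning/merging bookkeeping at the end.

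\textbf{Completeness.} For each properly filled facet $S_i$ with $|S_i|\ge 2$, Condition~1 of Definition~\ref{def:robust} says there is a center row $i^*$ whose support is exactly $S_i$; when the loop reaches $i=i^*$ we set $\hat v^0=\hat M^{i^*}$ with $\|\hat v^0 - v^0\|\le\epsilon$, and Theorem~\ref{thm:findonefacet} gives a subspace $Q$ with $\|P_{Q^\perp}Q_{S_i}\|\le O(\sqrt r\epsilon/\alpha\gamma)$. I need to check this $Q$ actually gets added to the collection in the \texttt{if} on line~4: $\dim(Q)=|S_i|<r$ holds, and Condition~2 gives $N$ rows of $A$ with support exactly $S_i$, hence $N$ rows of $M$ lying in $Q_{S_i}$; since $Q$ is $O(\sqrt r\epsilon/\alpha\gamma)$-close to $Q_{S_i}$, those $N$ points are within the required distance of $Q$. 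So the collection contains a subspace $\epsilon_S$-close to every $Q_{S_i}$. It then remains to argue the final pruning (line~7) and merging (line~8) do not \emph{delete} the representative of $S_i$: a subspace is removed only if there is another subspace in the collection of \emph{strictly smaller} dimension that is $O(H\sqrt r\epsilon/\alpha\gamma)$-close. If the representative of $S_i$ were removed, the smaller-dimensional witness $Q'$ would have to be close to $Q_{S_i}$, which (using $\alpha$-robustness, so that distinct facet subspaces are $\Omega(\alpha)$-separated — this is where $H\sqrt r\epsilon/\alpha\gamma = o(\alpha)$ is used) forces $Q'\subsetneq Q_{S_i}$ up to noise; but by soundness $Q'$ is itself close to some properly filled facet, and a properly filled facet cannot be a strict subface of another while containing $N$ points of its own — contradiction with the general-position-type Condition~3 unless that smaller facet is itself one of the $S_j$, in which case merging keeps a valid representative. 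I'd phrase this carefully: after pruning, the surviving subspace of minimal dimension among those close to $Q_{S_i}$ is still $\epsilon_S$-close to it.

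\textbf{Soundness.} Take any subspace $Q$ that survives to the output. It was added on line~4, so $\dim(Q)=t$ with $1<t<r$ and there are at least $N$ rows of $M$ within distance $O(\sqrt r\epsilon/\alpha\gamma)$ of $Q$; equivalently at least $N$ rows of $\hat M$ within an $\epsilon'$-neighborhood of $Q$ for $\epsilon'=O(\sqrt r\epsilon/\alpha\gamma)$. Now apply Condition~3 of Definition~\ref{def:robust} (with the noise parameter taken to be $\epsilon'$ rather than $\epsilon$ — this substitution is legitimate since Condition~3 is stated for arbitrary subspaces and $N$ points in an $\epsilon$-neighborhood, and $\epsilon'$ plays that role): there is a non-singleton properly filled facet $S_j$ with $\|P_{Q^\perp}Q_{S_j}\|\le H\epsilon' = O(H\sqrt r\epsilon/\alpha\gamma)=\epsilon_S$. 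That is exactly the conclusion. The merging step only replaces a subspace by one within $\epsilon_S$ of it, so closeness to a properly filled facet is preserved up to a constant-factor blow-up in $\epsilon_S$ (absorbed into the $O(\cdot)$).

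\textbf{Main obstacle.} The delicate point is the interaction between the general-position Condition~3 and the two facet subspaces possibly being \emph{nested} (one facet's subspace contained in another's), which is exactly what happens for facets sharing vertices in a simplex. The pruning rule on line~7 is designed to keep only the minimal-dimensional facet through any given point configuration, and I expect the real work to be showing that this rule (i) never removes a genuine properly filled facet and (ii) does collapse the inevitable "partial-span" false positives — subspaces produced by Algorithm~\ref{alg:findone} on a non-center point that happen to span only a sub-face. Handling (ii) requires tracking, through Theorem~\ref{thm:findonefacet}'s analysis, that any such partial span is still close to \emph{some} $Q_{S_j}$ (via Condition~3 applied to it), and then that the dimension bookkeeping in line~7 eliminates it in favour of the true $S_j$. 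The separation $\Omega(\alpha)$ between non-nested facet subspaces, which follows from $\alpha$-robustness of $W$, is what makes these comparisons well-defined, and the hypothesis $H\sqrt r\epsilon/\alpha\gamma = o(\alpha)$ is precisely the slack needed for every "close" relation above to be unambiguous.
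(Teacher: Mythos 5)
Your proposal is correct and follows essentially the same route as the paper's proof: completeness via Theorem~\ref{thm:findonefacet} applied at each facet's center plus Condition~2 to pass the $N$-point filter, soundness via Condition~3 applied with the enlarged radius $O(\sqrt r\epsilon/\alpha\gamma)$, and the pruning/merging analysis via the $\Omega(\alpha)$ separation of facet subspaces guaranteed by $\alpha$-robustness. The only difference is that you explicitly flag two points the paper treats silently --- the substitution of $\epsilon'$ for $\epsilon$ in Condition~3, and the possibility that nested facet subspaces could trip the dimension-based pruning rule --- the latter being a genuine subtlety that the paper's one-line appeal to pairwise $\alpha$-separation does not fully resolve either.
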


\section{Finding intersections}

Given an subset-separable NMF with $(N,H,\gamma)$-\rdt, let $Q_i$ denote the subspace associated with a set $S_i$ of vertices: $Q_i = \mbox{span}(W^{S_i})$. For all properly filled facets with at least two vertices, Algorithm~\ref{alg:findall} returns noisy versions of the subspaces $\hat{Q}_i$ that are $\epsilon_S$ close to the true subspaces. Without loss of generality, assume the first $h$ facets are non-singletons. Our goal is to find all the intersection vertices $\{W^i:i\in P\}$. Recall that intersection vertices are the unique intersections of subsets of $S_1, ..., S_h$. We can view this as a set intersection problem:

\paragraph{Set Intersections} We are given sets $S_1,S_2,...,S_h \subset [r]$. There is an unknown set $P\subset [r]$ such that $\forall i\in P$ there exists $\{S_{i_k}\}$ and $i = \cap_k S_{i_k}$. Our goal is to find the set $P$.

\begin{algorithm}
\begin{algorithmic}
\INPUT $k$ sets $S_1,...,S_h$.
\OUTPUT A set $P$ that has all the intersection vertices.
\STATE Initialize $P = \emptyset, R = \emptyset$.
\FOR{$i = 1$ to $r$}
\STATE Let $S = [r]$
\FOR{$j = 1$ to $h$}
\IF{$|S\cap S_j| < |S|$ and $S\cap S_j\not\subseteq R$}
\STATE $S = S\cap S_j$
\ENDIF
\ENDFOR
\STATE $R = R\cup S$
\STATE Add $S$ to $P$ if $|S| = 1$.
\ENDFOR
\end{algorithmic}
\caption{Finding Intersection}\label{alg:findintersectmeta}
\end{algorithm}

This problem is simple if we know the subsets of $W^j$ in each facet. However, since what we really have access to are subspaces, it is impossible to identify the vertices unless we have a subspace of dimension 1. On the other hand, we can perform intersection and linear-span for the subspaces, which correspond to intersection and union  for the sets. We also know the size of a set by looking at the dimension of the subspace. 
The main challenge here is that we cannot afford to enumerate all the possible combinations of the sets, and also there are vertices that are not intersection vertices and they may or may not appear in the sets we have.
The idea of the algorithm is to keep vertices that we have already found in $R$, and try to avoid finding the same vertices by making sure $S$ is never a subset of $R$. We show after every inner-loop one of the two cases can happen: in the first case we find an element in $P$; in the second case $S$ is a set that satisfies $(S\backslash R)\cap P = \emptyset$, so by adding $S$ to $R$ we remove some of the vertices that are not in $P$. Since the size of $R$ increases by at least $1$ in every iteration until $R = [r]$, the algorithm always ends in $r$ iterations and finds all the vertices in $P$.
In practice, we implement all the set operations in \ref{alg:findintersectmeta} using the analogous subspace operations (see Algorithm 6 in Appendix). We prove the following :

\begin{theorem}
\label{thm:findintersection}
When $W$ is $\alpha$-robust and $\epsilon_S < o(\alpha^3/r^{2.5})$, Algorithm 6 finds all the intersection vertices of $W$, with error at most $\epsilon_v = 4r^{1.5}\epsilon_S/\alpha$. 
\end{theorem}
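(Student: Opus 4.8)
The plan is to prove the statement in three stages: (i) show the combinatorial meta‑algorithm \ref{alg:findintersectmeta} is correct when all set operations are exact, (ii) establish a quantitative ``transversality'' lemma that converts $\alpha$‑robustness of $W$ into controlled behaviour of the subspace operations, and (iii) push the $\epsilon_S$‑error through the subspace realization of the algorithm, making sure it does not compound with the number of operations.

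\textbf{Stage (i): correctness of the combinatorial algorithm.} Since $W$ is $\alpha$‑robust it has full row rank ($\sigma_r(W)\ge\alpha>0$), so the rows $W^1,\dots,W^r$ are linearly independent. Consequently, for all $A,B\subseteq[r]$ we have $\dim Q_A=|A|$ and $Q_A\cap Q_B=Q_{A\cap B}$, i.e.\ subspace intersection, span, and dimension are exact proxies for set intersection, union, and cardinality. I would then formalize the loop invariant sketched after the algorithm: after each outer iteration, either the returned $S$ is a singleton (hence an intersection vertex of $[r]$), or it satisfies $(S\setminus R)\cap P=\emptyset$, so adding $S$ to $R$ never discards an element of $P$; moreover $|R|$ strictly increases each iteration (the final $S$ is never contained in the old $R$). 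Hence the algorithm halts in $\le r$ iterations, and because the inner loop declines to intersect with $S_j$ only when $S\cap S_j\subseteq R$, every intersection vertex not yet in $R$ is eventually isolated. This part is routine bookkeeping; the only subtlety is checking that the guard ``$S\cap S_j\not\subseteq R$'' is exactly what preserves the invariant.

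\textbf{Stage (ii): the transversality lemma.} Using that any row‑submatrix $W_S$ satisfies $\sigma_{\min}(W_S)\ge\sigma_r(W)\ge\alpha$ (pad with zeros) and $\sigma_1(W)\le\|W\|_F\le\sqrt r$ (rows have norm $\le1$), I would prove: if a unit vector $x$ lies within Euclidean distance $\eta$ of $Q_S$ for every $S$ in a collection with intersection $T$, then $\operatorname{dist}(x,Q_T)=O(r\eta/\alpha)$. The argument: write $P_{Q_S}x=\sum_{j\in S}c^{(S)}_jW^j$; pairwise $\|W^{T}(c^{(S)}-c^{(S')})\|\le2\eta$ forces $\|c^{(S)}-c^{(S')}\|\le 2\eta/\alpha$; fixing a reference $c^{(S_0)}$, each coordinate $j\notin T$ is zeroed by some $S\not\ni j$, so $\|c^{(S_0)}|_{[r]\setminus T}\|\le 2\sqrt r\,\eta/\alpha$, and deleting those coordinates moves $x$ by at most $\sigma_1(W)\cdot2\sqrt r\,\eta/\alpha=O(r\eta/\alpha)$ into $Q_T$. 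Two corollaries: (a) for $x\perp Q_T$ unit, $\max_S\|P_{Q_S^\perp}x\|\ge\Omega(\alpha/r)$, so the stacked operator $B=(P_{Q_{S_k}^\perp})_k$ over a collection of $\le r$ facets has smallest nonzero singular value $\ge\Omega(\alpha/r)$; and (b) distinct vertex‑index sets $A\ne B$ give $\|P_{Q_A^\perp}Q_B\|\ge\alpha$ (any $W^j$, $j\in B\setminus A$, is at distance $\ge\sigma_r(W)\ge\alpha$ from $Q_A$), and similarly $\|W^i\|\ge\sigma_r(W)\ge\alpha$. These are the margins that make noisy subspace tests reliable.

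\textbf{Stage (iii): error tracking — the main obstacle.} A naive sequential analysis, ``$\hat S\leftarrow\hat S\cap\hat Q_{S_j}$ loses a factor $O(r/\alpha)$ per step,'' gives $(r/\alpha)^{r}\epsilon_S$, which is useless, so the key point is to avoid iterated fuzzy intersections. Since Algorithm 6's inner loop also records \emph{which} facets $S_{j_1},\dots,S_{j_m}$ ($m\le r$) were intersected to produce the current set, I would (re)define the attached subspace in one shot as the small‑singular‑value subspace of the stacked matrix $\hat B=(P_{\hat Q_{S_{j_k}}^\perp})_k$, of dimension $|T|$ read off from the spectral gap (valid by Stage (ii)). Since $\|\hat B-B\|\le\sqrt m\,\epsilon_S\le\sqrt r\,\epsilon_S$ and $\sigma_{\min,+}(B)\ge\Omega(\alpha/r)$ (corollary (a)), Davis–Kahan/Wedin gives $\operatorname{dist}(\hat S,Q_T)=O(r^{1.5}\epsilon_S/\alpha)$ — no dependence on $n$, $m$, $h$, or the number of iterations. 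Specializing to a minimal collection with $\cap_kS_{j_k}=\{i\}$, this yields a unit vector $O(r^{1.5}\epsilon_S/\alpha)$‑close to $\operatorname{span}(W^i)$; rescaling via $\|W^i\|\ge\alpha$ gives $\|\hat W^i-W^i\|\le 4r^{1.5}\epsilon_S/\alpha=\epsilon_v$. It remains to verify that, under $\epsilon_S<o(\alpha^3/r^{2.5})$, the noisy run of Algorithm 6 makes exactly the same branch decisions as the exact meta‑algorithm: each dimension test has gap $\sim\alpha/r$, each ``distinct vs.\ equal subspace'' test has gap $\ge\alpha$, and the containment test ``$S\cap S_j\subseteq R$'' compares the current subspace against the span of up to $r$ already‑recovered (hence noisy) vertices, which is the binding constraint and is where the $\operatorname{poly}(r)$ factors in the threshold $o(\alpha^3/r^{2.5})$ come from; I would recompute those factors carefully, but all the ingredients are the margins from Stage (ii) versus the amplified error $O(r^{1.5}\epsilon_S/\alpha)$. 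The hardest part is precisely this Stage‑(iii) re‑derivation: one must ensure the algorithm genuinely retains the list of constituent facets for each running set and that every spectral gap used to read off dimensions and containments exceeds the amplified noise.
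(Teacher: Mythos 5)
Your proposal matches the paper's proof in all essentials: the paper likewise establishes the combinatorial invariant for the meta-algorithm, proves that the stacked matrix of orthogonal complements $[Q_1^\perp,\dots,Q_t^\perp]$ has null space $\mathrm{span}\{W^i: i\in \cap_j S_j\}$ with smallest nonzero singular value $\ge \alpha/\sqrt{r}$ (its proof uses a block-diagonal construction with disjoint index sets rather than your coefficient-vector argument, but to the same effect), and then applies Wedin's theorem once to the stacked noisy matrix so the error $\epsilon_v = O(r^{1.5}\epsilon_S/\alpha)$ does not compound over iterations. The branch-decision margins you flag as the binding constraint (the $\|P_{\Gamma^\perp}Z\| > \alpha/2$ test against the accumulated span $\Gamma$, which carries error $\epsilon_Y = 2r\epsilon_v/\alpha$) are exactly where the paper's condition $\epsilon_S < o(\alpha^3/r^{2.5})$ is consumed, so your outline is correct and takes essentially the same route.
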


\paragraph{Finding the remaining vertices}
\begin{algorithm}
\begin{algorithmic}
\INPUT matrix $\hat{M}$, intersection vertices $\hat{W}^1,...,\hat{W}^{|P|}$.
\OUTPUT remaining vertices $\hat{W}^{|P|+1},...,\hat{W}^r$.
\FOR{$i = |P|+1$ TO $r$}
\STATE Let $Q = \mbox{span}\{\hat{W}^1,...,\hat{W}^{i-1}\}$.
\STATE Pick the point $\hat{M}^j$ with largest $\|P_{Q^\perp} \hat{M}^j\|$, let $\hat{W}^i = \hat{M}^j$.
\ENDFOR
\end{algorithmic}
\caption{Finding remaining vertices} \label{alg:findremaining}
\end{algorithm}

The remaining vertices correspond to singleton sets in subset-separable assumption. They appear in rows of $M$. The situation is very similar to the separable NMF and we use an algorithm from \cite{AroraEtAl_icml13} to find the remaining vertices. For completeness we describe the algorithm here.
By Lemma 4.5 in \cite{AroraEtAl_icml13} we directly get the following theorem:

\begin{theorem}
\label{thm:findremaining}
If vertices already found have accuracy $\epsilon_v$ such that $\epsilon_v \le \alpha/20r$, Algorithm~\ref{alg:findremaining} outputs the remaining vertices with accuracy $O(\epsilon/\alpha^2) < \epsilon_v$.
\end{theorem}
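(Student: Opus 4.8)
Algorithm~\ref{alg:findremaining} is exactly the greedy ``successive projection'' rule for recovering the vertices of a robust simplex from points lying near it, so the plan is to reduce the statement to Lemma~4.5 of \cite{AroraEtAl_icml13} rather than redo its analysis; almost all of the work is setting up the reduction. First I would argue that every vertex not in $P$ is an \emph{anchor}. If $j\notin P$ then $\{j\}$ is not obtainable as an intersection of the non-singleton properly filled facets, yet by subset-separability $\{j\}$ is an intersection of \emph{some} of the $S_i$; any such intersection that involves a singleton $\{j'\}$ is contained in $\{j'\}$, so it equals $\{j\}$ only when $j'=j$, forcing $\{j\}=S_i$ for some $i$ in the collection. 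By the singleton case of the definition of a filled facet there is then a row $M^{i^*}=W^j$, hence $\hat M$ has a row within $\epsilon$ of $W^j$. Restricted to the $r-|P|$ remaining coordinates we are therefore in precisely the separable-NMF-with-noise setting of Lemma~4.5: every row of $\hat M$ lies within $\epsilon$ of $\mathrm{conv}\{W^1,\dots,W^r\}$, the matrix $W$ is $\alpha$-robust, and each unfound vertex appears (up to $\epsilon$) among the rows of $\hat M$.

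Next I would run the obvious induction on the loop index to certify the accuracy precondition of Lemma~4.5 at the start of each iteration. Entering iteration $|P|+1$, the vertices in hand are the intersection vertices, of accuracy $\epsilon_v\le\alpha/20r$. Entering a later iteration, the vertices in hand are those intersection vertices together with anchors produced in earlier rounds of this loop, each of accuracy $O(\epsilon/\alpha^2)$ by the earlier applications of Lemma~4.5; since $O(\epsilon/\alpha^2)<\epsilon_v\le\alpha/20r$, all of them are within $\alpha/20r$ of distinct true vertices, which is exactly the precondition of Lemma~4.5, so it again guarantees that the point $\hat M^j$ maximizing $\|P_{Q^\perp}\hat M^j\|$ (with $Q$ the span of the vertices found so far) is within $O(\epsilon/\alpha^2)$ of a fresh vertex. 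As the loop runs at most $r$ times and the per-step error $O(\epsilon/\alpha^2)$ never climbs above the tolerance $\alpha/20r$, there is no error amplification, and the final bound $O(\epsilon/\alpha^2)<\epsilon_v$ holds for all remaining vertices uniformly.

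There is no deep obstacle here: the reduction in the first paragraph and the inductive bookkeeping in the second are the entire argument, and the point to be careful about is simply that both kinds of previously found vertices ($\epsilon_v$ for the intersection vertices, $O(\epsilon/\alpha^2)$ for anchors) lie below the $\alpha/20r$ tolerance that Lemma~4.5 requires — so that the output accuracy ends up depending on the noise $\epsilon$ alone and not on $\epsilon_v$. Internally, Lemma~4.5 is the standard successive-projection analysis (all data points near $\mathrm{conv}\{W^1,\dots,W^r\}$; the residual $\|P_{Q^\perp}\,\cdot\,\|$ maximized near a true vertex; $\alpha$-robustness supplying the gap that forces the maximizing point to be $O(\epsilon/\alpha^2)$-concentrated on a single unfound vertex), which I would cite rather than reproduce.
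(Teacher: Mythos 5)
Your proposal is correct and follows essentially the same route as the paper: reduce to Lemma~4.5 of \cite{AroraEtAl_icml13} and induct over the loop, noting that each step's error $O(\epsilon/\alpha^2)$ depends only on the noise and stays below the $\alpha/20r$ tolerance, so there is no amplification. You additionally (and usefully) justify why every non-intersection vertex must appear as a row of $M$, which the paper only asserts; the one thing the paper checks that you gloss over is that its singular-value notion of $\alpha$-robustness implies the vertex-to-span-of-others robustness that Lemma~4.5 actually requires, a one-line observation worth including.
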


\paragraph{Running time.} Face-Intersect (Algorithm 1) has 3 parts: find facets (Algorithm 3), find intersections (Algorithm 4) and find remaining anchors (Algorithm 5). We discuss the runtime of each part. We first do dimension reduction to map the $n$ points to an $r$-dimensional subspace to improve the running time of later steps. The dimension reduction takes $O(nmr)$ time, where $n,m$ are the number of rows and columns of $M$, respectively, and $r$ is the rank of the factorization.
Algorithm 3's runtime is $O(nd \cdot \mbox{OPT})$, where $d$ is the max dimension of properly filled facets (typically $d < r \ll m$). OPT is the time to solve the convex optimization problem in Algorithm 2. OPT is essentially equivalent to solving an LP with $n$ nonnegative variables and $r+d$ constraints. 
Algorithm 4's runtime is $O(k r^4)$ where $k$ is the number of properly filled facets; typically $k \ll n$.
Algorithm 5's runtime is $O(n r^3)$. The overall runtime of Face-Intersect is $O(mnr + nd\cdot  \mbox{OPT} + k r^4 + n r^3)$. Calling the OPT routine is the most expensive part of the algorithm. Empirically, we find that the algorithm converges after $\sim k \ll nd$ calls to OPT.

\section{Generative model of NMF naturally creates properly filled facets}
\label{sec:genmain}
To better understand the generality of our approach, we analyzed a simple generative model of subset-separable NMFs and showed that properly filled facets naturally arise with high probability. 

\paragraph{Generative Model}
Given a simplex $W$ that is $\alpha$-robust and a subset of facets $S_1,S_2,...,S_k$ that is subset separable. Let $p_i$ be the probability associated with facet $i$, and let $p_{min} = \min_{i\le k} p_i$ and $d = \max_{i\in [k]}|S_i|$. 
For convenience, denote $S_0 = [r]$ and $p_0 = 1 - \sum_{i=1}^k p_i$. To generate a sample, first sample facet $S_i$ with probability $p_i$, and then uniformly randomly sample a point within the convex hull of the points $\{W^j:j\in S_i\}$. 
Here we think of $d$ as a small constant or $O((\log n)/\log\log n)$ (in general $d$ can be much smaller than $r$). For example, separability assumption implies $d = 1$, and it is already nontrivial when $d = 2$.

\begin{theorem}
\label{thm:model}
Given $n = \Omega(\max\{(4d)^d\log (d/\eta), kr^2 \log (d/p_{min}\eta)\}/p_{min})$ samples from the model, with high probability the facets $S_1, ..., S_k$ are $(p_{min}n/2, 200r^{1.5}/p_{min}\alpha, \alpha^2/16d)$ properly filled.
\end{theorem}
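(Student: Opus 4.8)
The plan is to verify, with the stated parameters $N=p_{min}n/2$, $\gamma=\alpha^2/16d$ and $H=200r^{1.5}/(p_{min}\alpha)$, the three conditions of Definition~\ref{def:robust} one at a time. Throughout I partition the $n$ samples by which facet $S_i$ (with the convention $S_0=[r]$) generated them; since the type of a sample is $i$ with probability $p_i\ge p_{min}$ independently, a Chernoff bound gives, for the stated $n$ and with probability $1-\eta$, that every facet $S_i$ ($i\le k$) receives at least $p_in/2\ge N$ samples. Because a type-$i$ sample lands in the relative interior of $\mathrm{conv}(W^{S_i})$ with probability one, its support is exactly $S_i$, and this immediately gives Condition~2.

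\textbf{Condition 1.} Fix a non-singleton $S_i$ with $|S_i|=s\le d$ and write each of its $\ge N$ samples as $\sum_{j\in S_i}\lambda_j W^j$ with $\lambda$ uniform on the probability simplex. Choosing the weights $w_j$ to be (approximately) uniform over the type-$i$ samples, the matrix $\sum_j w_j u^{(j)}(u^{(j)})^T$ concentrates around $W_{S_i}^T\,\mathbb{E}[\lambda\lambda^T]\,W_{S_i}$; by $\alpha$-robustness $\sigma_{\min}(W_{S_i})\ge\sigma_r(W)\ge\alpha$, and a direct computation of the Dirichlet second moments shows this matrix has rank exactly $s$ with smallest nonzero singular value at least $\gamma$. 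To satisfy the requirement that the center $M^{i^*}$ be an actual sample point lying in the convex hull of the others, I take $M^{i^*}$ to be a sample within $O(1/d)$ of the centroid of $\mathrm{conv}(W^{S_i})$; witnessing such a point requires $\Omega((4d)^d\log(d/\eta))$ type-$i$ samples, which is the source of the first term in the sample bound, and the discrepancy between $M^{i^*}$ and the empirical mean of the type-$i$ points is absorbed by a tiny, still-nonnegative tilt of the weights $w_j$ that perturbs the second-moment matrix only negligibly.

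\textbf{Condition 3.} This is the main part, and it must hold for \emph{every} $t$-dimensional subspace $Q$ with $1<t<r$, so a uniform-convergence step comes first: the sets $B_\epsilon(Q)=\{x:\|P_{Q^\perp}x\|\le\epsilon\}$, ranging over all such $Q$, form a semialgebraic set system of VC dimension $O(r^2)$, so with $n=\Omega(kr^2\log(d/p_{min}\eta)/p_{min})$ — the second term — one gets, with high probability, that whenever $B_\epsilon(Q)$ contains at least $N$ samples the mixture measure $\nu=\sum_{i\ge 0}p_i\mu_i$ (with $\mu_i=$ uniform on $\mathrm{conv}(W^{S_i})$) satisfies $\nu(B_\epsilon(Q))\ge p_{min}/4$. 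I then bound each $\mu_i(B_\epsilon(Q))$. Setting $\theta_i=\|P_{Q^\perp}Q_i\|$ and letting $\hat u\in Q^\perp$ be the unit vector realizing this norm on $Q_i$, membership in $B_\epsilon(Q)$ confines $x$ to an $O(\epsilon)$-wide slab orthogonal to $\hat u$; bounding the one-dimensional marginal density of $\mu_i$ along $\hat u$ by $O(d/\mathrm{width}_i(\hat u))$, where $\mathrm{width}_i(\hat u)=\max_{j,\ell\in S_i}\langle W^j-W^\ell,\hat u\rangle\ge\Omega(\alpha\theta_i/\sqrt r)$ by $\alpha$-robustness, yields $\mu_i(B_\epsilon(Q))=O(\epsilon d\sqrt r/(\alpha\theta_i))$; the same slab argument applied to $i=0$, where $Q$ necessarily misses a direction in the affine hull of the full simplex (as $\dim Q<r$ and that hull avoids the origin), gives $\mu_0(B_\epsilon(Q))=O(\epsilon\,\mathrm{poly}(r)/\alpha)\ll p_{min}$ since $\epsilon$ is tiny. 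Hence some $i\ge 1$ has $\mu_i(B_\epsilon(Q))\ge p_{min}/8$, which forces $\theta_i=O(\epsilon d\sqrt r/(\alpha p_{min}))\le H\epsilon$ for the stated $H$ (using $d\le\sqrt r$ in the regime of interest); this $S_i$ is non-singleton — the $S_i$, $i\ge 1$, of the model are the non-singleton facets, and any anchor vertices present are handled separately, since a cluster of identical anchor points near $Q$ co-occurs with a genuine non-singleton facet close to $Q$ — which is exactly Condition~3.

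\textbf{Main obstacle.} Conditions~1 and~2 are essentially concentration arguments, the only mild subtlety being the $(4d)^d$ samples needed to witness a near-centroid point on every facet. The real work is Condition~3, and within it two points: (i) obtaining the uniform-over-$Q$ guarantee at a cost of only $\mathrm{poly}(r)$ in the log of the covering/VC complexity, so the sample complexity stays $\tilde O(kr^2/p_{min})$ rather than blowing up with the (infinite) Grassmannian; and (ii) the worst-case slab estimate, which hinges on the geometric lemma relating the principal angle $\theta_i$ between $Q$ and $Q_i$ to a lower bound, in terms of $\alpha$ and $\theta_i$, on the spread of the vertices $W^{S_i}$ along some direction of $Q^\perp$ — this is where the constants in $H$ and in the admissible noise level $\epsilon=o(\alpha^4\gamma/Hr^3)$ are pinned down.
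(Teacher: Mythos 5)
Your overall architecture (verify the three conditions of Definition~\ref{def:robust}; Chernoff for the facet counts; anti-concentration along a single direction orthogonal to $Q$ plus a union bound over a net/VC class of subspaces for Condition~3) matches the paper, and your Condition~3 argument is essentially the paper's Lemma~\ref{lem:subspace} in different clothing: the paper fixes a matrix $A$ with a unit-norm column, uses the $\mathrm{Beta}(1,d-1)$ marginal density of a uniform Dirichlet coordinate to get the per-point anti-concentration, and then takes an explicit $\epsilon$-net over such matrices; your VC/slab formulation is a reasonable substitute and the width-versus-principal-angle step corresponds to the paper's observation that $\|P_{Q^\perp}W^{S_i}\|\ge H\alpha\epsilon$ forces some column of the projected vertex matrix to have norm at least $H\alpha\epsilon/\sqrt{r}$.

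The genuine gap is in Condition~1. With near-uniform weights over the type-$i$ samples, the limiting matrix is $W_{S_i}^T\,\mathbb{E}[\lambda\lambda^T]\,W_{S_i}$ where, for $\lambda$ uniform on the $s$-simplex, $\mathbb{E}[\lambda\lambda^T]=\frac{1}{s(s+1)}(I+\mathbf{1}\mathbf{1}^T)$ has smallest eigenvalue $\frac{1}{s(s+1)}=\Theta(1/d^2)$; your ``direct computation of the Dirichlet second moments'' therefore certifies only $\gamma=\Theta(\alpha^2/d^2)$, not the claimed $\alpha^2/16d$ (the two differ once $d>15$). Worse, the ``tiny, still-nonnegative tilt'' needed to turn the chosen near-centroid sample into an \emph{exact} convex combination of the others moves total weight of order $\|M^{i^*}-\bar M\|=O(1/d)$, which perturbs the second-moment matrix in spectral norm by $O(1/d)$ --- the same order as (or larger than) the eigenvalue you are trying to preserve --- so ``negligibly'' is not justified at the relevant scale. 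The paper avoids both problems by a different construction (its Lemma~\ref{lem:simplex}): it waits for one sample in each of the $d$ corner regions $\{v: v_j\ge 1-1/4d\}$ and one in the central region $\{v:\min_j v_j\ge 1/2d\}$ (this is where the $(4d)^d$ enters), shows the central sample is an exact convex combination of the $d$ corner samples with every weight at least $1/4d$, and lower-bounds $\sigma_{\min}(V^TV)\ge 1/4$ by Gershgorin since the corner samples are nearly orthonormal in simplex coordinates; this yields $\sigma_{\min}\ge\frac{1}{4d}\cdot\frac14=\frac{1}{16d}$ exactly as stated. To repair your proof you would need to replace the uniform-weight combination by such a sparse, near-vertex-supported one. (Separately, your parenthetical dismissal of singleton facets in Condition~3 --- that a cluster of anchor copies near $Q$ ``co-occurs with'' a nearby non-singleton facet --- is not actually true; but the paper's own proof does not resolve this case either, so I do not count it against you.)
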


The proof relies on the following two lemmas. The first lemma shows that once we have enough points in a simplex, then there is a center point with high probability.

\begin{lemma}
\label{lem:simplex}
Given $n = \Omega((4d)^d\log d/\eta)$ uniform points $v^1,v^2,...,v^n$ in a standard $d$-dimensional simplex (with vertices $e_1,e_2,...,e_d$), with probability $1-\eta$ there exists a point $v_i$ such that $v_i = \sum_{j\ne i} w_j v^j$ ($w_j\ge 0,\sum_{j\ne i}w_j = 1$), and $\sigma_{min}(\sum_{j\ne i} w_j (v^j)(v^j)^T) \ge 1/16d$.
\end{lemma}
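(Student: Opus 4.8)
The plan is to show that among $n$ uniform points in the standard $d$-simplex $\Delta_d$, one point lands close to the barycenter while the remaining points ``surround'' the barycenter well enough in every direction. Concretely, I would first fix a small parameter $\rho$ (think $\rho \approx 1/(4d)$ scaled appropriately) and partition the work into two events: (i) some sample point $v^{i^\ast}$ lies within distance $\rho$ of the centroid $c = \frac1d\mathbf{1}$; and (ii) the point cloud $\{v^j : j \ne i^\ast\}$ is ``spread out'' in the sense that for every unit direction $u$ tangent to the affine hull of $\Delta_d$, a constant fraction of points have $\langle v^j - c, u\rangle$ bounded away from $0$ in absolute value, say at least $1/(cd)$ for an absolute constant. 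Event (i) holds with the stated probability because the centroid-centered ball of radius $\rho$ inside $\Delta_d$ has volume a $(\rho\sqrt d)^{\Theta(d)}$-fraction of the simplex (a ball of radius $\approx 1/d$ fits inside $\Delta_d$), so $n = \Omega((4d)^d \log(d/\eta))$ samples hit it except with probability $\eta/2$ by a union/coupon argument. Event (ii) is a uniform covering-number statement over the sphere of directions: for a fixed $u$, $\mathbb{E}[\langle v - c, u\rangle^2] = \Omega(1/d^2)$ and the random variable is bounded, so by a Chernoff bound a $\Theta(1)$ fraction of the $n$ points satisfy $|\langle v^j - c, u\rangle| \ge \Omega(1/d)$; then take a net over directions of size $(1/\delta)^{O(d)}$ and union bound, again absorbing into $\eta/2$ given the sample complexity.

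Granting both events, I would then construct the convex combination explicitly. Since $v^{i^\ast}$ is $\rho$-close to $c$, it suffices to write $c$ (approximately) as a convex combination $\sum_{j \ne i^\ast} w_j v^j$ with a large smallest singular value of $\sum_j w_j v^j (v^j)^T$, and then transfer the small error. For the construction I would use a near-uniform weighting: pick the weights $w_j$ supported on a subset of points chosen so that, along each axis direction $e_t - c$ of the tangent space, both ``sides'' are represented — this is exactly what event (ii) buys. A clean way is to take the empirical average over all $j \ne i^\ast$, which by the law of large numbers (concentration) is within $O(\rho + \text{small})$ of $c$ and whose second-moment matrix $\frac{1}{n-1}\sum_j v^j (v^j)^T$ concentrates around the population second-moment matrix $\Sigma = \mathbb{E}[v v^T]$ of the uniform distribution on $\Delta_d$. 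A direct computation gives $\Sigma = \frac{1}{d(d+1)}(I + \mathbf 1\mathbf 1^T)$ on the simplex coordinates, whose eigenvalues are $\frac{2}{d+1}$ (once) and $\frac{1}{d(d+1)}$ (with multiplicity $d-1$); in particular $\sigma_{\min}(\Sigma) = \frac{1}{d(d+1)} \ge \frac{1}{2d^2}$. This is too small by a factor of $d$ relative to the target $1/(16d)$, so I would instead not use the plain empirical mean but a reweighting that pushes mass toward the vertices: weights proportional to a function that upweights points near the corners, which boosts the relevant singular value to $\Omega(1/d)$ while keeping the mean within $O(\rho)$ of $c$ by symmetry. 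The matrix Chernoff / matrix Bernstein inequality then controls the deviation of the reweighted second-moment matrix from its expectation, so $\sigma_{\min} \ge 1/(16d)$ holds with the claimed probability.

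Finally I would patch the gap between $c$ and the actual sample point $v^{i^\ast}$. Because $v^{i^\ast} = c + \delta$ with $\|\delta\| \le \rho$, and the convex-combination weights $\{w_j\}$ realize a point $\bar v = \sum_j w_j v^j$ with $\|\bar v - c\|$ also $O(\rho)$, I would correct the combination by moving a bounded amount of weight between two points whose difference has a component along $v^{i^\ast} - \bar v$ — feasible again by event (ii) — incurring only an $O(d\rho)$ perturbation of the second-moment matrix in spectral norm, by a standard perturbation bound $\|\sum w_j v^j v^j^T - \sum w'_j v^j v^j^T\| \le \|w - w'\|_1 \max_j \|v^j\|^2$. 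Choosing $\rho$ a sufficiently small constant multiple of $1/d^2$ keeps this perturbation below, say, $1/(32d)$, so Weyl's inequality preserves $\sigma_{\min} \ge 1/(16d)$ for the final combination representing $v^{i^\ast}$ exactly.

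The main obstacle I anticipate is the quantitative singular-value bound: the naive uniform/empirical weighting only delivers $\sigma_{\min} = \Theta(1/d^2)$, whereas the lemma claims $\Omega(1/d)$, so the proof genuinely needs the reweighting-toward-vertices idea (or an equivalent argument exploiting that a well-chosen convex combination of near-vertex points has a much flatter spectrum than the uniform measure), together with verifying that this reweighting is still achievable as a convex combination of the $n$ random points and still keeps its mean pinned near the centroid. Getting the constants to line up ($16d$ versus the losses from the net, the perturbation patch, and the concentration slack) is routine once the right weighting is identified, but identifying it and proving its second-moment matrix is well-conditioned is the crux.
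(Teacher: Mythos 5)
There is a genuine gap: you correctly diagnose that the empirical/uniform weighting only yields $\sigma_{\min}=\Theta(1/d^2)$ and that the weights must be pushed toward the vertices, but the proposal stops exactly at that point --- you write that ``identifying it and proving its second-moment matrix is well-conditioned is the crux,'' and no such reweighting is ever constructed or analyzed. Since the entire quantitative content of the lemma is the $1/(16d)$ bound, deferring this step means the proof is not actually given. The paper's resolution is the extreme form of your reweighting idea, and it is elementary: with $n=\Omega((4d)^d\log(d/\eta))$ samples one finds, for each $j\in[d]$, a sample with $v_j\ge 1-1/4d$ (these corner regions have volume fraction $(4d)^{-(d-1)}$, which is where the $(4d)^d$ sample complexity comes from), plus one sample $v^0$ with all coordinates at least $1/2d$. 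The Gram matrix of the $d$ near-vertex points is diagonally dominant, so Gershgorin gives $\sigma_{\min}(V^TV)\ge 1/4$; a direct coordinate computation shows the barycentric weights of $v^0$ with respect to these $d$ points are all in $[1/4d,1]$, and hence $\sigma_{\min}(\sum_j w_j v^j (v^j)^T)\ge \frac{1}{4d}\cdot\frac14=\frac{1}{16d}$. All weight sits on $d$ points only, and $v^0$ is represented exactly, so no ``patching'' step is needed.

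Two secondary problems with your plan as written. First, requiring a sample within distance $\rho=O(1/d^2)$ of the centroid is too demanding for the stated sample complexity: the corresponding ball has volume fraction roughly $(\rho\sqrt d)^{d-1}=d^{-\Theta(1.5 d)}$, which for large $d$ is smaller than $(4d)^{-d}$, so $n=\Omega((4d)^d\log(d/\eta))$ samples need not hit it; the paper instead only asks for a point with every coordinate at least $1/2d$, a set of volume fraction $2^{-(d-1)}$. Second, your event (ii) (a constant fraction of points with $|\langle v^j-c,u\rangle|\ge\Omega(1/d)$ in every direction) can only ever certify a centered second moment of order $1/d^2$, so it cannot by itself supply the $\Omega(1/d)$ bound; the useful event is the much stronger one that some sample lies within $\ell_\infty$-distance $O(1/d)$ of \emph{each vertex}. (Also, your stated top eigenvalue $\tfrac{2}{d+1}$ of $\Sigma=\tfrac{1}{d(d+1)}(I+\mathbf 1\mathbf 1^T)$ equals its trace and is inconsistent with the other $d-1$ eigenvalues being nonzero; the correct value is $1/d$. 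This does not affect your argument, since the relevant quantity $\sigma_{\min}(\Sigma)=\tfrac{1}{d(d+1)}$ is right.)
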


The next lemma shows unless a subspace contains a properly filled facet, it cannot contain too many points in its neighborhood.

\begin{lemma}
\label{lem:subspace}
Given $n =  \Omega(d^2\log (d/p_{min}\eta)/p_{min})$ uniform points $v^1,v^2,...,v^n$ in a standard $d$-dimensional simplex (with vertices $e_1,e_2,...,e_d$), with probability $1-\eta$ for all matrices $A$ whose largest column norm is equal to $1$, there are at most $p_{min}n/4$ points with $\|Av^i\| \le p_{min}/200d$.
\end{lemma}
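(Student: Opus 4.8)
\textbf{Proof proposal for Lemma~\ref{lem:subspace}.}
Write $\Delta=\mathrm{conv}(e_1,\dots,e_d)$ for the simplex and $\tau=p_{min}/200d$ for the threshold. My plan has three parts: (i) reduce an arbitrary matrix $A$ to a single linear functional, i.e.\ to a ``slab''; (ii) prove a dimension‑dependent anti‑concentration bound for $\langle b,v\rangle$ with $v$ uniform on $\Delta$; (iii) make that bound uniform over all admissible $A$ by an $\varepsilon$‑net together with a Chernoff/union bound.

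\emph{Reduction to slabs.} Fix $A$ with columns $a_1,\dots,a_d$ and $\max_j\|a_j\|=1$, and pick $j^*$ with $\|a_{j^*}\|=1$. Put $b=A^{T}a_{j^*}\in\R^d$; then $b_{j^*}=\|a_{j^*}\|^2=1$ and $|b_j|=|\langle a_{j^*},a_j\rangle|\le\|a_j\|\le 1$, so $\|b\|_\infty=1$, and for every $v\in\Delta$
\[
\|Av\|\ \ge\ |\langle a_{j^*},Av\rangle|\ =\ |\langle b,v\rangle| .
\]
Hence $\{\,i:\|Av^i\|\le\tau\,\}\subseteq\{\,i:|\langle b,v^i\rangle|\le\tau\,\}$, and it is enough to bound, uniformly over all $b$ with $\|b\|_\infty=1$, the number of sample points in the slab $\{v\in\Delta:|\langle b,v\rangle|\le\tau\}$. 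Note the number of rows of $A$ is irrelevant after this step.

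\emph{Anti‑concentration.} The key claim is: for every $b$ with $\|b\|_\infty=1$ and every $t\le 1/(8d)$,
\[
\Pr_{v\sim\mathrm{Unif}(\Delta)}\bigl[\,|\langle b,v\rangle|\le t\,\bigr]\ \le\ 5\,d\,t .
\]
WLOG $b_{j^*}=1$ (a sign flip leaves $|\langle b,v\rangle|$ unchanged) and $d\ge 2$ (for $d=1$, $\langle b,v\rangle\equiv1$). If $\min_j b_j>t$, then $\langle b,v\rangle\ge\min_j b_j>t\ge 0$ surely and the probability is $0$. Otherwise let $j'$ attain $b_{j'}=\min_j b_j\le t$, and condition on $\rho:=v_{j^*}+v_{j'}$ and on the coordinates $(v_j)_{j\notin\{j^*,j'\}}$. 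Then $v_{j^*}$ is uniform on $[0,\rho]$ and $\langle b,v\rangle=(1-b_{j'})v_{j^*}+C$ for a constant $C$ fixed by the conditioning, so $\langle b,v\rangle$ is conditionally uniform on an interval of length $(1-b_{j'})\rho\ge\rho/2$ (using $b_{j'}\le t\le 1/2$); hence the conditional probability of $|\langle b,v\rangle|\le t$ is at most $\min\{1,4t/\rho\}$. Since $\rho\sim\mathrm{Beta}(2,d-2)$ for $d\ge3$ (and $\rho\equiv1$ for $d=2$), with density $(d-1)(d-2)\rho(1-\rho)^{d-3}$, a short computation (splitting the integral at $\rho=4t$ and using $t\le 1/(8d)$) gives $\mathbb{E}[\min\{1,4t/\rho\}]\le 5dt$, proving the claim.

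\emph{Uniformity via a net.} The cube $\{b:\|b\|_\infty\le1\}$ has a $(\tau/10)$‑net $\mathcal N$ in $\ell_\infty$ with $|\mathcal N|\le(30/\tau)^d$. For each $b_0\in\mathcal N$ the indicators $\mathbb{1}\{|\langle b_0,v^i\rangle|\le 2\tau\}$, $i\in[n]$, are i.i.d.\ Bernoulli with mean at most $5d\cdot2\tau=p_{min}/20$ (apply the claim with $t=2\tau\le 1/(8d)$, valid since $p_{min}\le1$), so a Chernoff bound gives $\Pr\bigl[\sum_i\mathbb{1}\{|\langle b_0,v^i\rangle|\le 2\tau\}>p_{min}n/4\bigr]\le e^{-c\,p_{min}n}$ for an absolute $c>0$. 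A union bound over $\mathcal N$, using $\log|\mathcal N|=O(d\log(d/p_{min}))$, drives the total failure probability below $\eta$ once $n=\Omega\bigl((d\log(d/p_{min})+\log(1/\eta))/p_{min}\bigr)$, which is implied by the hypothesis $n=\Omega\bigl(d^2\log(d/p_{min}\eta)/p_{min}\bigr)$. On the good event, given any admissible $A$ take $b=A^{T}a_{j^*}$ and $b_0\in\mathcal N$ with $\|b-b_0\|_\infty\le\tau/10$; since $\|v^i\|_1=1$, $|\langle b,v^i\rangle|\le\tau$ forces $|\langle b_0,v^i\rangle|\le 1.1\tau\le 2\tau$, whence
\[
\#\{i:\|Av^i\|\le\tau\}\ \le\ \#\{i:|\langle b,v^i\rangle|\le\tau\}\ \le\ \#\{i:|\langle b_0,v^i\rangle|\le 2\tau\}\ \le\ p_{min}n/4,
\]
which is the assertion of the lemma.

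\emph{Main obstacle.} The substance is the anti‑concentration estimate: ruling out that some coefficient vector $b$ on the $\ell_\infty$‑unit sphere makes $\langle b,v\rangle$ pile up near $0$. Vectors with many coordinates close to $\pm1$ push $\langle b,v\rangle$ away from $0$ and are harmless; the work is for $b$ with a genuine spread of coordinate values, where $\langle b,v\rangle$ is not affine in any single $v_j$. The device of freezing all coordinates except $v_{j^*}$ and one partner $v_{j'}$ (and their sum $\rho$) restores affineness and produces a uniform perturbation whose slope $1-b_{j'}$ is bounded below precisely once we are in the relevant case $b_{j'}\le t$. The second, quantitative point is that the anti‑concentration constant must be only linear in $d$ — the density of $\rho$ vanishes only to first order at the origin — so that with $\tau\asymp1/d$ the population mass of each slab stays $\ll p_{min}$ and the union bound over the net still closes; the generous constant $200$ in the lemma statement leaves ample room for this.
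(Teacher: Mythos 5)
Your proof is correct and follows the same high-level template as the paper's (reduce $\|Av\|$ to a single linear functional via the unit-norm column, prove anti-concentration of that functional under the uniform distribution on the simplex, then combine an $\varepsilon$-net with Chernoff and a union bound), but it differs in two worthwhile ways. First, your net lives on the cube $\{b:\|b\|_\infty\le 1\}\subset\R^d$ of reduced vectors, whereas the paper nets over the $r\times d$ matrices $A$ themselves, giving a net of size $\exp(O(rd\log(d/p_{min})))$ and hence a sample requirement $n=\Omega(rd\log(\cdot)/p_{min})$; your observation that the number of rows of $A$ is irrelevant after the reduction yields $\log|\mathcal N|=O(d\log(d/p_{min}))$ and a strictly better sample complexity than either version of the bound the paper states. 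Second, for the anti-concentration you condition on all coordinates outside $\{j^*,j'\}$ and integrate over $\rho\sim\mathrm{Beta}(2,d-2)$, while the paper conditions on $q=u^TAv_{-i}/(1-v_i)$ and uses only that $v_{j^*}\sim\mathrm{Beta}(1,d-1)$ has density $\le d-1$; the paper's version avoids the extra integral, but it silently relies on the same case split you make explicit (the slope $1-q$, resp.\ $1-b_{j'}$, is only bounded below because the bad event forces $q$, resp.\ $b_{j'}$, to be at most $t$), so your write-up is actually the more complete of the two. One loose end to tie up: your anti-concentration claim is stated for $\|b\|_\infty=1$, but you apply it to net points $b_0$ that need only satisfy $\|b_0\|_\infty\ge 1-\tau/10$ (and the union bound as written ranges over all of $\mathcal N$, including vectors of small norm for which the claim is false). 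Restrict the union bound to net points with $\|b_0\|_\infty\ge 1/2$, say, and note that the claim's proof goes through verbatim with $b_{j^*}\ge 1/2$ at the cost of a factor of $2$ in the constant; with that adjustment the argument is complete.
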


 \section{Experiments}
 
\begin{figure*}[t!]
\centering
\includegraphics[trim=0cm 0cm 1cm 0cm, width=0.8\textwidth]{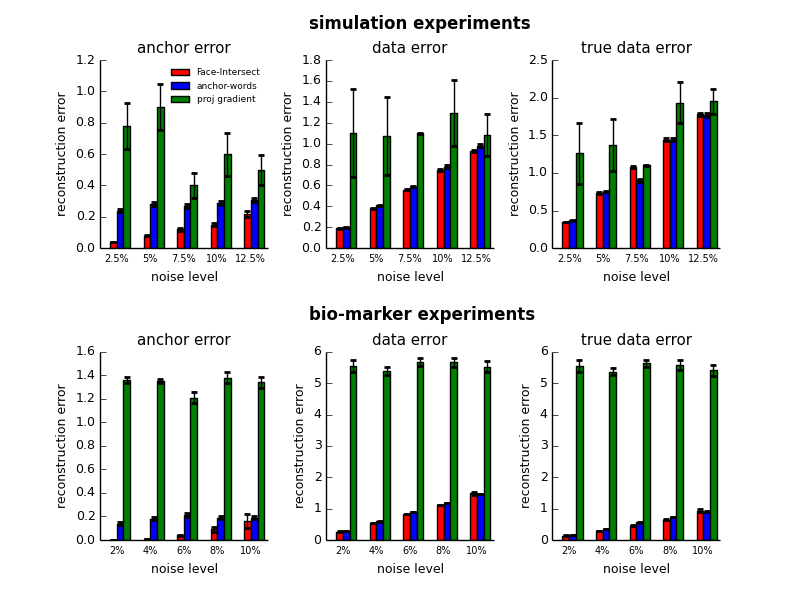}
\vspace{-0.7cm}
\caption{Reconstruction accuracy of the three NMF algorithm as a function of data noise. Standard error shown in the error bars. }\label{fig:plot}
\end{figure*} 
 
 While our algorithm has strong theoretical guarantees, we additionally performed proof-of-concept experiments to show that when the noise is relatively small, our algorithm can outperform the state-of-art NMF algorithms. 
 We simulated data according to the generative NMF model described in Section 7. We first randomly select $r$ non-negative vectors in $\mathbb{R}^m$ as rows of the $W$ matrix. We grouped the vertices $W^j$ into $r$ groups, $S_1, ..., S_r$ of three elements each, such that each vertex is the unique intersection of two groups. Each $S_i$ then corresponds to a 2-dim facet. To generate the $A$ matrix, for each $S_i$, we randomly sampled $n_1$ rows of $A$ with support $S_i$, where each entry is an i.i.d. from $\mbox{Unif}(0,1)$. An additional $n_2$ rows of $A$ were sampled with full support. These correspond to points in the interior of the simplex. We tested a range of settings with $m$ between 5 to 100, $r$ between 3 to 10, and $n_1$ and $n_2$ between 100 and 500. We generated the true data as $M = AW$ and added i.i.d. Gaussian noise to each entry of $M$ to generate the observed data $\tilde{M}$. 
 
 There are many algorithms for solving NMF, most of them are either iterative algorithms that have no guarantees, or algorithms that work only under separability condition. We choose two typical algorithms: the Anchor-Words algorithm\citep{AroraEtAl_icml13} for separable NMF, and Projected Gradient \citep{lin2007projected} for iterative algorithms. 
 For each simulated NMF, we evaluated the output factors $\hat{A}, \hat{W}$ of these algorithms on three criteria: accuracy of the reconstructed anchors to the true anchors, $||W - \hat{W}||_2$; accuracy of the reconstructed data matrix to the \textit{observed} data, $||\tilde{M} - \hat{A}\hat{W}||_2$; accuracy of the reconstructed data to the \textit{true} data, $||M - \hat{A}\hat{W}||_2$. 
 In Figure~\ref{fig:plot}, we show the results for the three methods under the setting $n_1 = 100, n_2 = 100, m = 10, r = 5$. We grouped the results by the noise level of the experiment, which is defined to be the ratio of the average magnitude of the noise vectors to the average magnitude of the data points in $\mathbb{R}^m$. Face-Intersect is substantially more accurate in reconstructing the $W$ matrix compared to Anchor-Words and Projected Gradient. In terms of reconstructing the $\tilde{M}$ and $M$ matrices, Face-Intersect slightly outperforms Anchor-Words ($p < 0.05$ t-test), and they both were substantially more accurate than Projected Gradient. As noise level increased, the accuracy of Face-Intersect and Anchor-Words degrades and at noise around $12.5\%$, the accuracy of the three methods converged. 
 In many applications, we are more interested in accurate reconstruction of the latent $W$ than of $M$. For example, in bio-medical applications, each row of $M$ is a sample and each column is the measurement of that sample at a particular bio-marker. Each sample is typically a mixture of $r$ cell-types, and each cell-type corresponds to a row of $W$. The $A$ matrix gives the mixture weights of the cell-types into the samples. Given measurement on a set of samples, $M$, an important problem is to infer the values of the latent cell-types at each bio-marker, $W$ \citep{ZOU}. To create a more realistic simulation of this setting, we used DNA methylation values measures at 100 markers in 5 cell-types (Monocytes, B-cells, T-cells, NK-cells and Granulocytes) as the true W matrix \citep{ZOU}. From these 5 anchors we generated 600 samples--which is a typical size of such datasets--using the same procedure as above. Both Face-Intersect and Anchor-Words substantially outperformed Projected Gradient across all three reconstruction criteria. In terms of reconstructing the biomarker matrix $W$, Face-Intersect was significantly more accurate than Anchor-Words. For reconstructing the data matrices $M$ and $\tilde{M}$, Face-Intersect was statistically more accurate than Anchor-Words when the noise is less than $8\%$ ($p < 0.05$), though the magnitude of the difference is small. 
 
\paragraph{Discussion}
We have presented the notion of subset separability, which substantially generalizes separable NMFs and is a necessary condition for the factorization to be unique or to have minimal volume. This naturally led us to develop the Face-Intersect algorithm, and we showed that when the NMF is subset separable and have properly filled facets, this algorithm provably recovers the true factorization. Moreover, it is robust to small adversarial noise. We show that the requirements for Face-Intersect to work are satisfied by simple generative models of NMFs. The original theoretical analysis of separable NMF led to a burst of research activity. Several highly efficient NMF algorithms were inspired by the theoretical ideas. We are hopeful that the idea of subset-separability will similarly lead to practical and theoretically sound algorithms for a much larger class of NMFs. Our Face-Intersect algorithm and its analysis is a first proof-of-concept that this is a promising direction. In exploratory experiments, we showed that under some settings where the relative noise is low, the Face-Intersect algorithm can outperform state-of-art NMF solvers. An important agenda of research will be to develop more robust and scalable algorithms motivated by our subset-separability analysis. 
        
\clearpage
\bibliographystyle{icml2015}
\bibliography{nmf}
\clearpage
\appendix
\section{Subset Separability and minimal volume}

In this section we prove Proposition~\ref{prop:volume} subset separability condition is necessary for a minimal volume solution.

\begin{proof}
Suppose $M = AW$ is a rank-$r$ nonnegative matrix factorization with minimal volume. If this decomposition does not satisfy the subset-separable condition, then there exists $i\ne j\in R$ such that for every row $A^t$, the two entries $A_{t,i}, A_{t,j}$ are either all zero or all nonzero. That is, the columns $A_i$ and $A_j$ have the same support. Consider a new factorization $A'W'$, where the columns of $A'$ are the same as columns of $A$ except for columns $i,j$, and rows of $W'$ are the same as rows of $W'$ except for row $i$.

Let $A'_i = \frac{1}{1-\epsilon} A_i$, $A'_j = A_j - \frac{\epsilon}{1-\epsilon} A_i$, and $(W')^i = (1-\epsilon) W^i + \epsilon W^j$, it is easy to verify that $A'W' = AW = M$, and $W'$ is still nonnegative for $\epsilon \in [0,1]$.

Since the support of $A_i$ and $A_j$ are the same, there exists a positive $\epsilon$ such that $A'_j$ is still a nonnegative vector. In that case $A'W'$ is a valid nonnegative matrix factorization where only one row of $W'$ is different from $W$. By construction it is clear that the volume of $W'$ is equal to $(1-\epsilon)$ times the volume of $W$, so this contradicts with the assumption that $M = AW$ is a factorization with minimal volume.
 \end{proof}

\section{Detailed analysis for finding properly filled facets}
\label{sec:appendix:facet}
In this section we analyze Algorithms~\ref{alg:findone} and \ref{alg:findall}.

\subsection{Finding one properly filled facet}

We first prove Theorem~\ref{thm:findonefacet}. For this algorithm, it is more natural to use the following robustness condition, which is a corollary of $\alpha$-robustness.

\begin{lemma}
Suppose the vertices of the unknown simplex are rows of $W\in \mathbb{R}^{r\times n}$, and $W$ is $\alpha$-robust. For any face $S$ of $W$ with corresponding subspace $Q$, there exists a unit vector $h\perp Q$, let $v$ be any vector in the simplex and $v^\perp$ be its component that is orthogonal to $Q$, then $\frac{|h\cdot v^\perp|}{\|v^\perp\|} \ge \frac{\alpha}{\sqrt{r}}$. 
\end{lemma}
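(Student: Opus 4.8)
The plan is to produce $h$ explicitly from a dual basis of the vertices, after reducing to the full-dimensional case. First I would use $\alpha$-robustness to note that $W^1,\dots,W^r$ are linearly independent, so they span an $r$-dimensional subspace $Q'\subseteq\R^n$ that contains $Q$ and the entire simplex; restricting to an orthonormal basis $B\in\R^{n\times r}$ of $Q'$ replaces $W$ by $\tilde W=WB\in\R^{r\times r}$ with $\tilde W\tilde W^T=WW^T$, so $\tilde W$ is invertible with $\sigma_{\min}(\tilde W)=\sigma_r(W)\ge\alpha$ and rows of norm $\le 1$, and since every vector in play lies in $Q'$, orthogonal components with respect to $Q$ are computed consistently in either coordinate system. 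Hence it suffices to treat $W\in\R^{r\times r}$ invertible, $\|W^j\|\le 1$, $\sigma_{\min}(W)\ge\alpha$, and $S\subsetneq[r]$ (the case $S=[r]$ is vacuous since then $v^\perp=0$ for every $v$ in the simplex).

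Next I would introduce the dual basis $g_1,\dots,g_r\in\R^r$ characterized by $\langle g_i,W^j\rangle=\delta_{ij}$; stacked as rows these form $(W^{-1})^T$, so $\bigl\|\sum_{i\in T}g_i\bigr\|=\bigl\|(W^{-1})^T\mathbf 1_T\bigr\|\le\|W^{-1}\|_{\mathrm{op}}\sqrt{|T|}\le\sqrt{|T|}/\alpha$ for any $T\subseteq[r]$. For $i\notin S$ we have $\langle g_i,W^j\rangle=0$ for every $j\in S$, hence $g_i\perp Q$; so with $z=\sum_{i\notin S}g_i$ and $h=z/\|z\|$ we obtain a unit vector with $h\perp Q$ and $\|z\|\le\sqrt{r-|S|}/\alpha\le\sqrt r/\alpha$. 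To verify the bound, take $v=\sum_j a_jW^j$ in the simplex ($a_j\ge 0$, $\sum_j a_j=1$). Since $P_{Q^\perp}W^j=0$ for $j\in S$, we get $v^\perp=\sum_{j\notin S}a_j\,P_{Q^\perp}W^j$, so $\|v^\perp\|\le\sum_{j\notin S}a_j$ because projections are contractions and $\|W^j\|\le 1$. On the other hand $h\perp Q$ gives $\langle h,v^\perp\rangle=\langle h,v\rangle=\tfrac{1}{\|z\|}\sum_{i\notin S}\langle g_i,v\rangle=\tfrac{1}{\|z\|}\sum_{i\notin S}a_i\ge\tfrac{\alpha}{\sqrt r}\sum_{i\notin S}a_i\ge\tfrac{\alpha}{\sqrt r}\|v^\perp\|$, and this quantity is nonnegative, so $|\langle h,v^\perp\rangle|=\langle h,v^\perp\rangle\ge\tfrac{\alpha}{\sqrt r}\|v^\perp\|$, which is the claim; note the same $h$ works for all $v$.

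The only genuinely delicate point is the operator-norm estimate $\|z\|\le\sqrt{r-|S|}/\alpha$: the naive triangle-inequality bound $\|z\|\le\sum_{i\notin S}\|g_i\|\le(r-|S|)/\alpha$ would only yield the weaker ratio $\alpha/r$, so it is essential to view $z$ as $(W^{-1})^T$ applied to the $0/1$ indicator vector $\mathbf 1_{[r]\setminus S}$ and use $\|\mathbf 1_{[r]\setminus S}\|_2=\sqrt{r-|S|}$ together with $\|W^{-1}\|_{\mathrm{op}}=1/\sigma_{\min}(W)\le 1/\alpha$. Everything else — the dimension reduction of the first step and the identity $\langle h,v\rangle=\sum_{i\notin S}a_i/\|z\|$ — is routine, the one thing to be careful about being that ``orthogonal to $Q$'' is interpreted in a fixed ambient space throughout.
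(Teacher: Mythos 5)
Your proof is correct and takes essentially the same route as the paper: your $h = z/\|z\|$ with $z=\sum_{i\notin S}g_i$ is (after your dimension reduction) exactly the paper's $h\propto B^\dag\vec{1}$, namely the unique vector in $Q^\perp\cap\mathrm{rowspan}(W)$ with unit inner product against every vertex outside $S$, and the norm bound $\|z\|\le\sqrt{r}/\alpha$ plays the same role as the paper's $\sigma_{\min}(B)\ge\alpha$ estimate. The concluding comparison of $\langle h,v^\perp\rangle$ with $\|v^\perp\|\le\sum_{j\notin S}a_j$ likewise matches the paper's final step.
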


\begin{proof}
Suppose $Q$ has dimension $d$ (we know $d < r$). Let $B$ be the projection of $W$ to the orthogonal subspace of $Q$, and remove the 0-columns in $B$. The matrix $B$ is a $n\times (r-d)$ matrix whose smallest singular value is at least $\alpha$ (the smallest singular value in a projection is at least the smallest singular value of the matrix). We construct $h$ as $h = \frac{B^\dag\vec{1}}{\|B^\dag\vec{1}\|}$. By the property of $B$ we know $h\cdot B_i = \frac{1}{\|B^\dag\vec{1}\|} = \alpha/\sqrt{r}$.

For any vector $v$ in the simplex, its orthogonal component $v^\perp$ is equal to $P_{Q^\perp} (\sum_{i=1}^r w_i W^i)$, which is a nonnegative combination of columns in $B$. Therefore $\frac{|h\cdot v^\perp|}{\|v^\perp\|} = \frac{\sum_i w_i h_i\cdot B_i}{\|\sum_i w_i B_i\|} \ge \max_i \frac{h_i\cdot B_i}{\|B_i\|} = \alpha/\sqrt{r}$ (here we used the fact that $h\cdot B_i$ are all positive).
\end{proof}

As we explained, there are two challenges in proving Theorem~\ref{thm:findonefacet}: 1). the observations are noisy. We would like to show even with the noisy $\hat{v}$'s, the subspace $\hat{Q}$ is always close to a subspace of the true space $Q$; 2). the convex combination may not find the entire space $Q$, for which we show the dimension of $\hat{Q}$ will increase until it is equal to the dimension of $Q$. Throughout this section we will use $d$ to denote the dimension of true space $Q$.

We first show that in every step of the algorithm all the vectors in $\hat{Q}$ are close to the subspace $Q$. We start by proving a general perturbation lemma for singular subspaces:

\begin{lemma}
\label{lem:spaceperturbation}
Let $\hat{F} = F + E$ where both $\hat{F}$ and $F$ are positive semidefinite, $F$ is a rank $d$ matrix with column span $Q$, and $U$ is the top $t$ ($t \le d$) singular space of $\hat{F}$ with the $t$-th singular value $\sigma_t(\hat{F}) > \|E\|$, then $\| P_{Q^\perp} U\| \le \|E\|/\sigma_t(\hat{F})$.
\end{lemma}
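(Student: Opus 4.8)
The plan is to bound $\|P_{Q^\perp}U\|$ by showing directly that every unit vector lying in the subspace $U$ is within distance $\|E\|/\sigma_t(\hat F)$ of $Q$; since $\|P_{Q^\perp}U\| = \max_{y\in U,\,\|y\|=1}\|P_{Q^\perp}y\|$ by the variational characterization of the spectral norm, this is exactly the claimed inequality.

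First I would record two structural facts. Because $F$ is positive semidefinite with column span $Q$, its null space is precisely $Q^\perp$, so $Fu \in Q$ for \emph{every} vector $u$ (in particular $P_{Q^\perp}Fu = 0$). Because $U$ is spanned by the top $t$ eigenvectors of the positive semidefinite matrix $\hat F$, it is an invariant subspace of $\hat F$, and the restriction $\hat F|_U : U \to U$ is symmetric positive definite with smallest eigenvalue $\sigma_t(\hat F)$; the hypothesis $\sigma_t(\hat F) > \|E\| \ge 0$ then guarantees that $\hat F|_U$ is invertible with $\|(\hat F|_U)^{-1}\| \le 1/\sigma_t(\hat F)$.

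The main step is a short computation. Fix a unit vector $y \in U$ and write $y = \hat F u$ with $u = (\hat F|_U)^{-1} y \in U$, so $\|u\| \le 1/\sigma_t(\hat F)$. Substituting $\hat F = F + E$ and projecting onto $Q^\perp$, the term $P_{Q^\perp}Fu$ vanishes, leaving $P_{Q^\perp}y = P_{Q^\perp}Eu$, and hence $\|P_{Q^\perp}y\| \le \|E\|\,\|u\| \le \|E\|/\sigma_t(\hat F)$. Maximizing over unit $y \in U$ yields the lemma. There is no genuine obstacle: the only point requiring care is the observation that $U$ is $\hat F$-invariant and that it is precisely the assumption $\sigma_t(\hat F) > \|E\|$ which makes $\hat F|_U$ invertible with the stated inverse-norm bound — once this is in place, everything reduces to the one-line projection identity using that $Q$ is the column span of $F$. (This is a clean special case of a Davis--Kahan-type perturbation bound, exploiting that $F$ annihilates $Q^\perp$.)
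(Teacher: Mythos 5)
Your proof is correct and follows essentially the same route as the paper's: both arguments rest on the identity $P_{Q^\perp}\hat F = P_{Q^\perp}E$ (since the column span of $F$ lies in $Q$) and then divide out the singular values of $\hat F$ restricted to $U$, which are at least $\sigma_t(\hat F)$. The paper phrases this at the matrix level via the truncated SVD ($\|P_{Q^\perp}UD\|\le\|E\|$, then invert $D$), while you phrase it vector-by-vector via $(\hat F|_U)^{-1}$ — the same computation in different notation.
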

\begin{proof}
Let $UDU^T$ be the truncated top $t$ SVD of $\hat{F}$, and $\hat{U}\hat{D}\hat{U}^T$ be the full SVD. We know $$\|P_{Q^\perp} \hat{U}\hat{D}\| = \|P_{Q^\perp} \hat{F}\| = \|P_{Q^\perp} E\| \le \|E\|,$$ 
where the first equality is because $\hat{U}$ is an orthonormal matrix, and the second equality is because the column span of $F$ is inside $Q$. 

On the other hand, $P_{Q^\perp} UD$ is a submatrix of $P_{Q^\perp} \hat{U}\hat{D}$, so we know $\|P_{Q^\perp} UD\| \le \|P_{Q^\perp} \hat{U}\hat{D}\| \le \|E\|$. Since all the entries in $D$ are at least $\sigma_t(\hat{F})$, this implies $\| P_{Q^\perp} U\| \le \|E\|/\sigma_t(\hat{F})$.
\end{proof}

In the later proofs we usually think of $\hat{F}$ as $\left(\sum_{i=1}^n w_i \hat{v}^i(\hat{v}^i)^T\right)$, and $F$ as $\left(\sum_{i=1}^n w_i  P_Q v^i (v^i)^TP_Q\right)$. The next lemma shows that for any feasible solution of the optimization program (even just considering the first three constraints), the matrix $\hat{F}$ is close to $F$:
\begin{lemma}
\label{lem:feasible}
For any feasible solution that satisfies the first three constraint, let $\hat{F} = \left(\sum_{i=1}^n w_i \hat{v}^i(\hat{v}^i)^T\right)$ and $F = \left(\sum_{i=1}^n w_i P_Qv_iv_i^TP_Q\right)$, we have $\|E\| = \|\hat{F}-F\| \le O(\sqrt{r}\epsilon/\alpha)$. In fact, even the nuclear norm\footnote{Nuclear norm $\|M\|_*$ is equal to the sum of singular values of $M$, it is also the dual norm of spectral norm in the sense that $\|M\|_* = \max_{\|A\|\le 1} \langle A,M\rangle$.}  $\|\hat{F} - F\|_* \le O(\sqrt{r}\epsilon/\alpha)$.
\end{lemma}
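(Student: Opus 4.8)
The plan is to funnel the whole estimate through a single scalar, $\sum_{i=1}^n w_i\|P_{Q^\perp} v^i\|$ --- the $w$-weighted average distance of the clean points to the facet subspace $Q$ --- and show it is $O(\sqrt r\,\epsilon/\alpha)$; once that is in hand, bounding $\|\hat F-F\|_*$ is a short expansion. First I would exploit the third constraint together with $\|\hat v^i - v^i\|\le\epsilon$ (for all $i$, including $i=0$): because the center $v^0$ lies on the facet $S$ it lies in $Q$, so $P_{Q^\perp}v^0=0$, and the triangle inequality (the constraint $\|\hat v^0 - \sum_i w_i\hat v^i\|\le 2\epsilon$ plus $\|\hat v^0-v^0\|\le\epsilon$ plus $\|\sum_i w_i(\hat v^i-v^i)\|\le\epsilon$) gives $\|P_{Q^\perp}\sum_i w_i v^i\| = \|P_{Q^\perp}(\sum_i w_i v^i - v^0)\| \le 4\epsilon$. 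Thus the \emph{aggregate} combination is within $O(\epsilon)$ of $Q$; what this does not immediately say is that the individual $v^i$ are close to $Q$, since their perpendicular components could in principle cancel.

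Ruling out that cancellation is the crux of the argument, and it is exactly where $\alpha$-robustness enters. I would invoke the robustness lemma proved just above: there is a fixed unit vector $h\perp Q$ with $h\cdot P_{Q^\perp}v \ge (\alpha/\sqrt r)\|P_{Q^\perp}v\|$ for every point $v$ of the simplex, and, as that proof shows, all the numbers $h\cdot P_{Q^\perp}v^i$ are nonnegative, so pairing $h$ with $P_{Q^\perp}\sum_i w_i v^i$ cannot lose anything to cancellation: $(\alpha/\sqrt r)\sum_i w_i\|P_{Q^\perp}v^i\| \le \sum_i w_i\,h\cdot P_{Q^\perp}v^i = h\cdot P_{Q^\perp}\sum_i w_i v^i \le 4\epsilon$, hence $\sum_i w_i\|P_{Q^\perp}v^i\| \le 4\sqrt r\,\epsilon/\alpha$.

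The last step is routine expansion. Writing $v^i = P_Q v^i + v_i^\perp$ with $v_i^\perp = P_{Q^\perp}v^i$, the matrix $\sum_i w_i v^i(v^i)^T - F$ is a $w$-average of the terms $P_Qv^i(v_i^\perp)^T + v_i^\perp(P_Qv^i)^T + v_i^\perp(v_i^\perp)^T$, each of nuclear norm at most $3\|v_i^\perp\|$, since every point (a convex combination of rows of $W$, which have norm $\le 1$) has norm $\le 1$; averaging gives nuclear norm $\le 3\sum_i w_i\|v_i^\perp\| = O(\sqrt r\,\epsilon/\alpha)$. Replacing each clean $v^i(v^i)^T$ by the noisy $\hat v^i(\hat v^i)^T$ perturbs it by $(\hat v^i-v^i)(\hat v^i)^T + v^i(\hat v^i - v^i)^T$, of nuclear norm $O(\epsilon)$, which contributes another $O(\epsilon)$ after averaging. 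Summing the two bounds and using $\|\cdot\|\le\|\cdot\|_*$ yields both the spectral and the nuclear-norm claims (absorbing constants using $\alpha\le 1$). I expect Step 2 --- the $\alpha$-robustness argument against cancellation --- to be the only genuinely nontrivial point; the remaining care is just tracking constants through the triangle inequality and remembering to use ``point in the simplex'' rather than merely ``row of $W$'' when asserting $\|v^i\|\le 1$.
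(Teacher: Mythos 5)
Your proposal is correct and follows essentially the same route as the paper's proof: both pass through the intermediate matrix $\sum_i w_i v^i(v^i)^T$, bound the noise term by $O(\epsilon)$ directly, and control the projection term by using the robustness vector $h\perp Q$ (with all inner products $h\cdot P_{Q^\perp}v^i$ nonnegative, so no cancellation) to get $\sum_i w_i\|P_{Q^\perp}v^i\|\le 4\sqrt{r}\,\epsilon/\alpha$ from the third constraint. You correctly identify the anti-cancellation step as the crux, and your term-by-term nuclear-norm accounting matches the paper's.
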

\begin{proof}
Let $\tilde{F} = \left(\sum_{i=1}^n w_i v^i (v^i)^T\right)$, we show $\tilde{F}$ is close to both $F$ and $\hat{F}$.
Let $\delta_i = \hat{v}^i - v^i$. By assumption we know $\|\delta_i\| \le \epsilon$. Also, by assumption $\|v^i\| \le 1$ (normalization) so $\|\hat{F} - \tilde{F}\| \le \sum_{i=1}^n w_i \|\delta_i (v^i)^T+v^i\delta_i^T + \delta_i\delta_i^T\| \le (2\epsilon+\epsilon^2) \sum_{i=1}^n w_i =  O(\epsilon)$. 

On the other hand, by the third constraint we know $\|\hat{v}^0 - \sum_{i=1}^n w_i \hat{v}^i\| \le 2\epsilon$, which implies $\|v^0 - \sum_{i=1}^n w_i v^i\| \le 4\epsilon$ (because $\|v^i - \hat{v}^i\| \le \epsilon$ and $w_i$'s form a probability distribution). Using the robustness condition, let $v^{i\perp} = v^i - P_Qv_i$, then
\[
4\epsilon > \| \sum_i w_i v^i - v^0\| \geq \sum_i w_i h_I \cdot v^{i\perp} \geq \bigtriangleup \sum_{i=1}^n w_i\|v^{i\perp}\|
\]
Therefore we know $\|\tilde{F} - F\| \le \sum_{i=1}^n w_i \|v^{i\perp} (v^i)^TP_Q +  P_Qv_i(v^{i\perp})^T + (v^{i\perp})(v^{i\perp})^T\| \le O(\sqrt{r}\epsilon/\alpha)$ (note that $\|v^{i\perp}\| \le 1$ by normalization).  

The nuclear norm bound follows from exactly the same proof.
\end{proof}

The previous two lemmas guarantee that at any time of the algorithm, the subspace $\hat{Q}$ is always close to a subspace of $Q$. In the next lemma we show that the algorithm makes progress

\begin{lemma}
\label{lem:spaceprogress}
If $dim(\hat{Q}) = t < d$, then in the next iteration the dimension of $\hat{Q}$ increases by at least $1$.
\end{lemma}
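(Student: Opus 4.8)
The plan is to exhibit the ``nice'' convex combination from Condition 1 of Definition~\ref{def:robust} as an explicit feasible point of the optimization program, show that its objective value is bounded away from zero, and then read off that the optimizer must produce a subspace of strictly larger dimension.

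\emph{Step 1: feasibility of the nice combination.} Let $\{w_i^*\}$ be the weights with $v^0=\sum_i w_i^* v^i$ and $F^*:=\sum_i w_i^* v^i(v^i)^T$ of rank $d$ and least nonzero singular value $\ge\gamma$. Nonnegativity and $\sum_i w_i^*=1$ are given; the constraint $\|\hat v^0-\sum_i w_i^*\hat v^i\|\le 2\epsilon$ holds because the identity $v^0=\sum_i w_i^* v^i$ is exact and each $\hat v^i$ (including $\hat v^0$) is $\epsilon$-close to $v^i$. For the last constraint: since $v^0$ lies on the facet $S$, every $v^i$ with $w_i^*>0$ also lies on $S$, so $F^*\succeq\gamma P_Q$; using Lemma~\ref{lem:feasible} to control $\|\hat F^*-F^*\|$ (with $\hat F^*:=\sum_i w_i^*\hat v^i(\hat v^i)^T$) and Lemma~\ref{lem:spaceperturbation} applied to the previous iterate (whose retained singular values exceed $\gamma/2d\gg\|E\|$) to see that the current $\hat Q$ is $O(d\sqrt r\epsilon/\alpha\gamma)$-close to a $t$-dimensional subspace of $Q$, each diagonal entry of $\hat Q^T\hat F^*\hat Q$ is at least $\gamma-O(\sqrt r\epsilon/\alpha)\ge\gamma/2$ when $d\sqrt r\epsilon/\alpha\gamma\ll 1$.

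\emph{Step 2: large objective.} Because $F^*\succeq\gamma P_Q$ and $\hat Q$ sits almost inside the $d$-dimensional $Q$, $\mathrm{tr}(P_{\hat Q^\perp}\hat F^* P_{\hat Q^\perp})\ge\gamma\,\mathrm{tr}(P_{\hat Q^\perp}P_Q P_{\hat Q^\perp})-O(\sqrt r\epsilon/\alpha)=\gamma\bigl(d-\mathrm{tr}(P_Q P_{\hat Q})\bigr)-O(\sqrt r\epsilon/\alpha)\ge(d-t)\gamma-O(d^2\sqrt r\epsilon/\alpha\gamma)$, which is at least $\gamma/2$ since $d-t\ge1$. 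Hence the optimizer's second-moment matrix $\hat F$ satisfies $\mathrm{tr}(P_{\hat Q^\perp}\hat F P_{\hat Q^\perp})\ge\gamma/2>0$, so the new top-singular-space of $\hat F$ (for threshold $\gamma/2d$) cannot be contained in the old $\hat Q$: it contains a singular direction of $\hat F$ with singular value $>\gamma/2d$ that is not in the old $\hat Q$.

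\emph{Step 3: the old directions survive, and the main obstacle.} It remains to argue that the new $\hat Q$ still captures the old $t$ dimensions, so that the dimension genuinely goes up. The last constraint, satisfied by the optimizer, keeps $\hat Q^T\hat F\hat Q$ well conditioned; combined with the fact (Lemma~\ref{lem:feasible}) that $\hat F$ is within $O(\sqrt r\epsilon/\alpha)\ll\gamma/2d$ of a PSD rank-$\le d$ matrix supported on $Q$ — so all of $\hat F$'s singular directions below the threshold live essentially in $Q^\perp$ together with at most $d-\dim(\text{new }\hat Q)$ directions inside $Q$ — one concludes, via Cauchy interlacing on $\hat Q^T\hat F\hat Q$, that $\hat F$ has $t$ singular values above $\gamma/2d$ attributable to the old $\hat Q$, together with the extra direction from Step 2, for a total of at least $t+1$. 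I expect this last step to be the crux: converting the per-coordinate guarantee of the last constraint into the honest statement that the old $t$-dimensional subspace is \emph{retained} inside the new top-singular-space (not merely ``not orthogonal'' to it), and carrying the $O(\sqrt r\epsilon/\alpha\gamma)$ error terms so that they stay within the $o(\alpha^4\gamma/Hr^3)$ budget of Theorem~\ref{thm:mainrobust}. Once the noiseless case is established this way, the noisy case follows by the same perturbation estimates.
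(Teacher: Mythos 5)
Your proposal follows the paper's proof essentially step for step: exhibit the ``nice'' combination $w^*$ as a feasible point (verifying the diagonal constraint via the closeness of the current $\hat Q$ to $Q$ and of $\hat F^*$ to $F^*$), lower-bound the optimizer's objective by $\gamma(d-t)-O(\sqrt r\epsilon/\alpha)$ using the nuclear-norm comparison of Lemma~\ref{lem:feasible}, and combine the $t$ retained directions inside $\hat Q$ with one new direction in $\hat Q^\perp$ carrying quadratic form at least $\gamma/2d$ to conclude $\sigma_{t+1}(\hat F)\ge\gamma/2d$. The step you flag as the crux --- promoting the per-coordinate guarantee $\mathrm{diag}(\hat Q^T\hat F\hat Q)\ge\gamma/2$ to the claim that the old $t$-dimensional subspace genuinely survives the $\gamma/2d$ thresholding --- is handled no more rigorously in the paper, which simply asserts that $t+1$ orthogonal directions each with quadratic form at least $\gamma/2d$ force $\sigma_{t+1}(\hat F)\ge\gamma/2d$.
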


\begin{proof}
Since $v^0$ is a center of the facet, we know there exists a ``nice'' solution $w^*$ such that $v^0 = \sum_{i=1}^n w^*_i v^i$ and $\sum_{i=1}^n w^*_i (v^i)(v^i)^T$ has $d$-th singular value $\gamma$.

We first show that this guaranteed good solution $w^*$ is always a feasible solution. Clearly it satisfies the first three constraints (by triangle inequality). For the last constraint, let $F^*$ be the $F$-matrix constructed be $w^*$ and $\hat{F}^*$ be the corresponding $\hat{F}$ matrix. By assumption we know $\hat{F}^*$ has $\sigma_d(\hat{F}^*) \ge \gamma$, so in particular for any direction $u$ in subspace $Q$, $u^T F^*u \ge \gamma$. Since by previous two lemmas we have $\|P_{Q^\perp} \hat{Q}\| \le O(d\sqrt{r}\epsilon/\alpha\gamma)$, in particular every column of $\hat{Q}$ is within $O(d\sqrt{r}\epsilon/\alpha\gamma)$ with its projection in $Q$, we know $\mbox{diag}(\hat{Q}^T F^* \hat{Q}) \ge \frac{3}{4}\gamma$ (when $\sqrt{r}\epsilon/\alpha\gamma$ is smaller than some universal constant). Now by Lemma~\ref{lem:feasible} $F^*$ and $\hat{F}^*$ are close (in spectral norm) we have $\mbox{diag}(\hat{Q}^T \hat{F}^* \hat{Q}) \ge \gamma/2$.

Since the solution $w^*$ is feasible, the optimal solution must have objective value no less than the objective value of $w^*$. By the nuclear norm bound, for any subspace $\hat{Q}$ we know $$\mbox{tr}(P_{\hat{Q}^\perp}FP_{\hat{Q}^\perp}) - \mbox{tr}(P_{\hat{Q}^\perp}FP_{\hat{Q}^\perp}) = \mbox{tr}(P_{\hat{Q}^\perp}(F-\hat{F})P_{\hat{Q}^\perp}) \le \|P_{\hat{Q}^\perp}(F-\hat{F})P_{\hat{Q}^\perp}\|_* \le \|F-\hat{F}\|_* \le O(\sqrt{r}\epsilon/\alpha),$$
where we used the fact that the trace of a matrix is always bounded by its nuclear norm, and nuclear norm of a projection is always smaller than nuclear norm of the original matrix  \footnote{This follows from the fact that $\|A\|_* = \max_{\|B\| \le 1} \langle A,B\rangle$ and spectral norms do not increase after projection.}.

On the other hand $\mbox{tr}(P_{\hat{Q}^\perp}F^*P_{\hat{Q}^\perp}) \ge \mbox{tr}(F^*) - \sum_{i=1}^t \sigma_i(F^*) \ge \sum_{i=t+1}^d \sigma_i(F^*) \ge \gamma (d-t)$. So the optimal objective value must be at least $\gamma(d-t) - O(\sqrt{r}\epsilon/\alpha)$.

Let $w$ be the optimal solution and $F$, $\hat{F}$ be the corresponding matrices, by the same argument we know 
 $$\mbox{tr}(P_{\hat{Q}}^\perp FP_{\hat{Q}}^\perp) \ge \mbox{tr}(P_{\hat{Q}}^\perp \hat{F}P_{\hat{Q}}^\perp) - O(\sqrt{r}\epsilon/\alpha) \ge \gamma(d-t) - O(\sqrt{r}\epsilon/\alpha).$$  
 However, $F$ is a matrix of rank at most $d$, therefore $\|P_{\hat{Q}}^\perp FP_{\hat{Q}}^\perp\| \ge \gamma/d - O(\sqrt{r}\epsilon/\alpha d)$. For the $\hat{F}$ matrix we also have $\|P_{\hat{Q}}^\perp \hat{F}P_{\hat{Q}}^\perp\| \ge \gamma/2d$ because $\|\hat{F}-F\|$ is small.

Now for the matrix $\hat{F}$, there are $t+1$ orthogonal directions ($t$ from $\hat{Q}$, and at least one orthogonal to $\hat{Q}$) with singular value at least $\gamma/2d$, hence
$\sigma_{t+1}(\hat{F}) \ge \gamma/2d$. As a result in the next step the dimension of $\hat{Q}$ increases by at least 1.
\end{proof}

Now we are ready to prove Theorem~\ref{thm:findonefacet}.

\begin{proof} (of Theorem~\ref{thm:findonefacet}).
By Lemma~\ref{lem:spaceprogress} and Lemma~\ref{lem:feasible}, we know when the algorithm ends we must have $\mbox{dim}(\hat{Q}) \ge \mbox{dim}(Q)$.

Now by the last constraint, we know $\sigma_r(\sum_{i=1}^n w_i \hat{v}^i(\hat{v}^i)^T) \ge \gamma/2$. Combined with Lemma~\ref{lem:feasible} and Lemma~\ref{lem:spaceperturbation} this implies the final subspace is within distance $O(\sqrt{r}\epsilon/\alpha \gamma)$.
\end{proof}

\subsection{Finding all properly filled facets}

Theorem~\ref{thm:findall} follows immediately from Theorem~\ref{thm:findonefacet}, for completeness we provide the proof here.

\begin{proof} (of Theorem~\ref{thm:findall})
By Theorem~\ref{thm:findonefacet}, and by Condition 2 in Definition~\ref{def:robust}, when we run Algorithm~\ref{alg:findone} on a correct center point, the resulting subspace will always be added to the collection. Therefore at the end of the loop for each facet $S_i$ with at least two vertices, and its corresponding subspace $Q_i$, there must be a $\hat{Q}_i$ in the collection that is $O(\sqrt{r}\epsilon/\alpha\gamma)$-close.

On the other hand, by Condition 3 in Definition~\ref{def:robust} we know every subspace $\hat{Q}$ that is in the collection must satisfy $\|P_{\hat{Q}^\perp} Q_i\| \le O(H\sqrt{r}\epsilon/\alpha\gamma)$ for some true subspace $Q_i$. If $\hat{Q}$ has dimension larger than $Q_i$, then $\|P_{\hat{Q}^\perp} \hat{Q}_i\| \le \|P_{\hat{Q}^\perp} Q_i\| + \|P_{Q^\perp} \hat{Q}_i\| \le O(H\sqrt{r}\epsilon/\alpha\gamma)$\footnote{This uses the variational characterization of $P_{U^\perp} V = \max_{u\in U, v\in V} \sin \theta(u,v)$ where $\theta(u,v)$ is the angle between $u,v$.}. Therefore all the false positives with higher dimension are removed. The remaining subspaces must be $O(H\sqrt{r}\epsilon/\alpha\gamma)$-close to one of the true subspaces.

By the $\alpha$-robustness condition, two subspaces corresponding to different facets must have distance at least $\alpha$, so when $H\sqrt{r}\epsilon/\alpha\gamma \le o(\alpha)$ the subspaces $\hat{Q}$ close to a true subset cannot be removed. Also, in the last step it is easy to identify the subspaces $\hat{Q}$ that are close to one true space $Q_i$, any one of those will be $\epsilon_S$-close to the true subsets.
\end{proof}

\section{Detailed Analysis for finding intersections}
\label{sec:appendix:intersect}
In this section we first prove Theorem~\ref{thm:findintersection}, then we discuss how to apply Algorithm~\ref{alg:findremaining} from \cite{AroraEtAl_icml13} to find the remaining vertices.

\begin{algorithm}
\begin{algorithmic}
\INPUT $k$ subspaces $\hat{Q}_1,...,\hat{Q}_h$.
\OUTPUT intersection vertices $\hat{W}^i$ that corresponds to $\{W^i:i\in P\}$
\STATE Let $\epsilon_v = 4r^{1.5}\epsilon_S/\alpha$, $\epsilon_Y = 2r\epsilon_v/\alpha$.
\STATE Maintain list of vertices $\{\hat{W}^i\}$, matrix $Y$, and subspace $\Gamma$ that correspond to the left singular space of $Y$ with singular values larger than $\alpha/2$.
\FOR {$i = 1$ TO $r$}
\STATE Maintain set $U \subset [k]$, $\Sigma = [\hat{Q}_i^\perp:i\in U]$ and $Z$ be the space of left singular vectors of $\Sigma$ with singular values at most $r\epsilon_S$. 
\STATE Initialize $S = \emptyset$
\FOR {$j  = 1$ TO $h$}
\STATE Let $\Sigma' = \Sigma+P_{\hat{Q}_j^\perp}$, Let $Z'$ be the space of eigenvectors of $\Sigma$ with eigenvalues at most $r\epsilon_S$.
\IF {$\mbox{dim}(Z') < \mbox{dim}(Z)$ and  $\|P_{\Gamma^\perp} Z'\| > \alpha/2$}
\STATE let $U = U\cup \{j\}$, replace $\Sigma$ and $Z$ by $\Sigma',Z'$.
\ENDIF
\ENDFOR
\STATE Append $Z$ to $Y$ ($Y = [Y, Z]$), update $\Gamma$.
\STATE If $\mbox{dim}(Z) = 1$ then add the direction to list of $\hat{W}$
\ENDFOR
\end{algorithmic}
\caption{Finding Intersection}\label{alg:intersect}
\end{algorithm}

The main idea of the implementation is that the subspace $Z$ will always be close to the span of $\{W^i:i\in S\}$ where $S = \cap_{j\in U} S_j$. The subspace $\Gamma$ will correspond to span of $\{W^i:i\in R\}$ where $R$ is the set of points that we have already found. If $\|P_{\Gamma^\perp}Z\|$ is large then it means $S$ is not a subset of $R$.

For this step we also need a particular corollary of the $\alpha$-robustness condition. 
\begin{lemma}
\label{lem:sumofprojection}
Suppose the vertices of the unknown simplex are rows of $W\in \mathbb{R}^{r\times m}$ and $W$ is $\alpha$-robust. Let $Q_1,Q_2,..., Q_t$ be a set of faces that has intersection $S\subset [r]$, and $Q_i^\perp$ be an arbitrary basis for the orthogonal subspace of $Q_i$. The matrix $\Sigma = [Q_1^\perp, Q_2^\perp, ..., Q_t^\perp]^T$ has a null-space equal to $\mbox{span}\{W^i:i\in S\}$ and $\sigma_{n-|S|}(\Sigma) \ge \alpha/\sqrt{r}$.
\end{lemma}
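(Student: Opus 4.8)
The plan is to analyze the stacked matrix $\Sigma = [Q_1^\perp, \dots, Q_t^\perp]^T$ directly through its action on $\R^m$. First I would identify the null-space. A vector $x \in \R^m$ satisfies $\Sigma x = 0$ if and only if $(Q_i^\perp)^T x = 0$ for every $i$, i.e. $x \in Q_i$ for every $i$, i.e. $x \in \bigcap_i Q_i$. Now $Q_i = \mathrm{span}\{W^j : j \in S_i\}$, and since $W$ is $\alpha$-robust its rows are linearly independent, so the span of any subset of rows intersected with the span of another subset is exactly the span of the rows indexed by the intersection of the subsets. Iterating, $\bigcap_i Q_i = \mathrm{span}\{W^j : j \in \bigcap_i S_i\} = \mathrm{span}\{W^j : j \in S\}$. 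This gives the null-space claim and also tells us $\Sigma$ has rank exactly $m - |S|$, so $\sigma_{m-|S|}(\Sigma)$ is the smallest \emph{nonzero} singular value. (I note the statement writes $\sigma_{n-|S|}$ but $\Sigma$ acts on $\R^m$; this should read $\sigma_{m-|S|}$, consistent with the rank being $m-|S|$.)

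Next I would lower-bound that smallest nonzero singular value. Equivalently, I must show that for every unit vector $x$ orthogonal to $\mathrm{span}\{W^j : j\in S\}$ we have $\|\Sigma x\|^2 = \sum_{i=1}^t \|P_{Q_i^\perp} x\|^2 \ge \alpha^2/r$. It suffices to produce a single index $i$ with $\|P_{Q_i^\perp} x\| \ge \alpha/\sqrt{r}$. Since $x \notin \bigcap_i Q_i$, there is some $i$ with $x \notin Q_i$, so $P_{Q_i^\perp} x \neq 0$; but I need the quantitative bound, not just nonvanishing. Here is where I invoke the robustness lemma already proved in the excerpt (the one preceding Theorem~\ref{thm:findonefacet}): for the face $S_i$ with subspace $Q_i$ there is a unit vector $h \perp Q_i$ with $|h \cdot v^\perp| / \|v^\perp\| \ge \alpha/\sqrt{r}$ for every simplex vector $v$, where $v^\perp = P_{Q_i^\perp} v$. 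The catch is that $x$ need not be a vector in the simplex. I would instead argue as follows: decompose $x = \sum_{j=1}^r c_j W^j$ (possible since the $W^j$ span... or rather project into the row space first — if $x$ has a component orthogonal to $\mathrm{row}(W)$ that component is orthogonal to every $Q_i$ and contributes fully to $\|P_{Q_i^\perp} x\|$, which only helps). Working within $\mathrm{row}(W)$, write $x = \sum_j c_j W^j$; because $x \perp \mathrm{span}\{W^j: j \in S\}$ is \emph{not} automatically $c_j = 0$ for $j \in S$ (the $W^j$ aren't orthogonal), I should instead choose the index $i$ where $S_i$ omits some coordinate $j_0 \notin S$ on which $x$ genuinely depends, and use the $h$-vector for $Q_i$.

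Cleaner route, which I expect to be the actual argument: for each $i$, let $B_i$ be the projection of the rows $\{W^j : j \notin S_i\}$ onto $Q_i^\perp$ (the construction in the preceding robustness lemma), an $(r-|S_i|)$-dimensional set of vectors with smallest singular value $\ge \alpha$. Every $x \perp \mathrm{span}\{W^j:j\in S\}$, when projected to $Q_i^\perp$, is a linear combination $\sum_{j \notin S_i} c_j^{(i)} (B_i)_j$ of these, so $\|P_{Q_i^\perp} x\| \ge \alpha \cdot \|c^{(i)}\|$ where the coefficient vector $c^{(i)}$ records the components of $x$ on $W^j$, $j \notin S_i$. Since $\bigcap_i S_i = S$, every index $j \notin S$ lies outside some $S_i$, so the union over $i$ of the coefficient vectors $c^{(i)}$ covers all coordinates $j \notin S$; as $x \perp \mathrm{span}\{W^j : j \in S\}$ forces the bulk of $x$'s "mass" onto coordinates outside $S$, at least one $\|c^{(i)}\|$ is $\ge 1/\sqrt{r}$ (a pigeonhole over at most $r$ coordinates, using $\alpha$-robustness to convert the normalization $\|x\|=1$ into a lower bound on $\sum_j |c_j|^2$, namely $\sum |c_j|^2 \ge \alpha^2\|x\|^2$ wait — actually $\|x\| = \|\sum c_j W^j\| \le \|c\|$ is the easy direction; I want $\|c\|$ bounded below, which follows since $x$ has unit norm and lies in the row space where $W$ acts with singular values $\le 1$... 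I'd use $\|c\| \ge \|x\|/\sigma_{\max} \ge 1$ using row norms bounded by $1$, hence $\sigma_{\max}(W) \le \sqrt{r}$, giving $\|c\| \ge 1/\sqrt{r}$, and then pigeonhole spreads this across the $\le r$ relevant coordinates). Combining, $\|\Sigma x\| \ge \max_i \|P_{Q_i^\perp}x\| \ge \alpha/\sqrt{r}$.

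The main obstacle I anticipate is the bookkeeping in the last step: carefully tracking which coordinates of the expansion $x = \sum_j c_j W^j$ survive in which $P_{Q_i^\perp}x$, and turning the combinatorial fact $\bigcap_i S_i = S$ together with $\alpha$-robustness into the clean quantitative bound $\alpha/\sqrt{r}$ rather than something weaker like $\alpha/r$ or $\alpha^2/r$. I would reconcile the exact power of $r$ against how the lemma is used downstream in Theorem~\ref{thm:findintersection}; a factor-of-$\sqrt{r}$ slack there is harmless. Everything else (the null-space identification, the rank count, and the reduction to a single index) is routine linear algebra given $\alpha$-robustness.
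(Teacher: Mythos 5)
Your route is genuinely different from the paper's, and it essentially works, but two steps need tightening before it delivers the stated $\alpha/\sqrt{r}$. The paper avoids your covering/pigeonhole bookkeeping entirely: it refines the complements $[r]\setminus S_j$ into \emph{disjoint} sets $S'_1,\dots,S'_t$ whose union is $[r]\setminus S$, lets $B^j$ be an orthonormal basis of the orthogonal complement of $\mathrm{span}\{W^i : i\in [r]\setminus S'_j\}$ inside the row span of $W$, and observes that $BB^T \preceq \sum_j P_{Q_j^\perp}$ while $WB$ is block-diagonal with each block having smallest singular value $\ge \alpha$ by $\alpha$-robustness; hence $\sigma_{\min}(B)\ge \alpha/\|W\|\ge\alpha/\sqrt{r}$ and the bound follows by eigenvalue monotonicity. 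Your coordinate-expansion argument can be made to give exactly the same constant, but not the way you finish it: pigeonholing $\|c_{[r]\setminus S}\|\ge 1/\sqrt{r}$ over $\le r$ coordinates only yields a single index with $|c_{j^*}|\ge 1/r$, hence a single $i$ with $\|P_{Q_i^\perp}x\|\ge\alpha/r$ — your closing line $\max_i\|P_{Q_i^\perp}x\|\ge\alpha/\sqrt{r}$ is not justified. The fix is to not take a max at all: since $\|\Sigma x\|^2=\sum_{i}\|P_{Q_i^\perp}x\|^2\ge\alpha^2\sum_i\|c^{(i)}\|^2$ and the sets $[r]\setminus S_i$ cover $[r]\setminus S$ (because $\bigcap_i S_i=S$), you get $\sum_i\|c^{(i)}\|^2\ge\|c_{[r]\setminus S}\|^2\ge 1/r$ and hence $\|\Sigma x\|\ge\alpha/\sqrt{r}$ with no loss. (In effect, summing over the covering plays the role of the paper's disjoint partition, which guarantees each coordinate is counted exactly once.)

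One more point to nail down: you need the lower bound on $\|c_{[r]\setminus S}\|$, not on the full coefficient vector $\|c\|$, and as you note $x\perp\mathrm{span}\{W^j:j\in S\}$ does not force $c_j=0$ for $j\in S$. The clean way is to apply $P_{\mathrm{span}\{W^j:j\in S\}^\perp}$ to the expansion: this kills the $j\in S$ terms and gives $x=\sum_{j\notin S}c_j\,P W^j$, whence $1=\|x\|\le\|W\|\,\|c_{[r]\setminus S}\|\le\sqrt{r}\,\|c_{[r]\setminus S}\|$. Your null-space identification, the rank count, the $\sigma_{\min}(B_i)\ge\alpha$ claim for projected sub-blocks, and the observation that the subscript should read $\sigma_{m-|S|}$ are all correct. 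With the two repairs above, your proof is complete and arguably more direct than the paper's.
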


\begin{proof}
Clearly all the vectors $\{W^i:i\in S\}$ are in the null-space of $\Sigma$ as $W^i\in Q_j$ for all $j\in[t]$. For vectors that are orthogonal to the span of columns of $W$, they have projection $1$ in all of $P_{Q_j^\perp}$'s, and they do not influence the projections within the row span of $W$. We only need to prove that within the row span of $W$, for all the directions orthogonal to $\{W^i:i\in S\}$ the matrix still has large singular values.

Let $S_j$ be the set of vertices that $Q_j$ contains, we define $S_j'$ as follows: $S_1' = [r]\backslash S_1$, for all $j >1$ $S'_j = [r]\backslash \left((\cup_{j' < j} S'_{j'})\cup S_j\right)$. Since $S$ is the intersection of the verticies, we know $\cup S'_j = [r]\backslash S$. Also by construction we know the $S'_j$'s are disjoint. For each $S'_j$, let $Q'_j$ be the span of rows of $W$ with indices in $[r]\backslash S'_j$. Since $[r]\backslash S'_j$ is a superset of $S_j$, we know $Q_j$ is a subspace of $Q'_j$ and hence $P_{(Q'_j)^\perp} \preceq P_{Q_j^\perp}$. For each $j$ construct $B^j$ to be the matrix that is an (arbitrary) orthogonal basis of the orthogonal subspace of $Q'_j$ in span of $W$. Let $B = [B^1, B^2,..., B^t] \in \mathbb{R}^{n\times (r-1)}$. We know $BB^T \preceq \sum_j P_{(Q'_j)^\perp} \preceq \Sigma \Sigma^T$. Therefore we only need to show the matrix $B$ has large smallest singular value.

Now consider the product $WB$. By construction of $B$, this is a block diagonal matrix (with blocks correspond to $S'_j$'s). Since $W$ is $\alpha$-robust we know each block has smallest singular value $\alpha$. Therefore $\sigma_{min}(WB) \ge \alpha$, and $\sigma_{min}(B) \ge \alpha/\|W\| \ge \alpha/\sqrt{r}$. By the relationship between $\Sigma$ and $B$ we know $\sigma_{n-1}(\Sigma)\ge \sigma_{min}(B) \ge \alpha/\sqrt{r}$.
\end{proof}

This lemma allows us to take the intersections of subspaces robustly.

We prove the theorem by induction. The induction hypothesis is

\begin{claim}
At the end of every outer-loop, $\hat{W}^i$'s are $\epsilon_v = 4r^{1.5}\epsilon_S/\alpha$ close to some vertices in $W$, $\Gamma$ is $\epsilon_Y = 2r\epsilon_v/\alpha$-close to a subspace spanned by $W^R$ where $R\subset [r]$ is a subset of vertices. The set $R$ never contains any vertex in $P$ that is not already close to one of the elements in the list $\hat{W}^i$.
\end{claim}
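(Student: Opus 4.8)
The plan is to prove the claim by induction on the outer-loop index $i$, exactly mirroring the set-intersection argument in Algorithm~\ref{alg:findintersectmeta} but tracking errors through the subspace implementation. The base case ($i = 0$, before any iteration) is trivial: the list $\{\hat W^i\}$ is empty, $Y$ is empty, $\Gamma$ is the zero subspace, and $R = \emptyset$. For the inductive step, I assume the hypothesis holds at the end of iteration $i-1$ and analyze one full inner loop. The key structural invariant I would maintain inside the inner loop is: after processing facet $j$, the subspace $Z$ (the small-singular-value space of $\Sigma = [\hat Q_j^\perp : j \in U]$) is $O(r\epsilon_v/\alpha)$-close to $\mathrm{span}\{W^i : i \in S\}$ where $S = \cap_{j \in U} S_j$ is the exact set-intersection. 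This is where Lemma~\ref{lem:sumofprojection} does the heavy lifting: it guarantees that for the \emph{true} subspaces, the matrix $\Sigma$ has null space exactly $\mathrm{span}\{W^i : i \in S\}$ and a spectral gap $\sigma_{n-|S|}(\Sigma) \ge \alpha/\sqrt r$; combining this with the facet errors $\epsilon_S$ (there are at most $r$ facets stacked, so the perturbation to $\Sigma$ is $O(r\epsilon_S)$ in operator norm, hence the threshold $r\epsilon_S$ in the algorithm is the right cutoff) and a singular-subspace perturbation bound (Lemma~\ref{lem:spaceperturbation} or Davis--Kahan) gives the claimed closeness of $Z$ to the true span, provided $r\epsilon_S \ll \alpha/\sqrt r$, i.e. $\epsilon_S = o(\alpha/r^{1.5})$, which is implied by the hypothesis $\epsilon_S < o(\alpha^3/r^{2.5})$.

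Next I would verify the combinatorial bookkeeping. The guard ``$\mathrm{dim}(Z') < \mathrm{dim}(Z)$ and $\|P_{\Gamma^\perp} Z'\| > \alpha/2$'' is the subspace analog of ``$|S \cap S_j| < |S|$ and $S \cap S_j \not\subseteq R$''. For the first condition: since $\Gamma$ is $\epsilon_Y$-close to $\mathrm{span}(W^R)$ and distinct vertices of an $\alpha$-robust $W$ are $\alpha$-separated, $\|P_{\Gamma^\perp} Z'\| > \alpha/2$ holds iff $S \cap S_j$ is not contained in $R$ — this is the standard ``gap'' argument: if $S\cap S_j \subseteq R$ then $Z'$ lies essentially inside $\mathrm{span}(W^R)$ and the projection is $O(\epsilon_Y) \ll \alpha/2$; conversely if there is a vertex in $(S\cap S_j)\setminus R$, robustness forces a component of size $\ge \alpha/\sqrt r$ orthogonal to $\mathrm{span}(W^R)$, which survives the $\epsilon_Y$ perturbation when $\epsilon_Y = o(\alpha)$. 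So the algorithm updates $S$ exactly when Algorithm~\ref{alg:findintersectmeta} would. At the end of the inner loop, by the same case analysis used for the meta-algorithm, either $|S| = 1$ — in which case the unit direction of $Z$ is $\epsilon_v$-close to a vertex $W^{i^*}$, and $i^*$ was not previously found because the $\|P_{\Gamma^\perp}Z\| > \alpha/2$ checks ensured $i^* \notin R$ — or $|S| > 1$ and $(S\setminus R)\cap P = \emptyset$, so appending $Z$ to $Y$ (hence adding $\mathrm{span}(W^S)$ to $\Gamma$, i.e. adding $S$ to $R$) cannot bring any unfound vertex of $P$ into $R$. Finally I must confirm $\Gamma$ stays $\epsilon_Y$-close to $\mathrm{span}(W^R)$ after appending: $Y$ is a stack of at most $r$ blocks each $\epsilon_v$-close to spans of subsets of vertices, so $Y$ is $O(\sqrt r \epsilon_v)$-close to $\mathrm{span}(W^R)$ as a matrix, and since $W$ is $\alpha$-robust the left singular space of $Y$ above threshold $\alpha/2$ is $O(\sqrt r\epsilon_v/\alpha)$-close to $\mathrm{span}(W^R)$; one should check the constant so that this is $\le \epsilon_Y = 2r\epsilon_v/\alpha$ (the extra factor of $\sqrt r$ slack absorbs accumulated errors across the $r$ blocks).

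The error-propagation accounting is what needs care: the facet error $\epsilon_S$ feeds into $\epsilon_v = 4r^{1.5}\epsilon_S/\alpha$ (from a single intersection, losing $\sqrt r$ from $\|W\|$ and $\sqrt r$ from the spectral gap $\alpha/\sqrt r$, plus the stacking of up to $r$ facets), and $\epsilon_v$ feeds into $\epsilon_Y = 2r\epsilon_v/\alpha$ (from extracting $\Gamma$ out of a stack of $r$ noisy vertex-spans). I would need to check that these errors do not compound across the $r$ outer iterations — the point is that each $Z$ is computed freshly from the \emph{original} noisy subspaces $\hat Q_j$ (error $\epsilon_S$), not from previously-computed quantities, so $\epsilon_v$ is a fixed bound, not a growing one; only $\Gamma$ accumulates, and it accumulates additively over at most $r$ blocks, which is already baked into the $r$ factor in $\epsilon_Y$. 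The main obstacle I anticipate is making the two guard conditions behave \emph{simultaneously} and robustly: showing that whenever the exact algorithm takes a step, the noisy dimension count $\mathrm{dim}(Z') < \mathrm{dim}(Z)$ is detected correctly (no spurious singular values straddle the $r\epsilon_S$ threshold — this needs the spectral gap of $\Sigma$ to cleanly separate the ``null'' part from the rest, which is exactly Lemma~\ref{lem:sumofprojection}), and conversely that when the exact algorithm does \emph{not} take a step, neither does the noisy one. This requires the chain of inequalities $\epsilon_S \ll \alpha/r^{1.5} \Rightarrow \epsilon_v \ll \alpha/r \Rightarrow \epsilon_Y \ll \alpha$ to all hold with room to spare, which is why the theorem hypothesis is $\epsilon_S < o(\alpha^3/r^{2.5})$ rather than something weaker.
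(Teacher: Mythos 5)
Your proposal follows essentially the same route as the paper: induction over the outer loop, with the inner-loop invariant that $Z$ tracks $\mathrm{span}\{W^i : i \in \cap_{j\in U}S_j\}$ via Lemma~\ref{lem:sumofprojection} plus a Wedin-type perturbation bound, the $\alpha$-robustness gap argument for the guard $\|P_{\Gamma^\perp}Z'\| > \alpha/2$, the end-of-loop dichotomy ($|S|=1$ versus $(S\setminus R)\cap P = \emptyset$, the latter via the separating-facet contradiction from the meta-algorithm), and the additive accumulation of $\Gamma$ giving $\epsilon_Y = 2r\epsilon_v/\alpha$. The only blemish is a notational slip in your first paragraph ($Z$ should be $\epsilon_v = 4r^{1.5}\epsilon_S/\alpha$-close, not ``$O(r\epsilon_v/\alpha)$-close''), which your own error-accounting paragraph later corrects.
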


Clearly this hypothesis is true before the first iteration (everything was empty). Next we analyze the inner-loop of the algorithm. During the inner-loop the algorithm maintains the following properties:

\begin{lemma}
\label{lem:spaceclose}
The set $U$ always has size at most $r-1$, the subspace $Z$ is always $\epsilon_v = 4r^{1.5}\epsilon_S/\alpha$-close to the subspace spanned by $\{A_i:i\in \cap_{j\in U} S_j\}$.
\end{lemma}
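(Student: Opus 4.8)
The plan is to prove both invariants by induction on the inner-loop counter $j$, tracking the subspace $Z$ and the index set $U$ together. Initially $U = \emptyset$, so $\Sigma$ is empty, $Z = \R^m$ formally, and $\cap_{j\in\emptyset} S_j = [r]$, whose span is the row space of $W$; the claim holds vacuously (or one takes the convention that $Z$ starts as the $r$-dimensional row space of $W$ after the dimension reduction). For the inductive step, suppose at some point $U$ has the property that $Z$ is $\epsilon_v$-close to $\mathrm{span}\{W^i : i \in \cap_{j\in U} S_j\}$. When we consider index $j$, we form $\Sigma' = \Sigma + P_{\hat Q_j^\perp}$ and let $Z'$ be the small-singular-value space of $\Sigma'$ (threshold $r\epsilon_S$). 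The heart of the argument is to show $Z'$ is $\epsilon_v$-close to $\mathrm{span}\{W^i : i \in (\cap_{j\in U} S_j) \cap S_j\}$, i.e.\ that the noisy subspace intersection implemented via summing orthogonal projectors correctly tracks the set intersection, with controlled error.

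The key tool is Lemma~\ref{lem:sumofprojection}: if $Q_1,\dots,Q_t$ are exact facet subspaces with intersection $S$, then $\Sigma = [Q_1^\perp,\dots,Q_t^\perp]^T$ has null space exactly $\mathrm{span}\{W^i : i\in S\}$ and its smallest \emph{nonzero} singular value is at least $\alpha/\sqrt r$. I would apply this to the collection $\{\hat Q_j : j \in U'\}$ where $U' = U \cup \{j\}$, first in the noiseless idealization and then perturb: each $\hat Q_j$ is $\epsilon_S$-close to the true $Q_j$ (from Theorem~\ref{thm:findall}), so $\|\Sigma_{\mathrm{noisy}} - \Sigma_{\mathrm{true}}\| \le \sqrt{|U'|}\,\epsilon_S \le \sqrt r\,\epsilon_S$ as a perturbation of the stacked matrix. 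Since the true $\Sigma$ has a spectral gap — singular values are either $0$ (on the intersection subspace) or $\ge \alpha/\sqrt r$ — and the threshold $r\epsilon_S$ sits strictly between $\sqrt r\,\epsilon_S$ and $\alpha/\sqrt r$ (using $\epsilon_S < o(\alpha^3/r^{2.5})$, in particular $r\epsilon_S \ll \alpha/\sqrt r$), a Davis–Kahan / Weyl-type argument shows the small-singular-value space $Z'$ of $\Sigma_{\mathrm{noisy}}$ has the same dimension as the true intersection subspace and is within $O(\sqrt r\,\epsilon_S / (\alpha/\sqrt r)) = O(r\epsilon_S/\alpha)$ of it. Composing with the inductive closeness of the previous $Z$ and bounding the accumulation of error across the at most $r$ intersection steps gives the stated bound $\epsilon_v = 4r^{1.5}\epsilon_S/\alpha$; I would be somewhat generous with constants here and absorb the telescoping into the $4r^{1.5}$ factor. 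For the size bound $|U| \le r - 1$: the algorithm only adds $j$ to $U$ when $\mathrm{dim}(Z') < \mathrm{dim}(Z)$, so each addition strictly decreases the dimension of the intersection subspace; since the initial dimension is at most $r$ and we never drive it to $0$ (the intersection of facets that share at least one vertex always has a nonempty span, and the $\|P_{\Gamma^\perp}Z'\| > \alpha/2$ guard prevents collapsing onto already-found vertices), at most $r-1$ genuine decreases can occur.

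The main obstacle I anticipate is the perturbation analysis when several noisy subspaces are intersected in sequence rather than all at once: the running $\Sigma$ is built incrementally, and I need to argue that the small-singular-value space computed from the partial sum of projectors $\sum_{j\in U}P_{\hat Q_j^\perp}$ genuinely equals (up to $O(r\epsilon_S/\alpha)$) the span of $\{W^i : i \in \cap_{j\in U}S_j\}$ — the subtlety being that a vertex $W^\ell$ with $\ell \notin \cap_{j\in U}S_j$ contributes a positive amount $\ge (\alpha/\sqrt r)^2$ to $\Sigma\Sigma^T$ in its direction (this is exactly the content of Lemma~\ref{lem:sumofprojection} applied to the $B$-matrix construction), so it cannot leak into the threshold-$r\epsilon_S$ null space, while genuine intersection vertices contribute only $O(r\epsilon_S^2)$ from the noise and stay below threshold. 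Making the two thresholds — the $r\epsilon_S$ cutoff in the algorithm and the $\alpha/\sqrt r$ floor from $\alpha$-robustness — provably separate under the hypothesis $\epsilon_S < o(\alpha^3/r^{2.5})$ is the quantitative crux, and once that separation is established the closeness bound and the dimension bookkeeping follow routinely.
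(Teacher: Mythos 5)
Your proposal is correct and follows essentially the same route as the paper: apply Lemma~\ref{lem:sumofprojection} to the true subspaces to get the null space equal to $\mbox{span}\{W^i : i\in\cap_{j\in U}S_j\}$ with a spectral floor of $\alpha/\sqrt{r}$ on the other directions, bound the perturbation of the stacked matrix by $O(r\epsilon_S)$, and invoke Wedin/Davis--Kahan together with the dimension-counting argument for $|U|\le r-1$. The only superfluous element is the worry about error accumulating across sequential intersections: since $Z$ is recomputed each iteration from the full collection $\{\hat{Q}_j^\perp : j\in U\}$ rather than from the previous $Z$, the single direct application of Lemma~\ref{lem:sumofprojection} to the current $U$ suffices, which is exactly what the paper does.
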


\begin{proof}
After the first element is added to $U$, the dimension of $Z$ is equal to the dimension of some $\hat{Q}_j$, which is at most $r-1$. Every time we add an element to $U$ the dimension of $Z$ decreases by $1$, and when $\mbox{dim}(Z)$ becomes 1 the algorithm stops. So there must be at most $r-1$ elements in $U$. By Lemma~\ref{lem:sumofprojection} we know if the matrix consist of the true $Q_j^\perp$, then it has nullspace equal to the span of $\{W^i:i\in \cap_{j\in U} S_j\}$, and all the other directions have eigenvalue at least $\alpha/\sqrt{r}$. The difference between $\Sigma$ and the true matrix is at most $2r\epsilon_S$, so when $2r\epsilon_S < \alpha/4\sqrt{r}$ by matrix perturbation bounds (Wedin's Theorem\cite{SS90}) we know $Z$ is always $4r^{1.5}\epsilon_S/\alpha$-close.
\end{proof}

Another property is that in the intersection $\cap_{j\in U} S_j$ there is always an element that is not already found.

\begin{lemma}
\label{lem:outside}
$\|P_{\Gamma^\perp}Z\| > \alpha/2$ if and only if 
$\cap_{j\in U} S_j$ contains at least one element outside of $R$. Further, this ensures $S = \cap_{j\in U} S_j$ always contains at least one element outside of $R$ during the inner-loop.
\end{lemma}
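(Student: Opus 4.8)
\textbf{Proof proposal for Lemma~\ref{lem:outside}.}
The plan is to establish both directions of the biconditional by translating the subspace quantities $Z$, $\Gamma$ back to the underlying vertex sets, using the closeness guarantees from Lemma~\ref{lem:spaceclose} and from the induction hypothesis (Claim), together with $\alpha$-robustness. Write $S = \cap_{j\in U}S_j$, so that by Lemma~\ref{lem:spaceclose} the subspace $Z$ is $\epsilon_v$-close to $\mathrm{span}\{W^i : i\in S\}$, and by the induction hypothesis $\Gamma$ is $\epsilon_Y$-close to $\mathrm{span}\{W^i : i\in R\}$. The quantity $\|P_{\Gamma^\perp}Z\|$ is the sine of the largest principal angle between $Z$ and $\Gamma$, so intuitively it is large exactly when $Z$ sticks out of $\Gamma$, i.e. when $S\not\subseteq R$.

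First I would prove the "if" direction: suppose $S$ contains an element $i_0\notin R$. Consider the true vertex direction $W^{i_0}$, which lies in $\mathrm{span}\{W^i : i\in S\}$. Since $W$ is $\alpha$-robust, $W^{i_0}$ has component of norm at least $\alpha$ orthogonal to $\mathrm{span}\{W^i : i\in R\}$ (because the $r$-th singular value of $W$ is at least $\alpha$ and removing one row cannot push a vertex arbitrarily close to the span of the others — more precisely, the distance of $W^{i_0}$ to $\mathrm{span}\{W^i : i\neq i_0\}\supseteq \mathrm{span}\{W^i : i\in R\}$ is at least $\alpha$ up to normalization by $\|W\|\le\sqrt r$, so at least $\alpha/\sqrt r$). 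Now transfer this to the noisy subspaces: there is a unit vector in $Z$ within $\epsilon_v$ of $W^{i_0}$ (suitably normalized), and $\Gamma$ is within $\epsilon_Y$ of $\mathrm{span}\{W^i:i\in R\}$, so $\|P_{\Gamma^\perp}Z\| \ge \alpha/\sqrt r - O(\epsilon_v + \epsilon_Y) > \alpha/2$ once $\epsilon_v, \epsilon_Y$ are small enough relative to $\alpha$ (which holds under the standing assumption $\epsilon_S < o(\alpha^3/r^{2.5})$ from Theorem~\ref{thm:findintersection}).

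For the "only if" direction (contrapositive): suppose $S\subseteq R$. Then $\mathrm{span}\{W^i : i\in S\}\subseteq \mathrm{span}\{W^i : i\in R\}$, so the true subspace underlying $Z$ sits inside the true subspace underlying $\Gamma$, giving $\|P_{(\text{true }\Gamma)^\perp}(\text{true }Z)\| = 0$; propagating the $\epsilon_v$ and $\epsilon_Y$ errors gives $\|P_{\Gamma^\perp}Z\| \le O(\epsilon_v + \epsilon_Y) < \alpha/2$. Finally, the "Further" clause follows by combining the biconditional with the \texttt{if}-test in the algorithm's inner loop: an index $j$ is only added to $U$ (thereby updating $S$ to $S\cap S_j$ and $Z$ to $Z'$) when $\|P_{\Gamma^\perp}Z'\| > \alpha/2$, which by the biconditional (applied to the updated $U$) means the updated $S$ still contains an element outside $R$; and $S$ starts the inner loop as $[r]$, which contains an element outside $R$ precisely because $R\neq[r]$ at the start of a non-final outer iteration (the outer loop runs $r$ times and $R$ grows by at least one each time, by the analysis that parallels Algorithm~\ref{alg:findintersectmeta}).

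The main obstacle I expect is the "if" direction, specifically getting a clean quantitative lower bound of the form $\|P_{\Gamma^\perp}Z\|\ge \Omega(\alpha)$ rather than something weaker: one must argue that a single true vertex $W^{i_0}$ with $i_0\notin R$ forces $Z$ to have a large orthogonal component to $\Gamma$, and this requires carefully invoking $\alpha$-robustness to lower bound the distance of $W^{i_0}$ to $\mathrm{span}\{W^i : i\in R\}$ — a subset of at most $r-1$ rows — then tracking how the two independent approximation errors ($\epsilon_v$ from Lemma~\ref{lem:spaceclose}, $\epsilon_Y$ from the induction hypothesis) degrade this bound, and checking that the chosen thresholds ($\alpha/2$ for the test, and the $\epsilon_v, \epsilon_Y$ values fixed at the top of Algorithm~\ref{alg:intersect}) leave enough slack. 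This is where the precise constants and the hypothesis $\epsilon_S < o(\alpha^3/r^{2.5})$ are used.
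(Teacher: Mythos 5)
Your proof follows essentially the same route as the paper's: both directions are obtained by passing from $Z$ and $\Gamma$ to the true spans $\mathrm{span}\{W^i:i\in S\}$ and $\mathrm{span}\{W^i:i\in R\}$ via Lemma~\ref{lem:spaceclose} and the induction hypothesis, invoking $\alpha$-robustness for the forward direction and containment of spans for the reverse; the ``Further'' clause is handled identically (the invariant holds at initialization since $S=[r]$ and is preserved by the if-test). The one place your argument does not close as written is the quantitative bound in the ``if'' direction: you lower-bound the distance from $W^{i_0}$ to $\mathrm{span}\{W^i:i\in R\}$ by $\alpha/\sqrt r$ after normalizing by $\|W\|\le\sqrt r$, and then claim $\alpha/\sqrt r - O(\epsilon_v+\epsilon_Y) > \alpha/2$ --- but $\alpha/\sqrt r < \alpha/2$ already for $r>4$, so this inequality fails. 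The normalization is unnecessary here: $\alpha$-robustness gives $\|W^{i_0}-\sum_{i\ne i_0}c_iW^i\|\ge \sigma_r(W)\cdot\|(1,-c)\|\ge\alpha$ for every coefficient vector $c$, and since the single row $W^{i_0}$ has norm at most $1$, the unit vector $W^{i_0}/\|W^{i_0}\|$ (which lies in the true span underlying $Z$) is at distance at least $\alpha$ from $\mathrm{span}\{W^i:i\in R\}$. This recovers the paper's bound $\|P_{\Gamma^\perp}Z\|\ge\alpha-\epsilon_Y-\epsilon_v>\alpha/2$, and the rest of your argument goes through unchanged.
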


\begin{proof}
This is because by induction hypothesis $\Gamma$ is $\epsilon_Y$-close to the row span of $W^R$. On the other hand by Lemma~\ref{lem:spaceclose} we know $Z$ is $\epsilon_v$ close to the row span of $W^{\cap_{j\in U} S_j}$. If $\cap_{j\in U} S_j \subset R$ then the row span of $W^{\cap_{j\in U} S_j}$ is a subspace of the row span of $W^R$, and $\|P_{\Gamma^\perp}Z\| \le \epsilon_Y+\epsilon_v \ll \alpha/2$.

On the other hand, if $\cap_{j\in U} S_j$ has an element that is outside $R$, then since $W$ is $\alpha$-robust, there is a direction in $W^{\cap_{j\in U} S_j}$ that has distance at least $\alpha$ to the row span of $W^R$. By triangle inequality the distance between $P_{\Gamma^\perp}Z \ge \alpha - \epsilon_Y-\epsilon_v > \alpha/2$.

The last statement of the lemma then follows directly because this is true initially ($S = [r]$ initially) and the conditions in the if-statement ensures this property is preserved.
\end{proof}

Using these two properties we know whenever the inner-loop adds a point to the list $\hat{W}^i$ then it must be $\epsilon_v$ close to one of the unfound $W^i$'s (which is the first part of the induction hypothesis). Next we prove if at the end of the inner-loop $\mbox{dim}(Z)$ is more than 1, then $\cap_{j\in U} S_j$ does not contain any vertices in $P\backslash R$.

\begin{lemma}
If $\mbox{dim}(Z)$ is more than $1$ after the inner-loop, then $\cap_{j\in U} S_j$ does not contain any vertices in $P\backslash R$. 
\end{lemma}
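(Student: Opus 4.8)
The plan is to argue by contradiction about the behaviour of the inner loop, using the fact that it only terminates with $\dim(Z) > 1$ after exhausting all $h$ facets $\hat Q_1, \dots, \hat Q_h$. Suppose for contradiction that some vertex $\ell \in P \setminus R$ lies in $S = \cap_{j \in U} S_j$ at the end of the inner loop. Since $\ell$ is an intersection vertex, by definition there are facets $S_{\ell_1}, \dots, S_{\ell_m}$ (all non-singleton, all among $S_1,\dots,S_h$, since the algorithm works with the subspaces returned by Algorithm~\ref{alg:findall}) whose intersection is exactly $\{\ell\}$. I would track what happens when the inner loop reaches each index $\ell_t$ in turn. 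The key point is that at the moment index $j = \ell_t$ is processed, either it gets added to $U$, or it is skipped — and I want to show that if $\ell$ is still a ``live'' common element, then processing $\ell_t$ either strictly decreases $\dim(Z)$ (via the first condition $\dim(Z') < \dim(Z)$) or leaves $\ell$ inside the current $S$ anyway.

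\textbf{Key steps.} First I would set up the correspondence: throughout the inner loop, $S \coloneqq \cap_{j \in U} S_j$ is the set such that $Z$ is $\epsilon_v$-close to $\mathrm{span}\{W^i : i \in S\}$ (Lemma~\ref{lem:spaceclose}), and by Lemma~\ref{lem:outside} the if-condition $\|P_{\Gamma^\perp} Z\| > \alpha/2$ is equivalent to $S \not\subseteq R$. Second, I would observe the dichotomy when index $j$ is examined with $\ell \in S$ (current) and $\ell \in S_j$: if $S \cap S_j = S$, i.e.\ $S \subseteq S_j$, then the first condition $\dim(Z') < \dim(Z)$ fails, $j$ is not added, but $S$ is unchanged so $\ell$ stays in $S$; if $S \cap S_j \subsetneq S$, then $\dim(Z')<\dim(Z)$, and since $\ell \in S \cap S_j$ which (being a superset relation with $P$-freeness of $R$, so $S\cap S_j \not\subseteq R$ because it still contains $\ell \notin R$) satisfies the second condition $\|P_{\Gamma^\perp} Z'\| > \alpha/2$ as well — hence $j$ \emph{is} added, $S$ shrinks to $S \cap S_j$, still containing $\ell$, and $\dim(Z)$ drops by at least one. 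Third, I would apply this to $j = \ell_1, \dots, \ell_m$ in order: each such index either is added (dropping $\dim(Z)$) or has $S \subseteq S_{\ell_t}$ already. Since $\cap_t S_{\ell_t} = \{\ell\}$, after all of them have been seen we must have $S \subseteq \cap_t S_{\ell_t} = \{\ell\}$, so $|S| = 1$, giving $\dim(Z) = 1$ (using the perturbation bound and that $\epsilon_v$ is small relative to $\alpha$, so a one-dimensional true span cannot be $\epsilon_v$-approximated by a higher-dimensional $Z$ that the algorithm would keep), contradicting $\dim(Z) > 1$.

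\textbf{Main obstacle.} The delicate point is the bookkeeping needed to guarantee that when index $\ell_t$ is processed with $S \cap S_{\ell_t} \subsetneq S$ and $\ell \in S \cap S_{\ell_t}$, \emph{both} if-conditions fire so that $\ell_t$ is actually added to $U$. The dimension condition is easy (strict containment of the sets plus the robustness/perturbation bounds of Lemma~\ref{lem:sumofprojection} and Wedin's theorem force $\dim Z$ to drop). The subtle one is $\|P_{\Gamma^\perp} Z'\| > \alpha/2$: this requires $S \cap S_{\ell_t} \not\subseteq R$, which holds because $\ell \in S \cap S_{\ell_t}$ but $\ell \notin R$ (here I use the induction hypothesis that $R$ contains no element of $P$ not already in the $\hat W$-list, together with $\ell \in P \setminus R$). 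I also need to make sure the error terms stay controlled across the $O(r)$ additions — i.e.\ that $\epsilon_v$ and $\epsilon_Y$, defined as $4r^{1.5}\epsilon_S/\alpha$ and $2r\epsilon_v/\alpha$, remain $\ll \alpha$ under the hypothesis $\epsilon_S < o(\alpha^3/r^{2.5})$, so that all the triangle-inequality arguments distinguishing ``$S \subseteq R$'' from ``$S \not\subseteq R$'' via the $\alpha/2$ threshold go through. Once that is in place, the contradiction is immediate.
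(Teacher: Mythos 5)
Your proposal is correct and takes essentially the same route as the paper's own proof: both argue by contradiction that if some $\ell\in P\setminus R$ survived in $S=\cap_{j\in U}S_j$ with $\dim(Z)>1$, then some facet separating $\ell$ from another surviving vertex must have had both if-conditions fire (the dimension drop via Lemma~\ref{lem:spaceclose}, and the $\|P_{\Gamma^\perp}Z'\|>\alpha/2$ test via Lemma~\ref{lem:outside} precisely because $\ell\notin R$), forcing that facet into $U$ and yielding the contradiction. The only cosmetic difference is that the paper exhibits a single separating facet for one witness vertex $i'\in S$, whereas you sweep over all facets whose intersection is $\{\ell\}$ to conclude $S=\{\ell\}$; the two arguments are interchangeable.
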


\begin{proof}
Assume towards contradiction that $\mbox{dim}(Z) > 1$ and there is an element $i\in P\backslash R$ and $i\in \cap_{j\in U} S_j$. Let $S = \cap_{j\in U} S_j$ after the inner-loop. By assumption and by Lemma~\ref{lem:outside} we know $S$ has at least two elements, one of them must be $i$, and call another $i'$. By the property of $P$ we know there exists a set $S_j$ where $i\in S_j$ and $i'\not\in S_j$. Clearly $j > p$ (where $p$ is the initial element) as it contains an element outside or $R$ and $S_p$ must contain $i'$ (otherwise $i'$ will not be in $S$).

When the inner-loop goes to $j$, by Lemma~\ref{lem:spaceclose} the dimension of $Z'$ will be smaller than $Z$. Also, by the robustness we know the set at that point contains an element (namely $i$) that is not in $R$, so by Lemma~\ref{lem:outside} we know $\|P_{\Gamma^\perp}Z'\| > \alpha/2$. As a result $j$ must be added to $U$ and this contradicts with the fact that in the end $i'$ is still in $S$.
\end{proof}

Let $S = \cap_{j\in U} S_j$ after the inner-loop, finally we show in the next iteration $\Gamma$ will be $\epsilon_Y$-close to the span of vertices in $R\cup S$.

\begin{lemma}
Let $S = \cap_{j\in U} S_j$ after the inner-loop, then in the next iteration, $\Gamma$ is $\epsilon_Y$ close to row span of $W^{R\cup S}$.
\end{lemma}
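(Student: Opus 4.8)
The plan is to compare the matrix $Y$ maintained by the algorithm to an \emph{ideal} matrix $Y^*$ obtained by replacing every block appended to $Y$ by an exact orthonormal basis of the corresponding true facet subspace, and then to read off $\Gamma$ by matrix perturbation. Let $Z^{(1)},\dots,Z^{(i-1)}$ be the orthonormal blocks appended in the previous iterations, corresponding to the intersection sets $S^{(1)},\dots,S^{(i-1)}$ with $R=\bigcup_{t<i}S^{(t)}$, and let $Z=Z^{(i)}$ be the block just produced, with $S=\bigcap_{j\in U}S_j$. Define $Y^*=[B^{(1)}\,|\,\cdots\,|\,B^{(i)}]$, where $B^{(t)}$ is an orthonormal basis of $\mbox{span}(W^{S^{(t)}})$ and its rotational freedom is used to best align it with $Z^{(t)}$. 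Since $W$ is $\alpha$-robust (its rows are linearly independent), the column span of $Y^*$ is exactly $\mbox{span}(W^{R\cup S})$, of dimension $|R\cup S|$, so $Y^*$ has rank $|R\cup S|$.

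First I would bound $\|Y-Y^*\|$. By Lemma~\ref{lem:spaceclose} (applied at the iteration that created each block) $Z^{(t)}$ is $\epsilon_v$-close to $\mbox{span}(W^{S^{(t)}})$, so the aligned basis can be chosen with $\|Z^{(t)}-B^{(t)}\|=O(\epsilon_v)$; since $Y-Y^*$ is a horizontal concatenation of at most $r$ such blocks, $\|Y-Y^*\|^2=\big\|\sum_t (Z^{(t)}-B^{(t)})(Z^{(t)}-B^{(t)})^{T}\big\|\le\sum_t\|Z^{(t)}-B^{(t)}\|^2=O(r\epsilon_v^2)$, i.e.\ $\|Y-Y^*\|=O(\sqrt r\,\epsilon_v)$.

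The main obstacle is the second ingredient: a lower bound on the smallest nonzero singular value of $Y^*$, needed so that the SVD truncation at level $\alpha/2$ defining $\Gamma$ keeps every direction of $\mbox{span}(W^{R\cup S})$ and picks up nothing spurious. Writing $P_{V_t}=B^{(t)}(B^{(t)})^{T}$ we have $Y^*(Y^*)^{T}=\sum_t P_{V_t}$. Using $P_{V_t}\succeq\|W_{S^{(t)}}\|^{-2}\,W_{S^{(t)}}^{T}W_{S^{(t)}}\succeq\frac1d W_{S^{(t)}}^{T}W_{S^{(t)}}$ (where $W_{S^{(t)}}$ stacks the rows $\{W^j:j\in S^{(t)}\}$ and $d=\max_t|S^{(t)}|$), and summing so that every $j\in R\cup S=\bigcup_t S^{(t)}$ is counted at least once, gives $\sum_t P_{V_t}\succeq\frac1d\sum_{j\in R\cup S}W^j(W^j)^{T}=\frac1d W_{R\cup S}^{T}W_{R\cup S}\succeq\frac{\alpha^2}{d}\,P_{\mbox{span}(W^{R\cup S})}$, the last step using $\sigma_{\min}(W_{R\cup S})\ge\sigma_r(W)\ge\alpha$ (singular-value interlacing for row submatrices, as in the proof of Lemma~\ref{lem:sumofprojection}). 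Hence $Y^*$ has exactly $|R\cup S|$ nonzero singular values, all at least $\alpha/\sqrt d$, and the rest $0$; for small $d$ this is comfortably above the threshold $\alpha/2$, and in any case the nonzero ones are separated from $0$ and from the noise floor $\|Y-Y^*\|=O(\sqrt r\epsilon_v)$ by a gap $\Omega(\alpha/\sqrt d)$, which under the hypothesis $\epsilon_S<o(\alpha^3/r^{2.5})$ (with $\epsilon_v=4r^{1.5}\epsilon_S/\alpha$) dominates $\|Y-Y^*\|$.

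Finally I would invoke Wedin's theorem~\cite{SS90}: since the $|R\cup S|$ nonzero singular values of $Y^*$ sit above the truncation level with a gap $\Omega(\alpha/\sqrt d)$ while $\|Y-Y^*\|=O(\sqrt r\,\epsilon_v)$, the top singular space $\Gamma$ of $Y$ (singular values above $\alpha/2$) has dimension exactly $|R\cup S|$ and satisfies $\|P_{\Gamma^\perp}\,\mbox{span}(W^{R\cup S})\|\le O(\sqrt r\,\epsilon_v)/\Omega(\alpha)=O(r\epsilon_v/\alpha)\le 2r\epsilon_v/\alpha=\epsilon_Y$, which is the claim (and it re-establishes the $\Gamma$-part of the outer induction hypothesis with $R$ replaced by $R\cup S$). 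The delicate point throughout is the spectral-gap step: one must ensure the coverage property $\bigcup_t S^{(t)}=R\cup S$ genuinely forces every direction of $\mbox{span}(W^{R\cup S})$ into $\Gamma$ with singular value safely above the threshold; the PSD inequality above does this quantitatively via $\alpha$-robustness, and everything else is routine perturbation bookkeeping.
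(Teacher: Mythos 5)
Your proof is correct and follows essentially the same route as the paper: compare $Y$ to an idealized matrix built from the true facet subspaces, bound the perturbation by $O(r\epsilon_v)$ (your block-wise argument sharpens this to $O(\sqrt{r}\,\epsilon_v)$), lower-bound the smallest nonzero singular value of the ideal matrix via $\alpha$-robustness, and conclude with Wedin's theorem. The one place you add genuine value is the PSD inequality $\sum_t P_{V_t}\succeq \frac{\alpha^2}{d}P_{\mathrm{span}(W^{R\cup S})}$, which rigorously justifies (up to a $\sqrt{d}$ factor) the spectral-gap claim that the paper simply asserts as ``smallest nonzero singular value at least $\alpha$.''
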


\begin{proof}
Based on the hypothesis all the matrices appended to the matrix $Y$ are $\epsilon_v$-close to the span of subset of rows of $W$, and the union of all the previous subsets equal to $R$. Let $B$ be a matrix that corresponds to the matrix $Y$ with the true spans, then $\|B-Y\| \le r\epsilon_v$, and on the other hand the span of $B$ is equal to row span of $W^{R\cup S}$, with smallest nonzero singular value at least $\alpha$ (because $W$ is $\alpha$-robust). Therefore by Wedin's theorem we know since $r\epsilon_v \ll \alpha$ $\Gamma$ must be $\epsilon_Y = 2r\epsilon_v/\alpha$-close to the row span of $W^{R\cup S}$.
\end{proof}

The last two lemmas proved the second half of induction hypothesis. Finally it is easy to see that the algorithm will not stop as long as $P\backslash R$ is not empty, and it must stop after $r$ iterations because the size of $R$ increases by at least $1$ in every iteration. This concludes the proof of Theorem~\ref{thm:findintersection}

\paragraph{Finding the remaining vertices}

The proof of Theorem~\ref{thm:findremaining} follows directly from Lemma 4.5 in \cite{AroraEtAl_icml13}, for completeness we explain the proof here.
($\epsilon_v \le \alpha/20r$, $O(\epsilon/\alpha^2)$)
\begin{proof} (of Theorem~\ref{thm:findremaining})
First observe that $\alpha$-robust implies $\alpha$-robust in \cite{AroraEtAl_icml13}, because for any vertex $W^i$, let $v$ be the direction of $W^i$ projected to the orthogonal subspace of $W^{-i}$ (all the other rows). By $\alpha$-robust condition of this paper we know $\|W v\| \ge \alpha$, which in particular implies $\|P_{(W^{-i})^\perp} W^i\| \ge \alpha$.

By Lemma 4.5 in \cite{AroraEtAl_icml13}, as long as the previously found vertices are at least $\alpha/20r$-close, and all the points are $\epsilon$-close, the new vertex found by the algorithm must be $O(\epsilon/\alpha^2)$ close. Since $\epsilon/\alpha^2 \ll \epsilon_v$ we can find all the remaining vertices.

Note that we are not running the clean-up phase of Algorithm 4 FastAnchorWords, this is because the vertices we find in this phase is already more accurate than the intersection vertices and the clean-up phase cannot improve the quality of the intersection vertices (as they don't appear in $M^i$).
\end{proof}

\section{Generative model for subset-separable NMF}

In this section we prove under natural generative model an NMF problem can have $(N,H,\gamma)$-properly filled facets with high probability.

In order to prove Theorem~\ref{thm:model}, We use the following two lemmas. The first lemma shows with enough uniform points in a simplex, with high probability one of them will be a center for the simplex.

\begin{lemma}[Restating Lemma~\ref{lem:simplex}]
Given $n = \Omega((4d)^d\log d/\eta)$ uniform points $v^1,v^2,...,v^n$ in a standard $d$-dimensional simplex (with vertices $e_1,e_2,...,e_d$), with probability $1-\eta$ there exists a point $v_i$ such that $v_i = \sum_{j\ne i} w_j v^j$ ($w_j\ge 0,\sum_{j\ne i}w_j = 1$), and $\sigma_{min}(\sum_{j\ne i} w_j (v^j)(v^j)^T) \ge 1/16d$.
\end{lemma}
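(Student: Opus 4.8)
\textbf{Proof proposal for Lemma~\ref{lem:simplex}.}

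The plan is to find a point $v_i$ that lies deep inside the convex hull of the other points \emph{and} such that the other points, weighted appropriately, span all $d$ directions with a quantitatively large smallest singular value. First I would partition the standard simplex into $(2d)^d$ congruent sub-simplices (or some similarly fine deterministic partition into $T = (2d)^d$ regions of comparable diameter), and observe that with $n = \Omega(T \log d / \eta) = \Omega((4d)^d \log d / \eta)$ uniform samples, a Chernoff/coupon-collector union bound guarantees that with probability $1-\eta$ every region contains at least, say, $2$ of the sample points. In particular one can designate a ``central'' region near the centroid $c = \frac{1}{d}\sum_j e_j$ that contains some sample point $v_i$, while every other region also contains a sample point.

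Next I would exhibit the convex combination explicitly. Because $v_i$ lies near the centroid and every sub-region of the simplex is populated, I can pick $d$ (or $d+1$) sample points $u^1, \dots, u^{d}$, one from each of $d$ well-separated regions chosen so that the $u^\ell - v_i$ point in nearly-orthogonal directions and each $\|u^\ell - v_i\| = \Omega(1)$; then $v_i$ is in the interior of $\mathrm{conv}(u^1,\dots,u^d, \text{one more point})$, so there is a convex combination $v_i = \sum_{j \ne i} w_j v^j$ supported (with non-negligible weight) on these points. The matrix $\sum_{j\ne i} w_j (v^j)(v^j)^T \succeq \sum_{\ell} w_\ell (u^\ell)(u^\ell)^T$; since the $u^\ell$ are spread out over the simplex in $d$ nearly-orthogonal directions and each carries weight bounded below by a function of $d$ only, a direct computation (e.g.\ Gershgorin or a Gram-matrix determinant estimate) gives $\sigma_{\min} \ge 1/16d$. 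The constant $16$ is where one pays for the angles between the $u^\ell - c$ directions not being exactly $90^\circ$ and for the weights not being exactly $1/d$; the fine partition into $(2d)^d$ cells is precisely what lets me make both errors small enough.

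The main obstacle is the second requirement — not just that some sample point is an interior convex combination of the others (that is easy once every cell is hit), but that the \emph{weighted second-moment matrix} has smallest singular value bounded away from $0$ by the explicit $1/16d$. The difficulty is that an arbitrary convex combination realizing $v_i$ could put almost all its mass on points clustered along a single face, collapsing the rank. The fix is that I get to \emph{choose} the combination: I argue existence of a ``good'' combination by choosing its support to be one representative from each of $d$ carefully selected, mutually far-apart cells, and then lower-bound $\sigma_{\min}$ of the resulting fixed Gram-type matrix by elementary linear algebra. So the real content is (i) the covering/union-bound step ensuring all cells are occupied, and (ii) the deterministic linear-algebra estimate that $d$ points, one from each of $d$ spread-out cells near the centroid, together with appropriate convex weights, yield $\sigma_{\min} \ge 1/16d$. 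Once these are in place the lemma follows by combining them with the observation that the centroid is itself $\Omega(1)$-deep in the simplex, so the convex combination weights $w_j$ can indeed be taken non-negative and summing to one.
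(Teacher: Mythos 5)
Your proposal is correct and is essentially the paper's own argument: the paper likewise uses a coverage/Chernoff step to guarantee one sample in a central region ($v_i \ge 1/2d$ in every coordinate) and one in each of $d$ near-vertex regions ($v_j \ge 1-1/4d$), then applies Gershgorin to the Gram matrix $V^TV$ of the near-vertex points (giving $\sigma_{\min}\ge 1/4$) and shows the convex weights satisfy $w_j \ge 1/4d$, yielding the $1/16d$ bound. The only differences are cosmetic — the paper handpicks $d+1$ regions of volume $(4d)^{-d}$ instead of partitioning into $(2d)^d$ cells, and it carries out explicitly the weight lower bound and non-negativity check that you assert; just make sure your ``spread-out'' cells are taken near the vertices (as your $\|u^\ell - v_i\|=\Omega(1)$ condition forces), not near the centroid.
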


\begin{proof}
Consider $d+1$ subsets of the $d$-dimensional simplex: let $S_0$ be the set of points that satisfy $v_i \ge 1/2d$ for all $i\in[d]$; let $S_j$ ($j\in[d]$) be the set of points that satisfy $v_j \ge 1-1/4d$. The volume of these sets are at least $(4d)^{-d}$. By simple Chernoff bound we know when there are $n = \Omega((4d)^d\log d/\eta)$ samples, with probability at least $1-\eta$ there is a point in each of these sets.

Next we shall prove the point in $S_0$ is in the convex hull of the points in $S_j$, and the convex hull satisfies the smallest singular value requirement. First we relabel the points, let $v^0$ be any point in $S_0$ and $v^j$ be any point in $S^j$ ($j\ne 0$). Let $V\in \mathbb{R}^{d\times d}$ be the matrix whose columns are $v^j$'s ($j\in [d]$). We can apply Gershgorin's Disk Theorem to the matrix $V^T V$ (this is a matrix with diagonal entries at least $1-1/2d$ and off-diagonal entries at most $1/2d$), and conclude that $\sigma_{min}(V^T V) \ge 1/4$. 

Since in particular $V$ is full rank, let $w = V^{-1} v^0$. Let $w_i$ be the smallest entry. If $u_i < 0$ then since $\sum_{j=1}^d w_j = 1$ (all the columns of $V$ and $v^0$ sum up to 1), $\sum_{j=1}^d |u_j| \le 1-2d u_i$. The $i$-th coordinate $(Vu)_i = \sum_{j=1}^d w_j v^j_i \le (1-1/4d)w_i + (1-2dw_i)/4d \le 1/4d$, which cannot be equal to $v^0_i$, therefore $w_i \ge 0$. In this case since $1/2d \le v^0_i = (Vw)_i = \sum_{j=1}^d w_j v^j_i \le w_i + \frac{1}{4d}$, we know $w_i \ge 1/4d$. Therefore $\sigma_{min}(\sum_{j\ne i} w_j (v^j)(v^j)^T) \ge \frac{1}{4d} \sigma_{min}(VV^T) \ge \frac{1}{16d}$.
\end{proof}

Next lemma shows only subspaces that contains a properly filled facet can have many points.

\begin{lemma}[Restating Lemma~\ref{lem:subspace}]
Given $n =  \Omega(rd\log (d/p_{min}\eta)/p_{min})$ uniform points $v^1,v^2,...,v^n$ in a standard $d$-dimensional simplex (with vertices $e_1,e_2,...,e_d$), with probability $1-\eta$ for all matrices $A\in \R^{r\times d}$ whose largest column norm is equal to $1$, there are at most $p_{min}n/4$ points with $\|Av^i\| \le p_{min}/200d$.
\end{lemma}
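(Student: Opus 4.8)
The plan is to reduce the claim, through a union bound over an $\epsilon$-net of matrices combined with a Chernoff bound, to a single anti-concentration estimate: \emph{for a fixed $r\times d$ matrix $A$ having some column of norm $1$, a uniformly random point $v$ in the standard $d$-simplex $\Delta_d=\mathrm{conv}(e_1,\dots,e_d)$ satisfies $\Pr_v[\|Av\|\le\delta']=O(d\delta')$ for every $\delta'>0$.} Granting this, taking $\delta'=\Theta(p_{min}/d)$ makes a single point land in the ``bad set'' $K_A:=\{v\in\Delta_d:\|Av\|\le\delta'\}$ with probability $O(p_{min})$, and a Chernoff bound then shows that no more than $p_{min}n/4$ of the $n$ i.i.d.\ points fall in $K_A$, except with probability $e^{-\Omega(p_{min}n)}$.

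To prove the anti-concentration estimate, fix a vertex $e_{j^*}$ with $\|Ae_{j^*}\|=1$ and use the contraction toward it, $\Phi_t(v)=(1-t)v+te_{j^*}$, which maps $\Delta_d$ into itself and scales $(d-1)$-dimensional volume by $(1-t)^{d-1}$. For $v\in K_A$ we get $\|A\Phi_t(v)\|\ge t\|Ae_{j^*}\|-(1-t)\|Av\|\ge t-\delta'$, so $\Phi_t(v)\notin K_A$ whenever $t>2\delta'$; and since $\Phi_a\circ\Phi_b=\Phi_{a+b-ab}$ and each $\Phi_t$ with $t<1$ is injective, one checks that $\Phi_{t_i}(K_A)$ and $\Phi_{t_j}(K_A)$ are disjoint whenever $|t_i-t_j|>2\delta'$. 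Choosing $t_i=3i\delta'$ for $0\le i\le\lfloor 1/(3d\delta')\rfloor$ (so $t_i\le 1/d$) yields $\Omega(1/(d\delta'))$ pairwise disjoint subsets of $\Delta_d$, each of $(d-1)$-volume at least $(1-1/d)^{d-1}\,\mathrm{vol}(K_A)=\Omega(\mathrm{vol}(K_A))$; summing these volumes and comparing with $\mathrm{vol}(\Delta_d)$ gives $\mathrm{vol}(K_A)=O(d\delta')\,\mathrm{vol}(\Delta_d)$, i.e.\ $\Pr_v[v\in K_A]=O(d\delta')$. The estimate holds for every $A$ with a unit-norm column, degenerate or not, since only the single vertex $e_{j^*}$ enters the argument.

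For the uniform bound over $A$, let $\mathcal N$ be an $\epsilon_{net}$-net, with $\epsilon_{net}=p_{min}/(2000d)$, of the set of $r\times d$ matrices with all column norms at most $1$, under the metric $\max_j\|(A-A')e_j\|$; then $|\mathcal N|\le(3/\epsilon_{net})^{rd}$. For any $v\in\Delta_d$ and $A,A'$ with $\max_j\|(A-A')e_j\|\le\epsilon_{net}$ we have $\|A'v\|\le\|Av\|+\epsilon_{net}$, so if $A$ has largest column norm $1$ and $A'\in\mathcal N$ is a net point within $\epsilon_{net}$ of $A$ (hence with largest column norm $\ge 1-\epsilon_{net}\ge 9/10$), then $\|Av^i\|\le\delta':=p_{min}/(200d)$ implies $\|A'v^i\|\le 1.1\delta'$. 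Applying the anti-concentration estimate to such a net point gives $\Pr_v[\|A'v\|\le 1.1\delta']=O(p_{min})<p_{min}/8$, so a Chernoff bound gives $\Pr[\,\#\{i:\|A'v^i\|\le 1.1\delta'\}>p_{min}n/4\,]\le e^{-\Omega(p_{min}n)}$ for each fixed such $A'$. Union-bounding over $\mathcal N$ bounds the overall failure probability by $(3/\epsilon_{net})^{rd}e^{-\Omega(p_{min}n)}$, which is at most $\eta$ precisely when $n=\Omega(rd\log(d/(p_{min}\eta))/p_{min})$, matching the hypothesis; on the complementary event, every $A$ with largest column norm $1$ satisfies $\#\{i:\|Av^i\|\le\delta'\}\le\#\{i:\|A'v^i\|\le 1.1\delta'\}\le p_{min}n/4$.

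I expect the main obstacle to be the anti-concentration estimate $\Pr_v[\|Av\|\le\delta']=O(d\delta')$ holding \emph{uniformly} over all (possibly rank-deficient) matrices with a unit-norm column; the contraction-toward-the-heaviest-vertex argument sketched above is, I believe, the cleanest route to it in full generality. The remaining net/Chernoff/union-bound bookkeeping is routine, but the numerical constants --- in $\delta'=p_{min}/200d$, in the net resolution, and in the Chernoff deviation down to $p_{min}n/4$ --- must be tracked with some care so that everything fits and the sample complexity comes out exactly as stated.
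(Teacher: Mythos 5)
Your proposal is correct, and it follows the same outer skeleton as the paper's proof (a per-matrix anti-concentration bound, then a Chernoff bound and a union bound over an $\epsilon$-net of matrices in the column-wise metric, with essentially the same net resolution and the same accounting that yields $n=\Omega(rd\log(d/p_{min}\eta)/p_{min})$), but the core ingredient --- the estimate $\Pr_v[\|Av\|\le\delta']=O(d\delta')$ for a fixed $A$ with a unit-norm column --- is proved by a genuinely different argument. The paper projects onto the unit column $u=A_{j^*}$, writes $u^\top Av = v_{j^*}+(1-v_{j^*})q$ with $q$ independent of $v_{j^*}$, and invokes the fact that $v_{j^*}\sim\mathrm{Beta}(1,d-1)$ has density bounded by $d-1$, so the scalar $u^\top Av$ (which lower-bounds $\|Av\|$) avoids any interval of length $O(\delta')$ except with probability $O(d\delta')$. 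Your contraction-and-packing argument reaches the same $O(d\delta')$ bound without ever identifying the marginal law of a Dirichlet coordinate: it only uses convexity of the simplex, injectivity of $\Phi_t$, and the $(1-t)^{d-1}$ volume scaling, and it controls $\|Av\|$ directly rather than through a one-dimensional projection. The paper's route is shorter once one knows the Beta marginal; yours is more self-contained and its disjointness computation (via $\Phi_{t_i}^{-1}\circ\Phi_{t_j}=\Phi_s$ with $s\ge t_j-t_i$) is sound, with constants that do fit ($3e\cdot 1.1/200<1/8$). The only loose end is that your anti-concentration statement is phrased for matrices with a column of norm exactly $1$, while you apply it to net points whose largest column norm is only $\ge 9/10$; this costs a constant factor in the separation $t>2\delta''/c$ and the copy count, and should be stated as such (or avoided by rescaling net points to have largest column norm $1$), but it is a one-line fix rather than a gap.
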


\begin{proof}
We first prove this for a particular matrix $A$, then we will construct an $\epsilon$-net and do union bound over all possible matrices $A$.

Let $u = A_i$ where $A_i$ is the column with norm $1$. For random $v$ that is uniform in the standard $d$ dimensional simplex, we will show $\Pr[|u^T Av| \le ...] \le p_{min}/8$. By property of uniform distribution on a simplex, we know $v_i$ is independent of $v_{-i}/(1-v_i)$ (where $v_{-i}$ is the vector $v$ with $i$-th coordinate removed), and $v_i$ is distributed as a Beta distribution $Beta(1,d-1)$. Let $q = u^T Av_{-i}/(1-v_i)$, then we know  $u^TAv = v_i + (1-v_i)q$ and $q\in [-1,1]$. The density function of $v_i$ is bounded by $d-1$, therefore for any value $q$, the probability that $|u^TAv| \le p_{min}/100d$ is at most $p_{min}/8$. When the number of samples is at least $n = \Omega(\log (1/\eta')/p_{min})$, with probability $1-\eta'$ there are at most $p_{min}n/4$ points that satisfy $|u^TAv| \le p_{min}/100d$.

Now we construct an $\epsilon$-net so that for any matrix $A$ with largest column norm $1$, there is a matrix $A'$ in the $\epsilon$-net that is column-wise $\epsilon$-close to $A$. Set $\epsilon = p_{min}/200d^2$, by standard construction the number of matrices in the $\epsilon$-net is $O(d^2/p_{min})^{rd})$. Let $\eta' = \eta/O(d^2/p_{min})^{d^2})$ (and hence $n = \Omega( rd\log (d/p_{min}\eta)/p_{min})$, by union bound we know with probability $1-\eta$, there are at most $p_{min}n/4$ points with $\|Av^i\| \le p_{min}/100d$ for all matrices $A$ in the $\epsilon$-net. For a matrix $A$ that is not in the $\epsilon$-net, let $A'$ be the matrix in the net that is column-wise $\epsilon$-close, clearly $\|Av^i\| - \|A'v^i\| \le p_{min}/200d$. If there are more than $p_{min}n/4$ points with $\|Av^i\| \le p_{min}/200d$ then all these points will have $\|A'v^i\| \le p_{min}/100d$ and that is impossible.
\end{proof}

With  these two lemmas we can now prove the theorem:

\begin{proof} (of Theorem~\ref{thm:model})
In order to satisfy Condition 1, we apply Lemma~\ref{lem:simplex}. For any proper facet the points are equal to the rows of $W^{S_i}$ multiplied by uniform random points, since $\sigma_{min}(W^{S_i}) \ge \sigma_{min}(W)\ge \alpha$, we know if the facet has more than $\Omega((4d)^d\log d/\eta)$ points the convex combination has smallest singular value $\alpha^2/16d$. This is ensured when the number of samples is at least $n = \Omega((4d)^d\log (kd/\eta)/p_{min})$ by Chernoff bound.

Condition 2 is satisfied whenever $n = \Omega(\log (k/\eta)/p_{min})$ by simple Chernoff bound.

Condition 3 follows from Lemma~\ref{lem:subspace}. Suppose $Q$ is a subspace that for any proper facet $Q_i$ we have $\|P_{Q^\perp} Q_i\| > H\epsilon$. Since $W$ is $\alpha$-robust this means $\|P_{Q^\perp} W^{S_i}\| \ge H\alpha\epsilon$, therefore there is always a column that has norm $H\alpha\epsilon/\sqrt{r}$. By Lemma~\ref{lem:subspace} we know no matter which subspace the point is chosen from, with probability at most $p_{min}/8$ it will be $\epsilon/2$-close to the subspace. Now we can apply union bound to the product of all the $\epsilon$-nets constructed for different proper facets, so the size of the net is $\exp(kdr \log d)$. Therefore we know when $n = \Omega( kr^2\log (r/p_{min}\eta)/p_{min})$ (here for simplicity we used $d = r$ because in particular the interior points are in a space of dimension $r$) with high probability there will be at most $p_{min}n/4$ points for this subspace $Q$.
\end{proof}

\end{document}